\providecommand{\U}[1]{\protect\rule{.1in}{.1in}}
\providecommand{\U}[1]{\protect\rule{.1in}{.1in}}
\newtheorem{theorem}{Theorem}
\newtheorem{definition}[theorem]{Definition}
\newtheorem{example}[theorem]{Example}
\newtheorem{lemma}[theorem]{Lemma}
\newtheorem{notation}[theorem]{Notation}
\newtheorem{proposition}[theorem]{Proposition}
\newenvironment{proof}[1][Proof]{\noindent\textbf{#1.} }{\ \rule{0.5em}{0.5em}}
\begin{document}

\title{The $\mathcal{F}^{A}$ Quantifier Fuzzification Mechanism: analysis of convergence and efficient implementations.}

\author{
F\'{e}lix D\'{\i}az-Hermida, Marcos Matabuena, Juan C. Vidal%
\thanks{F. D\'{\i}az-Hermida, M. Matabuena, and J.C. Vidal are with the Centro Singular de Investigaci\'{o}n en Tecnolox\'{\i}as da Informaci\'{o}n (CiTIUS), Universidade de Santiago de Compostela, 15782, Santiago de Compostela, SPAIN.}%
}

%

\maketitle

\begin{abstract}
The fuzzy quantification model $\mathcal{F}^{A}$ has been identified as one of
the best behaved quantification models in several revisions of the field of
fuzzy quantification. This model is, to our knowledge, the unique one
fulfilling the strict \textit{Determiner Fuzzification Scheme} axiomatic
framework that does not induce the standard min and max operators. The main
contribution of this paper is the proof of a convergence result that links this
quantification model with the Zadeh's model when the size of the input sets
tends to infinite. The convergence proof is, in any case, more general than
the convergence to the Zadeh's model, being applicable to any quantitative
quantifier. In addition, recent revisions papers have presented some doubts about
the existence of suitable computational implementations to evaluate the $\mathcal{F}^{A}$ model in practical applications. In order to prove that this model is not
only a theoretical approach, we show exact algorithmic solutions for the most
common linguistic quantifiers as well as an approximate implementation by means
of Monte Carlo. Additionally, we will also give a general overview of the main
properties fulfilled by the $\mathcal{F}^{A}$ model, as a single compendium
integrating the whole set of properties fulfilled by it has not been
previously published.

\end{abstract}

\begin{IEEEkeywords}
	Fuzzy quantification, theory of generalized quantifiers, quantifier fuzzification mechanisms, Zadeh's quantification model.
\end{IEEEkeywords}

%
\IEEEpeerreviewmaketitle

\section{Introduction}
\IEEEPARstart{A}{} great range of models have been proposed for the evaluation of fuzzy
quantified sentences, being
\cite{Barro02,Delgado00,Delgado14,Sanchez16,DiazHermida00,DiazHermida02-FuzzySets,DiazHermida04IPMU,DiazHermida10Arxiv,DiazHermida04-IEEE,Dubois85,
DVORAK2014,
Glockner03-Generalized,Glockner06Libro,Liu98,Ming2006,Ralescu95,Yager83,Yager88,Yager2016,Zadeh83}
only an example. Several revision papers have also been published, being
\cite{Delgado14} possibly the one that makes a more exhaustive comparison.
Other revision works is worth to mention are
\cite{Barro02,Delgado00,Glockner06Libro,DiazHermida06Tesis, Yager2016}. There
also exists an specific paper \cite{DiazHermida17-FuzzySets}, comparing the
models following the quantification framework presented in
\cite{Glockner06Libro}.

Moreover, fuzzy quantifiers have been used in a wide range of applications
like fuzzy control, temporal reasoning, fuzzy databases, information
retrieval, multi-criteria decision making, data fusion, natural language
generation, etc. In \cite{Delgado14} a list of the main applications of fuzzy
quantifiers is presented.

This paper is devoted to present some relevant new results of the
$\mathcal{F}^{A}$ \textit{quantification model} proposed in
\cite{DiazHermida04-IEEE,DiazHermida04IPMU}. The main result we will present
is a convergence proof that, in the particular case of proportional
quantifiers, assures the convergence of the $\mathcal{F}^{A}$ model to the
\textit{Zadeh's quantification model} \cite{Zadeh83} when the intersection of
fuzzy sets is modelled by means of the probabilistic \textit{tnorm} operator.
Moreover, although several revisions of the fuzzy quantification field
\cite{Delgado14,Yager2016,DiazHermida17-FuzzySets} have presented this model
as one of the best fuzzy quantification models available, some doubts persist
about the possibility of efficiently implementing it
\cite{Delgado14,Yager2016}. For this reason, we will also provide efficient
computational implementations of the $\mathcal{F}^{A}$ quantification model
for the most common linguistic quantifiers as well as the explanation of how
to extend these implementations to other types of linguistic quantifiers.

The definition of the $\mathcal{F}^{A}$ quantification model follows the
Gl\"{o}ckner's approximation to fuzzy quantification \cite{Glockner06Libro}
instead of the common one based on type I, type II quantified expressions
proposed by Zadeh \cite{Zadeh83}. Gl\"{o}ckner's approximation generalizes the
concept of generalized classic quantifier \cite{Barwise81} (second order
predicates or set relationships) to the fuzzy case as fuzzy relationships
between fuzzy sets. Following this idea he recasts the problem of evaluating
fuzzy quantified expressions as a problem of searching for adequate mechanisms to
transform semi-fuzzy quantifiers (specification means) into fuzzy quantifiers
(operational means). The author denominates these transformation mechanisms
\textit{Quantifier Fuzzification Mechanism} (\textit{QFMs}). The followed
approach also generalizes the \textit{Theory of Generalized Quantifiers}
(\textit{TGQ}), that deals with the analysis and modelling of the phenomena of
quantification in natural language, \cite{Barwise81} to the fuzzy case.

In his proposal the author also defined a rigorous axiomatic framework to
ensure the good behavior of QFMs. Models fulfilling this strict framework are
denominated \textit{Determiner Fuzzification Schemes} (\textit{DFSs}) and they
comply with a broad set of properties that guarantee a good behavior from a
linguistic and fuzzy point of view. In \cite{Sanchez16} or
\cite{Glockner06Libro} can be consulted a comparison between Zadeh's and
Gl\"{o}ckner's approaches.

The $\mathcal{F}^{A}$ model is a QFM which fulfills the strict axiomatic
framework proposed by Gl\"{o}ckner, which makes it a DFS. It is to
our knowledge the unique \textit{non-standard DFS} (a DFS\ non inducing the
standard max and min operators).

The structure of this paper is the following. First, we will introduce the
fuzzy quantification framework proposed by Gl\"{o}ckner. Second, we will
present two alternative definitions of the $\mathcal{F}^{A}$ QFM, one based on
the use of fuzzy connectives and the other based on a probabilistic
interpretation of fuzzy sets. After that, we will give a brief review of the
main properties fulfilled by the $\mathcal{F}^{A}$ QFM. The following section
is dedicated to introduce the convergence results of the $\mathcal{F}^{A}$ QFM,
that as a particular case includes the convergence to the Zadeh's model.
However, the rate of convergence is too slow to be of utility in most
applications, but thanks to this result is argued that we can expect very
good approximations of the $\mathcal{F}^{A}$ QFM by means of Monte Carlo. The
final section is devoted to present exact and approximate algorithmic implementations.

\section{The fuzzy quantification framework}

In the specification of the fuzzy quantification framework
\cite{Glockner06Libro}, the author rewrote the problem of defining fuzzy
quantification models as the problem of looking for adequate means to
convert \textit{semi-fuzzy quantifiers }(i.e., mechanisms adequate to specify
the meaning of linguistic quantifiers) into \textit{fuzzy quantifiers} (i.e.,
operational means adequate to apply semi-fuzzy quantifiers to fuzzy inputs).

In this framework, fuzzy quantifiers are just a generalization of crisp
quantifiers to fuzzy sets. We will show below the definition of classic
quantifiers conforming to TGQ.

\begin{definition}
A two valued (generalized) quantifier on a base set $E\neq\varnothing$ is a
mapping $Q:\mathcal{P}\left(  E\right)  ^{n}\longrightarrow\mathbf{2}$, where
$n\in\mathbb{N}$ is the arity (number of arguments) of $Q$, $\mathbf{2}%
=\left\{  0,1\right\}  $ denotes the set of crisp truth values, and
$\mathcal{P}\left(  E\right)  $ is the powerset of $E$.
\end{definition}

Below we show two examples of classic quantifiers:
\begin{align*}
& \mathbf{all}\left(  Y_{1},Y_{2}\right) = Y_{1}\subseteq Y_{2}\\
& \mathbf{at\_least\_60\%}\left(  Y_{1},Y_{2}\right)  
 \qquad = \left\{
\begin{array}
[c]{cc}%
\frac{\left\vert Y_{1}\cap Y_{2}\right\vert }{\left\vert Y_{1}\right\vert
}\geq0.60 & Y_{1}\neq\varnothing\\
1 & Y_{1}=\varnothing
\end{array}
\right.
.\end{align*}

From here on, we will denote $\left\vert E\right\vert =m$.

A fuzzy quantifier assigns a fuzzy value to each possible choice of
$X_{1},\ldots,X_{n}\in\widetilde{\mathcal{P}}\left(  E\right)  $, where by
$\widetilde{\mathcal{P}}\left(  E\right)  $ we are denoting the fuzzy powerset
of $E$.

\begin{definition}
\cite[definition 2.6]{Glockner06Libro} An n-ary fuzzy quantifier
$\widetilde{Q}$ on a base set $E\neq\varnothing$ is a mapping $\widetilde
{Q}:\widetilde{\mathcal{P}}\left(  E\right)  ^{n}\longrightarrow
\mathbf{I=}\left[  0,1\right]  $.
\end{definition}

The next example shows a possible definition of the fuzzy quantifier
$\widetilde{\mathbf{all}}:\widetilde{\mathcal{P}}\left(  E\right)
^{2}\longrightarrow\mathbf{I}$:
\[
\widetilde{\mathbf{all}}\left(  X_{1},X_{2}\right)  =\inf\left\{  \max\left(
1-\mu_{X_{1}}\left(  e\right)  ,\mu_{X_{2}}\left(  e\right)  \right)  :e\in
E\right\}
\]
where by $\mu_{X}\left(  e\right)  $ we are denoting the membership function
of $X\in\widetilde{\mathcal{P}}\left(  E\right)  $.

Previous definition of the fuzzy quantifier $\widetilde{\mathbf{all}}$ seems
plausible. However, the reader could think of in other possible plausible
expressions to model $\widetilde{\mathbf{all}}$ by simply changing the
\textit{tconorm} operator \textit{max} by other \textit{tconorm} operator. For
other quantifiers, like \textit{`at least sixty percent'}, the problem of
establishing adequate models is far from obvious.

In the search of possible solutions for defining fuzzy quantifiers, in
\cite{Glockner06Libro} the concept of semi-fuzzy quantifier was introduced to
work as a `middle point' between classic and fuzzy quantifiers. Semi-fuzzy
quantifiers are close but more powerful than the Zadeh's concept of linguistic
quantifiers \cite{Zadeh83}. Semi-fuzzy quantifiers only accept crisp
arguments, as classic quantifiers, but they have a fuzzy output, as in the
case of fuzzy quantifiers. Semi-fuzzy quantifiers are adequate to capture the
semantics of linguistic quantified expressions.

\begin{definition}
\cite[definition 2.8]{Glockner06Libro} An n-ary semi-fuzzy quantifier $Q$ on a
base set $E\neq\varnothing$ is a mapping $Q:\mathcal{P}\left(  E\right)
^{n}\longrightarrow\mathbf{I}$.
\end{definition}

$Q$ assigns a gradual result to each pair of crisp sets $\left(  Y_{1}%
,\ldots,Y_{n}\right)  $. Some examples of semi-fuzzy quantifiers are:
\begin{align*} 
& \mathbf{about\_10}\left(  Y_{1},Y_{2}\right)  = T_{6,8,12,14}\left(\left\vert Y_{1}\cap Y_{2}\right\vert \right)\\
& \mathbf{about\_60\%\_or\_more}\left(  Y_{1},Y_{2}\right)  
 \qquad = \left\{
\begin{array}
[c]{cc}%
S_{0.4,0.6}\left(  \frac{\left\vert Y_{1}\cap Y_{2}\right\vert }{\left\vert
	Y_{1}\right\vert }\right)  & X_{1}\neq\varnothing\\
1 & X_{1}=\varnothing
\end{array}
\right. 
\end{align*}
where $T_{6,8,12,14}\left(  x\right)  $ and $S_{0.4,0.6}\left(  x\right)  $
represent the ordinary trapezoidal\footnote{Function $T_{a,b,c,d}$ is defined as
\[
T_{a,b,c,d}\left(  x\right)  =\left\{
\begin{array}
[c]{cc}%
0 & x\leq a\\
\frac{x-a}{b-a} & a<x\leq b\\
1 & b<x\leq c\\
1-\frac{x-c}{d-c} & c<x\leq d\\
0 & d<x
\end{array}
\right.
\]	 
} and $S$ fuzzy numbers\footnote{Function $S_{\alpha,\gamma}$ is defined as
\[
S_{\alpha,\gamma}\left(  x\right)  =\left\{
\begin{tabular}
[c]{ll}%
$0$ & $x<\alpha$\\
$2\left(  \frac{\left(  x-\alpha\right)  }{\left(  \gamma-\alpha\right)
}\right)  ^{2}$ & $\alpha<x\leq\frac{\alpha+\gamma}{2}$\\
$1-2\left(  \frac{\left(  x-\gamma\right)  }{\left(  \gamma-\alpha\right)
}\right)  ^{2}$ & $\frac{\alpha+\gamma}{2}<x\leq\gamma$\\
$1$ & $\gamma<x$%
\end{tabular}
\right.
\]	 
}.

We generally will denominate the fuzzy numbers used in the definition of
the semi-fuzzy quantifiers as `support functions of the semi-fuzzy quantifiers'.

Although the semantics of semi-fuzzy quantifiers is intuitive, they do not
permit to evaluate fuzzy quantified expressions. In \cite{Glockner06Libro},
the author proposes to use an additional mechanism to transform semi-fuzzy
quantifiers into fuzzy quantifiers. This mechanism allows to map semi-fuzzy
quantifiers into fuzzy quantifiers:

\begin{definition}
\cite[definition 2.10]{Glockner06Libro} A quantifier fuzzification mechanism
(\textit{QFM}) $\mathcal{F}$ assigns to each semi-fuzzy quantifier
$Q:\mathcal{P}\left(  E\right)  ^{n}\rightarrow\mathbf{I}$ a corresponding
fuzzy quantifier $\mathcal{F}\left(  Q\right)  :\widetilde{\mathcal{P}}\left(
E\right)  ^{n}\rightarrow\mathbf{I}$ of the same arity $n\in\mathbb{N}$ and on
the same base set $E$.
\end{definition}

\section{The \textit{QFM} $\mathcal{F}^{A}$\label{SubSectionModeloFA}}

In this section we present the finite \textit{QFM} $\mathcal{F}^{A}$
\cite{DiazHermida04IPMU,DiazHermida04-IEEE,DiazHermida06Tesis,DiazHermida10Arxiv,DiazHermida17-FuzzySets}%
. The $\mathcal{F}^{A}$ \textit{QFM} can be defined using two different
strategies. The first definition uses the equipotence concept \cite{Bandler80}
and remains purely on the use of fuzzy operators. The second is based on a
probabilistic interpretation of fuzzy sets. Both definitions are equivalent.

Following \cite{Bandler80}\ the equipotence between a crisp set $Y$ and a
fuzzy set $X$ can be defined as:
\begin{equation}
Eq\left(  Y,X\right)  \\
= \wedge_{e\in E}\left(  \mu_{X}\left(  e\right)
\rightarrow\mu_{Y}\left(  e\right)  \right)  \wedge\left(  \mu_{Y}\left(
e\right)  \rightarrow\mu_{X}\left(  e\right)  \right). \label{Eq_equipotence_1}
\end{equation}%
The concept of equipotence is basically a measure of equality between fuzzy sets.

Let us consider the product tnorm ($\wedge\left(  x_{1},x_{2}\right)
=x_{1}\cdot x_{2}$) and the Lukasiewicz implication ($\rightarrow\left(
x_{1},x_{2}\right)  =\min\left(  1,1-x_{1}+x_{2}\right)  $). In previous
expression, if $e\in Y$ then $\mu_{Y}\left(  e\right)  =1$ and if $e\notin Y$
then $\mu_{Y}\left(  e\right)  =0$. Then%


\begin{equation}
\left(  \mu_{X}\left(  e\right)  \rightarrow\mu_{Y}\left(  e\right)  \right)
\wedge\left(  \mu_{Y}\left(  e\right)  \rightarrow\mu_{X}\left(  e\right)
\right)  \\
=\left\{
\begin{array}
[c]{ccc}%
\mu_{X}\left(  e\right)  & : & e\in Y\\
1-\mu_{X}\left(  e\right)  & : & e\notin Y
\end{array}
\right.  
\label{Eq_equipotence_2}%
.\end{equation}

Then from (\ref{Eq_equipotence_1}) and (\ref{Eq_equipotence_2})


\begin{align*}
Eq\left(  Y,X\right)   &   ={\prod\limits_{e\in Y}}\mu_{X}\left(  e\right)  {\prod\limits_{e\in
		E\backslash Y}}\left(  1-\mu_{X}\left(  e\right)  \right).
\end{align*}

Using the equipotence concept, the $\mathcal{F}^{A}$ model can be defined as:

\begin{definition}
{ }Let $Q:\mathcal{P}\left(  E\right)  ^{n}\rightarrow\mathbf{I}$ be a
semi-fuzzy quantifier, $E$ finite. The \textit{QFM} $\mathcal{F}^{A}$ is
defined as:
\begin{equation*}
\mathcal{F}^{A}(X_{1}, \ldots, X_{n}) = \bigvee\limits_{Y_{1} \in \mathcal{P}(E)} \ldots \bigvee\limits_{Y_{n} \in \mathcal{P}(E)} \\
Eq(Y_1,X_1) \wedge \ldots \wedge Eq(Y_n,X_n) \wedge Q(Y_1, \ldots, Y_n)
\end{equation*}
where $\vee$ is the Lukasiewicz tconorm ($\vee\left(  x_{1},x_{2}\right)
=\min\left(  x_{1}+x_{2},1\right)  $), and $\wedge$ is the product tnorm
($\wedge\left(  x_{1},x_{2}\right)  =x_{1}\cdot x_{2}$).\smallskip{}
\end{definition}

Now, we will present an alternative definition based on a probabilistic
interpretation of fuzzy sets. The semantic interpretation of fuzzy sets based
on likelihood functions
\cite{Mabuchi92,Thomas95,Tursken2000Fundamentals,Dubois2000Fundamentals}
simply interprets vagueness in the data as a consequence of making a random
experiment in which a set of individuals are asked about the fulfillment of a
certain property.

For example, let us consider $h\in\mathbb{R}$. We can define the degree of
fulfillment of the statement \textit{\textquotedblleft the value of height
}$h$\textit{\ is tall\textquotedblright} as:
\begin{align*}
\mu\left(  \text{\textquotedblleft}h\text{ is }tall\text{\textquotedblright%
}\right)   &  =\Pr\left(  \text{\textquotedblleft}h\text{ is considered
}tall\text{\textquotedblright}\right) 
  =\frac{\left\vert v\in V:C\left(  v,\text{\textquotedblleft}h\text{ is
considered }tall\text{\textquotedblright}\right)  =1\right\vert }{\left\vert
V\right\vert }%
\end{align*}
where $V$ is a set of voters and $C\left(  v,\text{\textquotedblleft}h\text{
is considered }tall\text{\textquotedblright}\right)  $ denotes the answer of
the voter $v$ to the question \textquotedblleft$h$ is considered
$tall$\textquotedblright.

We can apply the same idea to compute the probability that a crisp set
$Y\in\mathcal{P}\left(  E\right)  $ is a representative of a fuzzy set
$X\in\widetilde{\mathcal{P}}\left(  E\right)  $ when we suppose the base set
$E$ finite and that the probabilities of the different elements are
independent. The intuition is to measure the probability that only the
elements in $Y$ belongs to $X$:

\begin{definition}
\label{DefInterpretProbConj}Let $X\in\widetilde{\mathcal{P}}\left(  E\right)
$ be a fuzzy set, $E$ finite. The probability of the crisp set $Y\in
\mathcal{P}\left(  E\right)  $ being a representative of the fuzzy set
$X\in\widetilde{\mathcal{P}}\left(  E\right)  $ is defined as
\begin{align*}
\Pr\left(  representative_{X} = Y\right) & = m_{X}\left(Y\right) 
 = {\prod\limits_{e\in Y}}\mu_{X}\left(  e\right)  {\prod\limits_{e\in E\backslash Y}%
}\left(  1-\mu_{X}\left(  e\right)  \right)
\end{align*}
\end{definition}

We would like to point out that in the previous definition the probability
points are the subsets of $E$. In this way the $\sigma$-algebra on which the
probability is defined is $\mathcal{P}\left(  E\right)  $.

Using expression \ref{DefInterpretProbConj} the definition of the \textit{QFM}
$\mathcal{F}^{A}$ is easily made:

\begin{definition}
\cite[page. 1359]{DiazHermida04IPMU}. Let $Q:\mathcal{P}\left(  E\right)
^{n}\rightarrow\mathbf{I}$ be a semi-fuzzy quantifier, $E$ finite. The
\textit{QFM} $\mathcal{F}^{A}$ is defined as
\begin{equation}
\mathcal{F}^{A}\left(  Q\right)  \left(  X_{1},\ldots,X_{n}\right)
= \sum_{Y_{1}\in\mathcal{P}\left(  E\right)  }\ldots\sum_{Y_{n}\in
	\mathcal{P}\left(  E\right)} \\
m_{X_{1}}\left(  Y_{1}\right)  \ldots m_{X_{n}}\left(  Y_{n}\right)  Q\left(  Y_{1},\ldots,Y_{n}\right)
\label{ModeloVerosimilitudes}%
\end{equation}
for all $X_{1},\ldots,X_{n}\in\widetilde{\mathcal{P}}\left(  E\right)  $.
\end{definition}

In expression \ref{ModeloVerosimilitudes} we are assuming that the probability
of being $Y_{i}$ a representative of the fuzzy set $X_{i}$ is independent of
the probability of being $Y_{j}$ a representative of the fuzzy set $X_{j}$ for
$i\neq j$. $\mathcal{F}^{A}\left(  Q\right)  \left(  X_{1},\ldots
,X_{n}\right)  $ can then be interpreted as the average opinion of
voters\footnote{We would like to point out that the probabilistic
interpretation of the \textit{QFM} $\mathcal{F}^{A}$ holds some relationships
with a similar probabilistic interpretation of the Zadeh's model. Let
$X\in\widetilde{\mathcal{P}}\left(  E\right)  $ be a fuzzy set representing
the linguistic concept \ \textit{`big houses'}, and let us suppose we want to
select an element $e\in E$. Let us assume we have the same probability of
selecting each element, and that $\mu_{e}\left(  X\right)  $ represents the
probability that the element $e$ fulfills the property of being a big house.
Let $fq:\left[  0,1\right]  \rightarrow\mathbf{I}$ be a function representing
a proportional unary linguistic quantifier (e.g. \textit{`most'}). Then, the
Zadeh's model just applies the linguistic quantifier to the average
probability of selecting an element fulfilling the property of being a `big
house':
\[
fq\left(  Avg\left(  \Pr\left(  e\_is\_big|e\_is\_selected\right)  \right)
\right)  =fq\left(  \frac{1}{m}\sum_{e\in E}\mu_{e}\left(  X\right)  \right)
\]
. In contrast, the $\mathcal{F}^{A}$ QFM computes the probability of every
possible combination in which the elements of $E$ can fulfill the property of
`being a big house'. After computing the probability of each combination, we
compute the average of applying the support function of the quantifier $fq$ to
the possible combinations.}.

The following example shows the application of the \textit{QFM} $\mathcal{F}%
^{A}$:

\begin{example}
\label{EjemVerosimilitudes}Let us consider the sentence:
\[
\text{\textquotedblleft Nearly all big houses are expensive\textquotedblright}%
\]
where the semi-fuzzy quantifier $Q=$\textbf{ }\textit{`nearly all'}, and the
fuzzy sets \textit{`big houses'} and \textit{`expensive'} take the following
values:
\begin{align*}
\mathbf{big}\text{ }\mathbf{houses}  &  =\left\{  0.8/e_{1},0.9/e_{2}%
,1/e_{3},0.2/e_{4}\right\}\\%
\mathbf{expensive}  &  =\left\{  1/e_{1},0.8/e_{2},0.3/e_{3},0.1/e_{4}\right\}
\\
Q\left(  X_{1},X_{2}\right)   &  =\left\{
\begin{array}
[c]{cc}%
\max\left\{  2\left(  \frac{\left\vert X_{1}\cap X_{2}\right\vert }{\left\vert
X_{1}\right\vert }\right)  -1,0\right\}  & X_{1}\neq\varnothing\\
1 & X_{1}=\varnothing
\end{array}
.\right.
\end{align*}
We compute the probabilities of the representatives of the fuzzy sets
\textit{`big houses'} and \textit{`expensive'}:
\begin{align*}
& m_{\mathbf{big}\text{ }\mathbf{houses}}\left(  \varnothing\right) 
=\left(  1-0.8\right)  \left(  1-0.9\right)  \left(  1-1\right)  \left(1-0.2\right)  =0, \\
& \ldots \\
& m_{\mathbf{expensive}}\left(  \left\{  e_{1},e_{2},e_{3},e_{4}\right\} \right) = 0.8\cdot0.9\cdot1\cdot0.2=0.144, \\
& m_{\mathbf{expensive}}\left(  \varnothing\right) =\left(  1-1\right) \left(  1-0.8\right)  \left(  1-0.3\right)  \left(  1-0.1\right)  =0,\\
& m_{\mathbf{expensive}}\left(  \left\{  e_{1}\right\}  \right) = 1\cdot\left(  1-0.8\right)  \left(  1-0.3\right)  \left(  1-0.1\right) =0.126,\\
& \ldots \\
& m_{\mathbf{expensive}}\left(  \left\{  e_{1},e_{2},e_{3},e_{4}\right\} \right) =0.8\cdot0.9\cdot1\cdot0.2=0.144.
\end{align*}
And using expression (\ref{ModeloVerosimilitudes}):
\begin{align*}
&  \mathcal{F}^{A}\left(  Q\right)  \left(  \mathbf{big}\text{ }%
\mathbf{houses},\mathbf{expensive}\right) 
  =\sum_{Y_{1}\in\mathcal{P}\left(  E\right)  }\sum_{Y_{2}\in\mathcal{P}%
\left(  E\right)  }m_{X_{1}}\left(  Y_{1}\right)  m_{X_{2}}\left(
Y_{2}\right)  Q\left(  Y_{1},Y_{2}\right)  =0.346.
\end{align*}

\end{example}

\section{The DFS axiomatic framework}

We will present now the definition of the \textit{Determiner fuzzification
scheme (DFS}) axiomatic framework \cite{Glockner06Libro}. It is impossible in
this paper to explain in full detail the DFS axiomatic framework, as in
\cite{Glockner06Libro} the author needed chapters three and four to present it
in adequate detail. We will limit us to introduce the framework, referring the
reader to the previous reference for further study. 

\begin{definition}
A \textit{QFM} $\mathcal{F}$ is called a determiner fuzzification scheme (DFS)
if the conditions listed in $TABLE$ $I$ are satisfied for all semi-fuzzy quantifiers $Q:\mathcal{P}\left(  E\right)  ^{n}\longrightarrow\mathbf{I}$.
\end{definition}

\begin{table*}[!t]
\begin{center}
\caption{Conditions of a DFS for all semi-fuzzy quantifiers $Q:\mathcal{P}\left(  E\right)  ^{n}\longrightarrow\mathbf{I}$\label{tab:properties}}
\begin{tabular}[c]{l p{6cm} c}
\toprule	
\textbf{Name} & \textbf{Condition} & \textbf{Reference}  \\
\toprule
Correct generalization & $\mathcal{U}\left(  \mathcal{F}\left(  Q\right)
\right)  =Q$\quad if $n\leq1$ & (Z-1) \\
\midrule
Projection quantifiers & $\mathcal{F}\left(  Q\right)  =\widetilde{\pi_{e}}%
$\quad if $Q=\pi_{e}$ for some $e\in E$ & (Z-2)\\
\midrule
Dualisation & $\mathcal{F}\left(  Q\square\right)  =\mathcal{F}%
\left(  Q\right)  \widetilde{\square}$\quad$n>0$ & (Z-3)\\
\midrule
Internal joins & $\mathcal{F}\left(  Q\cup\right)  =\mathcal{F}\left(
Q\right)  \widetilde{\cup}$\quad$n>0$ & (Z-4)\\
\midrule
Preservation of monotonicity & If $Q$ is nonincreasing in the $n$-th arg,
then $\mathcal{F}\left(  Q\right)  $ is nonincreasing in $n$-th arg, $n>0$  & (Z-5)\\
\midrule
Functional application & $\mathcal{F}\left(  Q\circ\underset{i=1}{\overset
{n}{\times}}\widehat{f_{i}}\right)  =\mathcal{F}\left(  Q\right)
\circ\underset{i=1}{\overset{n}{\times}}\widehat{\mathcal{F}}\left(
f_{i}\right)  $ where $f_{1},\ldots,f_{n}:E^{\prime}\rightarrow E,E^{\prime}\neq\varnothing$ & (Z-6) \\
\bottomrule
\end{tabular}
\end{center}
\end{table*}

In the following section,
we will present the main properties of the models fulfilling the framework in
relation to the $\mathcal{F}^{A}$ QFM, and some others that are not consequence of the DFS axiomatic framework but which are important in order to adequately characterize the behavior of QFMs.

\section{Analysis of the behavior of the $\mathcal{F}^{A}$ QFM}

In this section we will give a general overview of the main properties of the
$\mathcal{F}^{A}$ QFM referring the different publications where the proofs
and extended explanations can be found.

\subsection{Main properties of the $\mathcal{F}^{A}$ QFM derived from the DFS
framework\label{PropertiesDerivedDFS}}

As we have advanced, the $\mathcal{F}^{A}$ QFM fulfills the DFS axiomatic
framework. This fact guarantees that it also fulfills all the adequacy
properties the framework guarantees. We will present now the main properties derived from it in relation with the behavior of the $\mathcal{F}^{A}$ QFM.

\subsubsection{Correct generalization (P1)}

This property is possibly the most important property derived from the DFS
framework. \textit{Correct generalization} requires that the behavior of a
fuzzy quantifier $\mathcal{F}\left(  Q\right)  $ when we apply it to crisp
arguments would be equal to the application of the semi-fuzzy quantifier $Q$
over the same crisp arguments. That is, for all the crisp subsets
$Y_{1},\ldots,Y_{n}\in\mathcal{P}\left(  E\right)  $, then it holds that
$\mathcal{F}\left(  Q\right)  \left(  Y_{1},\ldots,Y_{n}\right)  =Q\left(
Y_{1},\ldots,Y_{n}\right)  $. For example, given the crisp sets $\mathbf{big}$
$\mathbf{houses},\mathbf{expensive}\in\mathcal{P}\left(  E\right)  $, this
property guarantees that:
\begin{equation*}
\mathcal{F}\left(\mathbf{some}\right)  \left(  \mathbf{big}\text{ }\mathbf{houses},\mathbf{expensive}\right) = \\
\mathbf{some}\left(\mathbf{big}\text{ }\mathbf{houses},\mathbf{expensive}\right)
\end{equation*}

The proof of this property for the $\mathcal{F}^{A}$ QFM can be found in
\cite[page 291]{DiazHermida06Tesis},\cite[page 31]{DiazHermida10Arxiv} for the
unary case, as in conjunction with the other axioms of the DFSs is enough to
assure the fulfillment of the property in the general case.

\subsubsection{Quantitativity (P2)}

In TGQ, a quantifier is \textit{quantitative }if it does not depend on any
particular property fulfilled by the elements. Most common examples of
quantifiers we can find in the literature are quantitative (e.g.,
\textit{`many', `about 10'}). \textit{Non-quantitative quantifiers} involve
the reference to particular elements of the base set (e.g., \textit{`Spain'}
in a set of countries). A \textit{QFM} $\mathcal{F}$ retains the
\textit{quantitativity property} if quantitative semi-fuzzy quantifiers are
converted into quantitative fuzzy quantifiers by the application of
$\mathcal{F}.$ The fulfillment of the quantitativity property for the
$\mathcal{F}^{A}$ QFM is a consequence of the fulfillment of the DFS framework.

\subsubsection{Projection quantifier (P3)}

The \textit{Axiom Z-2} of the DFS framework establishes that the
\textit{projection crisp quantifier} $\pi_{e}\left(  Y\right)  $ (which
returns $1$ if $e\in Y$ and $0$ in other case) is transformed into the
\textit{fuzzy projection quantifier} $\widetilde{\pi_{e}}\left(  X\right)  $
(which returns $\mu_{X}\left(  e\right)  $). The proof of this property for
the $\mathcal{F}^{A}$ QFM can be found in \cite[page 272]{DiazHermida06Tesis},
\cite[page 31]{DiazHermida10Arxiv}.

\subsubsection{Induced propositional logic (P4)}

In \cite{Glockner06Libro} a mechanism was proposed to embed crisp logical
functions ($\lnot\left(  x\right)  $, $\wedge\left(  x_{1},x_{2}\right)  $,
$\vee\left(  x_{1},x_{2}\right)  $, $\rightarrow\left(  x_{1},x_{2}\right)  $)
into semi-fuzzy quantifiers. For example, the `and' function can be embedded
into a semi-fuzzy quantifier $Q_{\wedge}:\mathcal{P}\left(  \left\{
e_{1},e_{2}\right\}  \right)  \rightarrow\left\{  0,1\right\}  $ such that
$Q_{\wedge}\left(  \varnothing\right)  =Q_{\wedge}\left(  \left\{
e_{1}\right\}  \right)  =Q_{\wedge}\left(  \left\{  e_{2}\right\}  \right)
=0$ and $Q_{\wedge}\left(  \left\{  e_{1},e_{2}\right\}  \right)  =1$. The
property of \textit{induced propositional logic} assures that crisp logical
functions are transformed into acceptable fuzzy logical functions. In the case
of the $\mathcal{F}^{A}$ QFM the induced propositional functions are
respectively the \textit{strong negation}, the probabilistic \textit{tnorm},
the probabilistic \textit{tconorm }and the \textit{Rechenbach}\emph{ }fuzzy
implication. The proof of this property for the $\mathcal{F}^{A}$ QFM can be
found in \cite[page 265]{DiazHermida06Tesis}, \cite[page 28]%
{DiazHermida10Arxiv}.

\subsubsection{External negation (P5)}

We will say that a QFM fulfills the \textit{external negation property} if
$\mathcal{F}\left(  \widetilde{\lnot}Q\right)  $ is equivalent to
$\widetilde{\lnot}\mathcal{F}\left(  Q\right)  $. In words\textit{,}
equivalence of expressions \textit{\textquotedblleft it is false that at least
60\% of the good students are good athletes\textquotedblright\ }and
\textit{\textquotedblleft less than 40\% of the good students are good
athletes\textquotedblright\ }is assured. Here, $\widetilde{\lnot}$ is assumed
to be the induced negation of the QFM (the strong negation for the
$\mathcal{F}^{A}$ QFM). The proof of this property for the $\mathcal{F}^{A}$
QFM can be found in \cite[page 273]{DiazHermida06Tesis}, \cite[page
32]{DiazHermida10Arxiv}.

\subsubsection{Internal negation (P6)}

The \textit{internal negation or antonym} of a semi-fuzzy quantifier
$Q:\mathcal{P}\left(  E\right)  ^{n}\longrightarrow\mathbf{I}$ is defined as
$Q\mathbf{\lnot}\left(  Y_{1},\ldots,Y_{n}\right)  =Q\left(  Y_{1}%
,\ldots,\mathbf{\lnot}Y_{n}\right)  $. For example, \textit{`all'} is the
antonym of \textit{`no'} as $\mathbf{no}\left(  Y_{1},Y_{2}\right)
=\mathbf{all}\left(  Y_{1},\lnot Y_{2}\right)  =\mathbf{all}\lnot\left(
Y_{1},Y_{2}\right)  $. Fulfillment of the \textit{internal negation property}
assures that internal negation transformations are translated to the fuzzy
case. The proof of this property for the $\mathcal{F}^{A}$ QFM can be found in
\cite[page 273]{DiazHermida06Tesis}, \cite[page 32]{DiazHermida10Arxiv}.

\subsubsection{Dualisation (P7)}

The \textit{dualisation property }is a consequence of the fulfillment of the
external negation and internal negation properties. In conjunction, these
negation properties assure the maintenance of the equivalences in the
`Aristotelian square' \cite{Gamut84}. It forms part of the DFS framework
(\textit{Z-3 axiom}), being the dual of a semi-fuzzy quantifier $Q:\mathcal{P}%
\left(  E\right)  ^{n}\longrightarrow\mathbf{I}$ defined as $Q\widetilde
{\square}\left(  Y_{1},\ldots,Y_{n}\right)  =\widetilde{\lnot}Q\left(
Y_{1},\ldots,\mathbf{\lnot}Y_{n}\right)  $ and equivalently in the fuzzy case.
As an example, the equivalence of $\mathcal{F}\left(  \mathbf{no}\right)
\left(  \mathbf{big}\text{ }\mathbf{houses},\widetilde{\lnot}%
\mathbf{expensive}\right)  $ and $\mathcal{F}\left(  \mathbf{all}\right)
\left(  \mathbf{big}\text{ }\mathbf{houses},\mathbf{expensive}\right)  $ is
assured; or in words, \textit{\textquotedblleft no big house is not
expensive\textquotedblright{}} and \textit{\textquotedblleft all big houses are
expensive\textquotedblright} are equivalent.\textit{ }The proof of this
property for the $\mathcal{F}^{A}$ QFM can be found in \cite[page
275]{DiazHermida06Tesis}, \cite[page 33]{DiazHermida10Arxiv}.

\subsubsection{Union/intersection of arguments P8}

The properties of \textit{union and intersection of arguments} guarantee the
compliance with some transformations to construct new quantifiers using unions
and intersections of arguments. Being $Q:\mathcal{P}\left(  E\right)
^{n+1}\longrightarrow\mathbf{I}$ an $n+1$-ary semi-fuzzy quantifier, $Q\cup$
is defined as $Q\cup\left(  Y_{1},\ldots,Y_{n+1}\right)  =Q\left(
Y_{1},\ldots,Y_{n}\cup Y_{n+1}\right)  $ and equivalently in the fuzzy case.
\textit{Z-4 axiom} specifies this property for the union of quantifiers, as
the property is also fulfilled for the intersection of arguments as a
consequence of the DFS axiomatic framework.

One particular example of the consequences of fulfilling these properties is
that the equivalence between absolute unary and binary quantifiers is assured,
guaranteeing that we obtain the same result when we evaluate
\textit{\textquotedblleft around 5 big houses are expensive\textquotedblright}
and \textit{\textquotedblleft there are around 5 houses that are big and
expensive\textquotedblright, }where the evaluation of the first quantified
expression is computed by means of an absolute binary quantifier and the
evaluation of the second expression is computed by applying the corresponding
absolute unary quantifier to the intersection of `big houses' and `expensive
houses' computed by means of the induced \textit{tnorm. }In combination with
the internal and external negation properties, they allow the preservation of
the boolean argument structure that can be expressed in natural language when
none of the boolean variables $X_{i}$ occurs more than
once\ \cite[section 3.6]{Glockner06Libro}. The proof of these properties for
the $\mathcal{F}^{A}$ QFM can be found in \cite[page 275]{DiazHermida06Tesis},
\cite[page 33]{DiazHermida10Arxiv}.\textit{\emph{ }}

\subsubsection{Coherence with standard quantifiers P9}

By \textit{standard quantifiers} we mean the classical quantifiers
$\exists,\forall$ and their binary versions $\mathbf{some}$ and $\mathbf{all}%
$. Every QFM fulfilling the DFS axiomatic framework guarantees that the fuzzy
version of these classical quantifiers is the expected. For example, the
$\mathcal{F}^{A}$ QFM fulfills (where $\widetilde{\vee},\widetilde{\wedge
},\widetilde{\rightarrow}$ are the logical operators induced by the
$\mathcal{F}^{A}$\textit{ }model):
\begin{equation*}
\mathcal{F}\left(  \exists\right)  \left(  X\right) =\sup\left\{\overset{m}{\underset{i=1}{\widetilde{\vee}}}\mu_{X}\left(  a_{i}\right): \right.\\
\left. {} A=\left\{  a_{1},\ldots,a_{m}\right\}  \in\mathcal{P}\left(  E\right),a_{i}\neq a_{j}\text{ if }i\neq j\right. \bigg\}
\end{equation*}
\begin{equation*}
\mathcal{F}\left(  \mathbf{all}\right)  \left(  X_{1},X_{2}\right) = \inf\left\{  \overset{m}{\underset{i=1}{\widetilde{\wedge}}}\mu_{X_{1}} \left(  a_{i}\right)  \widetilde{\rightarrow}\mu_{X_{2}}\left(  a_{i}\right) : \right.\\
 \left. {} A=\left\{a_{1},\ldots,a_{m}\right\}  \in\mathcal{P}\left(  E\right), a_{i}\neq a_{j}\text{ if }i\neq j\right. \bigg\}
\end{equation*}

This property is a consequence of being the $\mathcal{F}^{A}$ QFM a DFS.

\subsubsection{Monotonocity in arguments P10}

In \cite{Glockner06Libro} different definitions to assure the preservation of
monotonicity relationships were included. The property of \textit{monotonicity
in arguments}, which forms part of the DFS axiomatic framework (\textit{axiom
Z5}) assures that monotonic behaviors in arguments are translated from the
semi-fuzzy to the fuzzy case. For example, for the binary semi-fuzzy
quantifier \textit{`most',} that is increasing in its second argument (e.g.
\textit{\textquotedblleft most politics are rich\textquotedblright}), the
fulfillment of this property guarantees that its fuzzy version will also be
increasing in its second argument. The DFS framework also guarantees the
maintenance of `local' monotonicity properties \cite[section 4.11]%
{Glockner06Libro}. The proof of these properties for the $\mathcal{F}^{A}$ QFM
can be found in \cite[page 282]{DiazHermida06Tesis}, \cite[page 39]%
{DiazHermida10Arxiv}.

\subsubsection{Monotonicity between quantifiers P11}

The DFS axiomatic framework also guarantees the preservation of
\textit{monotonicity relationships between quantifiers}\emph{.} For example,
\textit{`between 4 and 6'} is more specific than \textit{`between 2 and 8'}.
Thanks to this property, monotonicity relationships between semi-fuzzy
quantifiers are preserved between fuzzy quantifiers. The fulfillment of the
property of monotonicity in quantifiers is a consequence of the DFS axiomatic framework.

\subsubsection{Crisp argument insertion P12}

The operator of \textit{crisp argument insertion,} applied to a semi-fuzzy
quantifier $Q:\mathcal{P}\left(  E\right)  ^{n}\rightarrow\mathbf{I}$, allows
to construct a new quantifier $Q:\mathcal{P}\left(  E\right)  ^{n-1}%
\rightarrow\mathbf{I}$ by means of the restriction of $Q$ by a crisp set $A$.
More explicitly, the crisp argument insertion $Q\lhd A$ is defined as $Q\lhd
A\left(  Y_{1},\ldots,Y_{n-1}\right)  =Q\left(  Y_{1},\ldots,Y_{n-1},A\right)
$. A QFM preserves this property if $\mathcal{F}\left(  Q\lhd A\right)
=\mathcal{F}\left(  Q\right)  \lhd A$; that is, the crisp argument insertion
commutes for semi-fuzzy and fuzzy quantifiers. Crisp argument insertion allows
to model the `adjectival restriction' of natural language in the crisp case.
The fulfillment of this property by the $\mathcal{F}^{A}$ QFM is also a
consequence of the DFS axiomatic framework.

\subsection{Some relevant properties considered in the QFM framework but not
derived from the DFS axioms \label{PropertiesAdditionalDFS}}

In \cite[chapter six]{Glockner06Libro} it can be found the definition of some
additional adequacy properties for characterizing QFMs. These properties were
not included in the DFS framework in some cases, for not being compatible with
it, and in other cases, in order to not excessively constraint the set of
theoretical models fulfilling the DFS framework. We will present now the most
relevant ones:

\subsubsection{Continuity in arguments P13}

The property of \textit{continuity in arguments}\emph{ }assures the continuity
of the models with respect to the input sets. It is fundamental to guarantee
that small variations in the inputs do not cause jumps in the outputs.

The $\mathcal{F}^{A}$ QFM is a finite DFS and it is continuous. The proof of
this property for the $\mathcal{F}^{A}$ QFM can be found in \cite[page
293]{DiazHermida06Tesis}, \cite[page 48]{DiazHermida10Arxiv}.

\subsubsection{Continuity in quantifiers P14}

The property of \textit{continuity in quantifiers}\emph{ }assures the
continuity of the QFMs with respect to small variations in the quantifiers.
The proof of this property for the $\mathcal{F}^{A}$ QFM can be found in
\cite[page 297]{DiazHermida06Tesis}, \cite[page 48]{DiazHermida10Arxiv}.

\subsubsection{Propagation of fuzziness P15}

\textit{Propagation of fuzziness properties }assure that fuzzier inputs
(understood as fuzzier input sets) and fuzzier quantifiers produce fuzzier
outputs\footnote{Let be $\preceq_{c}$ a partial order in $\mathbf{I}%
\times\mathbf{I}$ defined as \cite[section 5.2 and 6.3]{Glockner06Libro}:%
\[
x\preceq_{c}y\Leftrightarrow y\leq x\leq\frac{1}{2}\text{ or }\frac{1}{2}\leq
x\leq y
\]
for $x,y\in\mathbf{I}$. A fuzzy set $X_{1}$ is at least as fuzzy as a fuzzy
set $X_{2}$ if for each $e\in E$, $\mu_{X_{1}}\left(  e\right)  \preceq_{c}%
\mu_{X_{2}}$; that is, membership degrees of $X_{1}$ are closer to $0.5$ than
membership degrees of $X_{2}$. In the case of fuzzy quantifiers a similar
definition is applied.}. This property is not fulfilled by the $\mathcal{F}%
^{A}$ QFM because it is not fulfilled by the induced product \textit{tnorm}
and the induced probabilistic sum \textit{tconorm} of the model. An extensive
analysis of the fulfillment of this property by the main QFMs that can be
found in the literature is presented in \cite{DiazHermida17-FuzzySets}.

\subsubsection{Fuzzy argument insertion P16}

The property of \emph{fuzzy argument insertion }is the fuzzy counterpart of
the crisp argument insertion. To our knowledge, this property has only been
proved for the DFSs $\mathcal{M}_{CX}$ \cite[definition 7.56]{Glockner06Libro}
and $\mathcal{F}^{A}$ \cite[page 292]{DiazHermida06Tesis},\cite[page
48]{DiazHermida10Arxiv}.

\subsection{Additional properties fulfilled by the $\mathcal{F}^{A}$ QFM not
included in the QFM framework}

In this section we summarize three other properties fulfilled by the
$\mathcal{F}^{A}$ QFM that do not form part of the ones considered in the QFM
framework by Gl\"{o}ckner \cite{Glockner06Libro}. We will explain these
properties in some more detail as they are not commonly considered in the
bibliography about fuzzy quantification. In \cite{DiazHermida17-FuzzySets}
these properties were used, in combination with other criteria, to present a
comparison of the behavior of different QFMs thinking in their convenience
for practical applications.

\subsubsection{Property of averaging for the identity
quantifier\label{SubSubSubPropMedia}}

The fulfillment of this property by a QFM $\mathcal{F}$ assures that when we
apply the model to the unary semi-fuzzy quantifier $\mathbf{identity}\left(
Y\right)  =\frac{\left\vert Y\right\vert }{\left\vert E\right\vert }%
,Y\in\mathcal{P}\left(  E\right)  $ we obtain the average of the membership
grades. For the `identity' semi-fuzzy quantifier the addition of one element
increases the result in $\frac{1}{m}$. We could expect that a QFM
$\mathcal{F}$ would translate this linearity relationship into the fuzzy case.

The QFM $\mathcal{F}^{A}$ fulfills the property of averaging for the identity
quantifier that assures:
\[
\mathcal{F}^{A}\left(  \mathbf{identity}\right)  \left(  X\right)  =\frac
{1}{m}\sum_{j=1}^{m}\mu_{X}\left(  e_{j}\right)
\]
The proofs can be found in \cite[page 298]{DiazHermida06Tesis} or in
\cite[page 50]{DiazHermida10Arxiv}.

\subsubsection{Property of the probabilistic interpretation of
quantifiers\label{SubSubSubPropRecubProbab}}

Let us suppose we use a set of semi-fuzzy quantifiers
(\textit{\textquotedblleft at most about 20\%\textquotedblright},
\textit{\textquotedblleft between 20\% and 80\%\textquotedblright},
\textit{\textquotedblleft at least about 80\%\textquotedblright}) to split the
quantification universe. We will say that a set of semi-fuzzy quantifiers
$Q_{1},\ldots,Q_{r}:\mathcal{P}^{n}\left(  E\right)  \rightarrow\mathbf{I}$
forms a \textit{quantified Ruspini partition} of the quantification universe
if for all $Y_{1},\ldots,Y_{n}\in\mathcal{P}\left(  E\right)  $ it holds that
\[
Q_{1}\left(  Y_{1},\ldots,Y_{n}\right)  +\ldots+Q_{r}\left(  Y_{1}%
,\ldots,Y_{n}\right)  =1
\]
The QFM $\mathcal{F}^{A}$ translates this relationship to the fuzzy case.
Forming $Q_{1},\ldots,Q_{r}:\mathcal{P}\left(  E\right)  ^{n}\rightarrow
\mathbf{I}$ a quantified Ruspini partition it is fulfilled:%
\[
\mathcal{F}^{A}\left(  Q_{1}\right)  \left(  X_{1},\ldots,X_{n}\right)
+\ldots+\mathcal{F}^{A}\left(  Q_{r}\right)  \left(  X_{1},\ldots
,X_{n}\right)  =1
\]

This property is very interesting because it will permit to interpret the
result of evaluating a fuzzy quantified expression as a probability
distributed over the labels related to the quantifiers. Proofs can be found in
\cite[page 298]{DiazHermida06Tesis} or in \cite[page 52]{DiazHermida10Arxiv}.

\subsubsection{Fine distinction between
objects\label{SectionRankingGeneration}}

This property is particularly useful for the application of fuzzy quanfiers in
ranking problems. Let us consider a set of objects $o_{1},\ldots,o_{N}$ for
which the fulfillment of a set of criteria $p_{1},\ldots,p_{m}$ is represented
by means of a fuzzy set $X^{o_{i}}=\left\{  \mu_{X^{i}}\left(  p_{1}\right)
/p_{1},\ldots,\mu_{X^{i}}\left(  p_{m}\right)  /p_{m}\right\}  $, where
$\mu_{X^{i}}\left(  p_{j}\right)  /p_{j}$ indicates the fulfillment of the
criteria $p_{j}$ by the object $o_{i}$. Generally, we also have a set of
weights $W=\left\{  \mu_{W}\left(  p_{1}\right)  /p_{1},\ldots,\mu_{W}\left(
p_{m}\right)  /p_{m}\right\}  $ to indicate the relative relevance of the
different criteria $p_{1},\ldots,p_{m}$.

Using fuzzy quantification, a ranking can be constructed assigning to each
object a weight computed by means of an unary proportional quantified
expression $r^{o_{i}}=\widetilde{Q}\left(  X^{o_{i}}\right)  $ (in the case
that a vector of weights is not involved) or a binary proportional quantified
expression $r^{o_{i}}=\widetilde{Q}\left(  W,X^{o_{i}}\right)  $ (in the case
that a vector of weights $W$ is used to indicate the relative importance of
each criteria). In this way, computing $r^{o_{i}}$ for each $i=1,\ldots,N$, we
can sort the objects of the collection with respect to the linguistic
expression \textit{`how }$\widetilde{Q}$' criteria are fulfilled (e.g., for
$\widetilde{Q}=\mathbf{many}$, \textit{`how many'}).

In order to guarantee a sufficient discriminative power, even small variations
in the inputs should produce some effect in the outputs. In \cite[section
5.6]{DiazHermida17-FuzzySets} it was proposed to analyze the behavior of QFMs
with respect to the following semi-fuzzy quantifiers defined by means of
increasing fuzzy numbers:

\begin{definition}
Let $h\left(  x\right)  :\left[  0,1\right]  \rightarrow\mathbf{I}$ be an
strictly increasing continuous mapping; i.e., $h\left(  x\right)  >h\left(
y\right)  $ for every $x>y$.\ We define the unary and binary semi-fuzzy
quantifiers $Q_{h}:\mathcal{P}\left(  E\right)  \rightarrow\mathbf{I}$ and
$Q_{h}:\mathcal{P}\left(  E\right)  ^{2}\rightarrow\mathbf{I}$ as
\begin{align*}
Q_{h}\left(  Y\right)   &  =h\left(  \left\vert Y\right\vert \right)
,Y\in\mathcal{P}\left(  E\right) \\
Q_{h}\left(  Y_{1},Y_{2}\right)   &  =\left\{
\begin{array}
[c]{cc}%
h\left(  \frac{\left\vert Y_{1}\cap Y_{2}\right\vert }{\left\vert
Y_{1}\right\vert }\right)  & Y_{1}\neq\varnothing\\
1 & Y_{1}=\varnothing
\end{array}
\right.
\end{align*}

\end{definition}

And then, to require to a QFM $\mathcal{F}$ the maintanace of the strictly
increasing relationships in the fuzzy case. That is, that any increase in the
fulfillment of a criteria will increase $\mathcal{F}\left(  Q_{h}\right)  $ in
the unary case, and that any increase in the fulfillment of a criteria
associated with a strictly positive weight will increase $\mathcal{F}\left(
Q_{h}\right)  $ in the binary case.

The $\mathcal{F}^{A}$ DFS fulfills this property as can be found in
\cite[section 5.6]{DiazHermida17-FuzzySets}.

\section{Limit case approximation of the $\mathcal{F}^{A}$ QFM}

In this section we will prove that in the general case of semi-fuzzy quantifiers defined by means of continuous proportional fuzzy numbers (i.e., `unary proportional', `binary proportional', `comparative proportional', etc.) the $\mathcal{F}^{A}$ QFM can be approximated by simply evaluating the fuzzy number that supports the quantifier over a function which depends on the average of the different boolean combinations of the input sets (more details below). As an additional result, in the specific case of unary and binary proportional linguistic quantifiers, the $\mathcal{F}^{A}$ QFM converges to the Zadeh's model when the intersection of the inputs sets is computed with the \textit{probabilistic tconorm} for binary proportional quantifiers.

Before proceeding, we will make a brief summary of the ideas of the proof in order to facilitate its understanding. In the proof, we will start introducing some previous results which guarantee that quantitative quantifiers can be expressed by means of a function of the cardinalities of the boolean combinations of the input sets. This will allow us to develop a general proof, valid for each quantitative quantifier defined by means of a proportional fuzzy number.

After that, we will use the fact that in the definition of the $\mathcal{F}^{A}$ QFM we are interpreting membership degrees $\mu_{X}(e_{i})$ as probabilities, and that independence is fulfilled for $\mu_{X}(e_{i}),\mu_{X}(e_{j}), i \neq j$. In this case, a fuzzy set  $X=\{a_1/e_1, \dots,a_{m}/e_{m}\}$ will induce an specific probability distribution over the function of the possible cardinalities $0, \dots,m$ of the set. In other words, as we are interested in the number of elements of $X$ fulfilling the property, each possible cardinality $i$ will have a probability value measuring the probability that exactly `$i$' elements fulfill the property. We will see that this probability follows a poisson binomial distribution. Moreover, we will also prove that the projections of the probability function $f(i_1,\dots,i_{K})$ induced by the $\mathcal{F}^{A}$ QFM for n-ary quantifiers follow poisson binomial distributions. In that case, the probability parameters of the $j$ projection will be determined by the $j$-th boolean combination used in the specification of the semi-fuzzy quantifier.

When $m$ tends to infinite, the fuzzy set $X=\{a_1/e_1, \dots, a_{m}/e_{m}\}$ will induce a sequence $B_1,B_2,\dots$ of poisson binomial distributions on $0, \dots, m$. But we will see that the variance of $Z_{i}=B_{i}/m$ will tend to 0. As in the definition of proportional quantifiers we use fuzzy numbers defined over $[0,1]$ instead of $\{0,\dots,m\}$, when we normalize the probability distribution $f(i_1,\dots,i_{K})$ to $[0,1]^n$ we will obtain a probability distribution whose projections are poisson binomial distributions such that their average converge in probability to the average of the membership degrees of the fuzzy set `induced' by the boolean combination, and their variance tend to $0$. Then, as each marginal distribution converges in probability to a constant, by the theorem of the continuous mapping the joint distribution converges in probability to a constant. In practice, this implies that the probability distribution will be more and more concentrated around the average of the boolean combinations as the size of the input sets tends to infinite. As a consequence, we could simply evaluate $\mathcal{F}^{A}$ computing the value of the proportonal fuzzy number used in the definition of $Q$ over the average of the boolean combinations.

After this summary we will present the proof in full detail.


The next theorem establishes that, in the
finite case, quantitative semi-fuzzy quantifiers can be expressed by means of
a function of the cardinalities of the boolean combinations of the input sets.

\begin{theorem}
\label{TeoremaCuantitativo}\cite[Theorem 11.32, chapter 11]{Glockner06Libro} A
semi-fuzzy quantifier $Q:\mathcal{P}\left(  E\right)  ^{n}\longrightarrow
\mathbf{I}$ on a finite base set $E\neq\varnothing$ is quantitative if and
only if $Q$ can be computed from the cardinalities of its arguments and their
Boolean combinations, i.e. there exist Boolean expressions $\Phi_{1}\left(
Y_{1},\ldots,Y_{n}\right)  ,\ldots,\Phi_{K}\left(  Y_{1},\ldots,Y_{n}\right)
$ for some $K\in\mathbb{N}$, and a mapping $q:\left\{  0,\ldots,m\right\}
^{K}\longrightarrow\mathbf{I}$ such that%
\begin{equation}
Q\left(  Y_{1},\ldots,Y_{n}\right)  = \\
q\left(  \left\vert \Phi_{1}\left(
Y_{1},\ldots,Y_{n}\right)  \right\vert ,\ldots,\left\vert \Phi_{K}\left(
Y_{1},\ldots,Y_{n}\right)  \right\vert \right)  \label{EqTeoremaCuantitativo}%
\end{equation}
for all $Y_{1},\ldots,Y_{n}\in\mathcal{P}\left(  E\right)  $.
\end{theorem}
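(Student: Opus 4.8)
This is Glöckner's characterization of quantitative semi-fuzzy quantifiers, so the plan is to prove both directions of the equivalence. The ``if'' direction is immediate: if $Q(Y_1,\ldots,Y_n)=q(|\Phi_1(Y_1,\ldots,Y_n)|,\ldots,|\Phi_K(Y_1,\ldots,Y_n)|)$ for Boolean expressions $\Phi_j$ and a mapping $q$, then since every automorphism (bijection) $\beta$ of the base set $E$ preserves cardinalities of sets and commutes with the Boolean set operations, we have $|\Phi_j(\beta Y_1,\ldots,\beta Y_n)|=|\beta\,\Phi_j(Y_1,\ldots,Y_n)|=|\Phi_j(Y_1,\ldots,Y_n)|$, whence $Q(\beta Y_1,\ldots,\beta Y_n)=Q(Y_1,\ldots,Y_n)$; this is exactly quantitativity.

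For the ``only if'' direction I would argue as follows. Fix $n$ and work with the $K=2^n$ Boolean \emph{atoms}: for each subset $S\subseteq\{1,\ldots,n\}$ let $\Phi_S(Y_1,\ldots,Y_n)=\bigcap_{i\in S}Y_i\cap\bigcap_{i\notin S}(E\setminus Y_i)$. These atoms are pairwise disjoint, their union is $E$, and every $Y_i$ is the union of the atoms $\Phi_S$ with $i\in S$; thus the tuple $(|\Phi_S(Y_1,\ldots,Y_n)|)_{S}$ is exactly the data of which ``cell'' each element of $E$ falls into, up to relabeling of elements. The key claim is that $Q(Y_1,\ldots,Y_n)$ depends only on this tuple of atom-cardinalities. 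To see this, suppose $(Y_1,\ldots,Y_n)$ and $(Y_1',\ldots,Y_n')$ have $|\Phi_S(Y_1,\ldots,Y_n)|=|\Phi_S(Y_1',\ldots,Y_n')|$ for every $S$. Since the atoms partition the finite set $E$ in both cases into blocks of matching sizes, one can build a bijection $\beta:E\to E$ carrying $\Phi_S(Y_1,\ldots,Y_n)$ onto $\Phi_S(Y_1',\ldots,Y_n')$ for each $S$; this $\beta$ then satisfies $\beta Y_i=Y_i'$ for all $i$, and quantitativity gives $Q(Y_1,\ldots,Y_n)=Q(\beta Y_1,\ldots,\beta Y_n)=Q(Y_1',\ldots,Y_n')$. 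Hence $Q$ factors through the map $(Y_1,\ldots,Y_n)\mapsto(|\Phi_S(Y_1,\ldots,Y_n)|)_S$, which defines the desired mapping $q:\{0,\ldots,m\}^{K}\to\mathbf{I}$ on the image (and extended arbitrarily, e.g.\ by $0$, on the rest of $\{0,\ldots,m\}^K$).

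The one point requiring a little care — and the closest thing to an obstacle — is the explicit construction of the bijection $\beta$ and the verification that it simultaneously realizes $\beta Y_i = Y_i'$ for all $i$ at once; this works precisely because the $\Phi_S$ are the \emph{atoms} of the Boolean algebra generated by $Y_1,\ldots,Y_n$, so matching them blockwise automatically matches every Boolean combination, in particular each $Y_i$ itself. Everything else is bookkeeping. Since the statement is quoted from \cite[Theorem 11.32]{Glockner06Libro}, I would keep the argument brief, emphasizing the atom decomposition and the automorphism-matching step, and refer to the reference for the remaining routine details.
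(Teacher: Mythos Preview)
Your proof is correct, and indeed the paper does not provide its own proof of this theorem at all: it is simply quoted from \cite[Theorem~11.32]{Glockner06Libro} as a known result and used as a tool in the subsequent convergence analysis. Your recognition of this in the final paragraph is apt; the atom-decomposition argument you outline is the standard way to establish the ``only if'' direction and matches what one finds in Gl\"ockner's book, so there is nothing to add beyond perhaps shortening the write-up to a one-line reference.
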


We will also introduce the following notation for denoting the boolean combinations:

Let be $l_{1},\ldots,l_{n}\in\left\{  0,1\right\}  $, we define $\Phi
_{l_{1},\ldots,l_{n}}\left(  Y_{1},\ldots,Y_{n}\right)  $ as:%
\[
\Phi_{l_{1},\ldots,l_{n}}\left(  Y_{1},\ldots,Y_{n}\right)  =Y_{1}^{\left(
l_{1}\right)  }\cap\ldots\cap Y_{n}^{\left(  l_{n}\right)  }%
\]
where%
\[
Y^{\left(  l\right)  }=\left\{
\begin{tabular}
[c]{lll}%
$Y$ & $:$ & $l=1$\\
$\lnot Y$ & $:$ & $l=0$%
\end{tabular}
\ \right.
\]

Let us remember we are denoting $\left\vert E\right\vert =m$. Then, in the
finite case, quantitative semi-fuzzy quantifiers can be expressed by means of
a function $q:\left\{  0,\ldots,m\right\}  ^{K}\longrightarrow\mathbf{I}$
depending only of the cardinalities of the boolean combinations of
$Y_{1},\ldots,Y_{n}$. For example, proportional binary semi-fuzzy quantifiers
can be defined by means of the boolean combinations $\Phi_{1}\left(
Y_{1},Y_{2}\right)  =Y_{1}\cap Y_{2}$ and $\Phi_{2}\left(  Y_{1}%
,\overline{Y_{2}}\right)  =Y_{1}\cap \overline{Y_{2}}$.  

Let $Q:\mathcal{P}\left(  E\right)  ^{n}\longrightarrow\mathbf{I}$ be a
quantitative semi-fuzzy quantifier $Q:\mathcal{P}\left(  E\right)
^{n}\longrightarrow\mathbf{I}$ on a finite base set $E\neq\varnothing$, and
let us suppose it can be expressed following expression
\ref{EqTeoremaCuantitativo} for some set $\Phi_{1}\left(  Y_{1},\ldots
,Y_{n}\right)  ,\ldots,\Phi_{K}\left(  Y_{1},\ldots,Y_{n}\right)  $ of boolean
combinations and some $q:\left\{  0,\ldots,m\right\}  ^{K}\longrightarrow
\mathbf{I}$. For convenience, we will define $q^{\prime}:\left[  0,1\right]
^{K}\longrightarrow\mathbf{I}$ such that:%
\begin{equation*}
q^{\prime}\left(  \frac{\left\vert \Phi_{1}\left(  Y_{1},\ldots,Y_{n}\right)
	\right\vert }{m},\ldots,\frac{\left\vert \Phi_{K}\left(  Y_{1},\ldots
	,Y_{n}\right)  \right\vert }{m}\right)  = \\
q\left(  \left\vert \Phi_{1}\left(
Y_{1},\ldots,Y_{n}\right)  \right\vert ,\ldots,\left\vert \Phi_{K}\left(
Y_{1},\ldots,Y_{n}\right)  \right\vert \right)
\end{equation*}
$q^{\prime}:\left[  0,1\right]  ^{K}\longrightarrow\mathbf{I}$ simply
normalizes $q$ in the interval of proportions $\left[  0,1\right]  ^{K}$\footnote{For simplicity of the notation, we will use $\left[  0,1\right]
^{K}$ instead of $\left\{  0,\frac{1}{m},\ldots,\frac{m-1}{m},1\right\}^{K}  $. }.

We introduce now the definition of
the \textit{poisson binomial distribution}. Let us consider a sequence of $m$
\textit{independent bernoulli trials} $\mathbf{B}=P_{1},\ldots,P_{m}$ that are
not necessarily identically distributed. Let be $p_{1},\ldots,p_{m}$ the
corresponding probabilities of the independent bernouilli trials. The
probability function of the poisson binomial distribution is:%

\[
\Pr^{\mathbf{B}}\left(  K=k\right)  =\sum_{A\in F_{k}}%
{\displaystyle\prod\limits_{i\in A}}
p_{i}%
{\displaystyle\prod\limits_{j\in A^{c}}}
\left(  1-p_{j}\right)
\]
where $F_{k}$ is the set of all subsets of $k$ integers that can be selected
from $\left\{  1,2,3,\ldots,m\right\}  $.

We now introduce a notation for representing the poisson bernoulli succession
$\mathbf{B}=P_{1},\ldots,P_{m}$ with probabilities $p_{1},\ldots,p_{m}$ by
means of a fuzzy set:

\begin{notation}
Let be $\mathbf{B}=P_{1},\ldots,P_{m}$ a poisson bernoulli succession with
probabilities $p_{1},\ldots,p_{m}$. We will denote by $X^{\mathbf{B}}%
\in\widetilde{\mathcal{P}}\left(  E\right)  $ the fuzzy set defined in the
following way:%
\[
\mu_{X^{\mathbf{B}}}\left(  e_{i}\right)  =p_{i}%
\]

\end{notation}

Under the probabilistic interpretation of the $\mathcal{F}^{A}$ QFM, a crisp
set $Y\in\mathcal{P}\left(  E\right)  $ can be interpreted as a realization of
a poisson bernoulli succession $\mathbf{B}=P_{1},\ldots,P_{m}$ with
probabilities $p_{1},\ldots,p_{m}$ such that $\chi_{Y}\left(  e_{i}\right)
=P_{i}$\footnote{By $\chi_{Y}\left(  e_{i}\right)  $ we are representing the
characteristic function of $Y$; that is: $\chi_{Y}\left(  e_{i}\right)  =1$ if
$e_{i}\in Y$ and $0$ otherwise.}. In this sense,%
\begin{align*}
\overset{\mathbf{B}}{\Pr}\left(  Y\right)   &  =%
{\displaystyle\prod_{i|e_{i}\in Y}}
p_{i}%
{\displaystyle\prod_{j|e_{j}\notin Y}}
\left(  1-p_{j}\right) 
  =%
{\displaystyle\prod_{i|e_{i}\in Y}}
\left(  P_{i}=1\right)
{\displaystyle\prod_{j|e_{j}\notin Y}}
\left(  P_{j}=0\right) 
  =m_{X^{\mathbf{B}}}\left(  Y\right).
\end{align*}

Now, we will compute the projection of the probability function used in the
definition of the $\mathcal{F}^{A}$ QFM for the cardinalities of each possible
boolean combination associated to a quantitative semi-fuzzy quantifier
$Q:\mathcal{P}\left(  E\right)  ^{n}\longrightarrow\mathbf{I}$. As $Q$ is
quantitative, by theorem \ref{TeoremaCuantitativo} it can be defined by means
of a function $q:\left\{  0,\ldots,\left\vert E\right\vert \right\}
^{K}\longrightarrow\mathbf{I}$ depending on the\ cardinalities of the boolean
combinations of the input sets ($\left\vert \Phi_{1}\left(  Y_{1},\ldots
,Y_{n}\right)  \right\vert ,\ldots,\left\vert \Phi_{K}\left(  Y_{1}%
,\ldots,Y_{n}\right)  \right\vert $). Then:%
\begin{align*}
\mathcal{F}^{A}\left(  Q\right)  \left(  X_{1},\ldots,X_{n}\right) 
 & =\sum_{Y_{1}\in\mathcal{P}\left(  E\right)  }\ldots\sum_{Y_{n}%
\in\mathcal{P}\left(  E\right)  }m_{X_{1}}\left(  Y_{1}\right)  \ldots
m_{X_{n}}\left(  Y_{n}\right)  Q\left(  Y_{1},\ldots,Y_{n}\right) \\
&  =\sum_{\substack{\left(  i_{1},\ldots,i_{K}\right)  \in \\ \left\{  0,\ldots,m\right\}^{K}}}
\sum_{\substack{Y_{1},\ldots,Y_{n}\in\mathcal{P}\left(  E\right)
\; | \\
\left\vert \Phi_{1}\left(  Y_{1},\ldots,Y_{n}\right)  \right\vert
=i_{1}\wedge\\\ldots\\\left\vert \Phi_{k}\left(  Y_{1},\ldots,Y_{n}\right)
\right\vert =i_{K}}} \; m_{X_{1}}\left(  Y_{1}\right)  \ldots m_{X_{n}}\left(
Y_{n}\right)  \times\\
& \qquad q\left(  \left\vert \Phi_{1}\left(  Y_{1},\ldots,Y_{n}\right)  \right\vert
,\ldots,\left\vert \Phi_{K}\left(  Y_{1},\ldots,Y_{n}\right)  \right\vert
\right) \\
&  =\sum_{\substack{\left(  i_{1},\ldots,i_{K}\right)  \in \\ \left\{  0,\ldots,m\right\}
^{K}}} q\left(  i_{1},\ldots,i_{K}\right)  
\sum_{\mathclap{\substack{Y_{1},\ldots,Y_{n}%
\in\mathcal{P}\left(  E\right) \; |\\\left\vert \Phi_{1}\left(  Y_{1}%
,\ldots,Y_{n}\right)  \right\vert =i_{1}\wedge\\\ldots\\\left\vert \Phi
_{k}\left(  Y_{1},\ldots,Y_{n}\right)  \right\vert =i_{K}}}} 
m_{X_{1}}\left(Y_{1}\right)  \ldots m_{X_{n}}\left(  Y_{n}\right).
\end{align*}

Let us denote by%
\begin{equation}
f\left(  i_{1},\ldots,i_{K}\right) = 
\sum_{\mathclap{\substack{Y_{1},\ldots,Y_{n}%
\in\mathcal{P}\left(  E\right) \; |\\\left\vert \Phi_{1}\left(  Y_{1}%
,\ldots,Y_{n}\right)  \right\vert =i_{1}\wedge\\\ldots\\\left\vert \Phi
_{k}\left(  Y_{1},\ldots,Y_{n}\right)  \right\vert =i_{K}}}} \; m_{X_{1}}\left(
Y_{1}\right)  \ldots m_{X_{n}}\left(  Y_{n}\right)  \label{EqProbabilityMx}%
\end{equation}%
where $f\left(  i_{1},\ldots,i_{K}\right)  $ is a probability function. Take into
account that $m_{X_{1}}\left(  Y_{1}\right)  \ldots m_{X_{n}}\left(
Y_{n}\right)  $ define a probility over $\left(  Y_{1},\ldots,Y_{n}\right)
\in\mathcal{P}\left(  E\right)  ^{n}$, and $f\left(  i_{1},\ldots
,i_{K}\right)  $ simply distributes the probabilities of $\left(  Y_{1}%
,\ldots,Y_{n}\right)  \in\mathcal{P}\left(  E\right)  ^{n}$ over the
cardinalities of the $K$ boolean combinations.

\begin{theorem}
Let $f\left(  i_{1},\ldots,i_{K}\right)  $ be the probability distribution
that is obtained when we compute the probability induced by the $X_{1}%
,\ldots,X_{n}\in\mathcal{P}\left(  E\right)  ^{n}$ fuzzy sets over the
cardinalities of the boolean combinations $\left\vert \Phi_{1}\left(
Y_{1},\ldots,Y_{n}\right)  \right\vert ,\ldots,\left\vert \Phi_{K}\left(
Y_{1},\ldots,Y_{n}\right)  \right\vert $ following equation
\ref{EqProbabilityMx}. The probability projection $j$ of $f\left(
i_{1},\ldots,i_{K}\right)  $ will follow a poisson binomial distribution of
parameters:%
\begin{align*}
p_{1}^{j}  &  =\mu_{X_{1}^{\left(  l_{j,1}\right)  }\widetilde{\cap}%
\ldots\widetilde{\cap}X_{n}^{\left(  l_{j,n}\right)  }}\left(  e_{1}\right) \\
&  \ldots\\
p_{m}^{j}  &  =\mu_{X_{1}^{\left(  l_{j,1}\right)  }\widetilde{\cap}%
\ldots\widetilde{\cap}X_{n}^{\left(  l_{j,n}\right)  }}\left(  e_{m}\right)
\end{align*}
where $X_{1}^{\left(  l_{j,1}\right)  }\widetilde{\cap}\ldots\widetilde{\cap
}X_{n}^{\left(  l_{j,n}\right)  }=\Phi_{j}\left(  X_{1},\ldots,X_{n}\right)  $
is the j-th boolean combination.
\end{theorem}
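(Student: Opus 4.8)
The plan is to recognise the $j$-th marginal of $f$ as the law of the cardinality $\left\vert\Phi_{j}(Y_{1},\ldots,Y_{n})\right\vert$ under the product probability $m_{X_{1}}\otimes\cdots\otimes m_{X_{n}}$ on $\mathcal{P}(E)^{n}$, and then to decompose that cardinality into a sum of independent Bernoulli indicators. Summing the defining expression (\ref{EqProbabilityMx}) over all the indices $i_{1},\ldots,i_{j-1},i_{j+1},\ldots,i_{K}$ collapses every constraint except $\left\vert\Phi_{j}(Y_{1},\ldots,Y_{n})\right\vert=i_{j}$, so the $j$-th projection is $f_{j}(i)=\sum_{(Y_{1},\ldots,Y_{n})\,:\,\left\vert\Phi_{j}(Y_{1},\ldots,Y_{n})\right\vert=i} m_{X_{1}}(Y_{1})\cdots m_{X_{n}}(Y_{n})=\Pr\!\left(\left\vert\Phi_{j}(Y_{1},\ldots,Y_{n})\right\vert=i\right)$, where each $Y_{k}$ is drawn independently with $\Pr(Y_{k})=m_{X_{k}}(Y_{k})$.

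Next I would pass from subset-valued randomness to element-wise Bernoulli randomness, extending to the $n$-ary case the identity $\overset{\mathbf{B}}{\Pr}(Y)=m_{X^{\mathbf{B}}}(Y)$ already recorded in the excerpt. By the definition of $m_{X_{k}}$ as a product over elements, sampling $Y_{k}$ with law $m_{X_{k}}$ amounts to including each $e_{i}\in E$ in $Y_{k}$ independently with probability $\mu_{X_{k}}(e_{i})$; writing $\xi_{k,i}\in\{0,1\}$ for these indicators, the whole family $\{\xi_{k,i}:1\le k\le n,\;1\le i\le m\}$ is mutually independent with $\Pr(\xi_{k,i}=1)=\mu_{X_{k}}(e_{i})$. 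Fix the boolean combination $\Phi_{j}$, determined by the bits $l_{j,1},\ldots,l_{j,n}$ with $\Phi_{j}(Y_{1},\ldots,Y_{n})=Y_{1}^{(l_{j,1})}\cap\cdots\cap Y_{n}^{(l_{j,n})}$. Then $e_{i}\in\Phi_{j}(Y_{1},\ldots,Y_{n})$ iff $\xi_{k,i}=1$ for every $k$ with $l_{j,k}=1$ and $\xi_{k,i}=0$ for every $k$ with $l_{j,k}=0$, so the indicator $\eta_{j,i}:=\chi_{\Phi_{j}(Y_{1},\ldots,Y_{n})}(e_{i})$ equals $\prod_{k:l_{j,k}=1}\xi_{k,i}\cdot\prod_{k:l_{j,k}=0}(1-\xi_{k,i})$. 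Being a product of independent $\{0,1\}$ factors it is itself a Bernoulli variable, with success probability $\prod_{k:l_{j,k}=1}\mu_{X_{k}}(e_{i})\cdot\prod_{k:l_{j,k}=0}(1-\mu_{X_{k}}(e_{i}))$.

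It remains to identify this probability with $\mu_{\Phi_{j}(X_{1},\ldots,X_{n})}(e_{i})$ and to assemble the cardinality. Since the propositional logic induced by $\mathcal{F}^{A}$ is the strong negation together with the product \textit{tnorm} (property P4), we have $\mu_{X_{k}^{(l_{j,k})}}(e_{i})=\mu_{X_{k}}(e_{i})$ when $l_{j,k}=1$ and $\mu_{X_{k}^{(l_{j,k})}}(e_{i})=1-\mu_{X_{k}}(e_{i})$ when $l_{j,k}=0$, and $\mu_{X_{1}^{(l_{j,1})}\widetilde{\cap}\cdots\widetilde{\cap}X_{n}^{(l_{j,n})}}(e_{i})=\prod_{k=1}^{n}\mu_{X_{k}^{(l_{j,k})}}(e_{i})$; hence $\Pr(\eta_{j,i}=1)=\mu_{\Phi_{j}(X_{1},\ldots,X_{n})}(e_{i})=p_{i}^{j}$, exactly the parameter in the statement. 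Finally $\left\vert\Phi_{j}(Y_{1},\ldots,Y_{n})\right\vert=\sum_{i=1}^{m}\eta_{j,i}$, and the $\eta_{j,i}$ are independent across $i$ because $\eta_{j,i}$ depends only on the block $\{\xi_{k,i}:1\le k\le n\}$ and these blocks are disjoint for distinct $i$. Thus $\left\vert\Phi_{j}(Y_{1},\ldots,Y_{n})\right\vert$ is a sum of $m$ independent Bernoulli trials with probabilities $p_{1}^{j},\ldots,p_{m}^{j}$, i.e. it follows the Poisson binomial distribution with those parameters, which is precisely $f_{j}$.

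I expect the only delicate point to be the bookkeeping of the second paragraph: one must check that marginalising (\ref{EqProbabilityMx}) indeed leaves exactly the single constraint on $\Phi_{j}$, and that the element-wise reinterpretation of the independent product measures $m_{X_{k}}$ is legitimate and compatible with the formulas for the induced \textit{tnorm} and negation; once those are in place the remainder is a direct computation with independent Bernoulli variables.
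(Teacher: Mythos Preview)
Your argument is correct. It is essentially a rigorous version of the intuitive sketch the paper gives in the main text (``interpret memberships as Bernoulli probabilities, note independence across elements, read off the Poisson binomial''), and in that sense it matches the paper's first proof.

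However, the paper also supplies a second, analytical proof in the appendix that takes a genuinely different route. Rather than introducing the element-wise indicators $\xi_{k,i}$ and $\eta_{j,i}$ directly, the appendix works through the DFS axiomatic machinery: it shows that $\mathcal{F}^{A}$, being a DFS, commutes with argument transpositions, internal negation, and intersections, so that for any boolean combination $\Phi_{j}$ one has $\mathcal{F}^{A}(Q_{exactly}^{i_{j},1}\circ\Phi_{j})(X_{1},\ldots,X_{n})=\mathcal{F}^{A}(Q_{exactly}^{i_{j},1})(\Phi_{j}(X_{1},\ldots,X_{n}))$. It then identifies this last expression with the Poisson binomial mass function at $i_{j}$ for the single fuzzy set $X_{1}^{(l_{j,1})}\widetilde{\cap}\cdots\widetilde{\cap}X_{n}^{(l_{j,n})}$, which is the unary case already established. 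Your approach is more elementary and self-contained: it never invokes the DFS axioms and gets the independence of the $\eta_{j,i}$ for free from the product structure of $m_{X_{k}}$. The appendix approach, by contrast, buys a structural explanation of \emph{why} the parameters come out as $\mu_{\Phi_{j}(X_{1},\ldots,X_{n})}(e_{i})$ --- it is the induced fuzzy logic of the DFS doing the work --- whereas in your argument this identification is a separate pointwise computation appealing to property~P4.
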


\begin{proof}
We will only give an intuitive idea of this result. In appendix
\ref{AnnexProof} an analytical proof can be consulted.\newline By assumption,
the $\mathcal{F}^{A}$ QFM is interpreting membership grades of the input sets
as probabilities, and considering that the independence assumption is always
fulfilled between different elements and sets. The probability projection $j$
of $f\left(  i_{1},\ldots,i_{K}\right)  $ simply denotes the probability of
the different cardinalities of one of these boolean combinations. But the
probability of an element $e_{s}$ of pertaining to the boolean combination
$\Phi_{j}\left(  Y_{1},\ldots,Y_{n}\right)  $ is just the probability of
$e_{s}$ pertaining to every fuzzy set $X_{r}^{\left(  l_{r}\right)  }$ such
that $l_{r}=1$ and non pertaining to every fuzzy set $X_{r}^{\left(
l_{r}\right)  }$ set such that $l_{r}=0$. As this is fulfilled for every $e\in
E$, the cardinality of the boolean combination follows a poisson binomial
distribution with the indicated parameters.
\end{proof}

\begin{proposition}
\textit{\label{LimitCase}}Let $Q:\mathcal{P}\left(  E\right)  ^{n}%
\longrightarrow\mathbf{I}$ be a semi-fuzzy quantitative quantifier on a finite
base set $E\neq\varnothing$, $\Phi_{1}\left(  Y_{1},\ldots,Y_{n}\right)
,\ldots,\Phi_{K}\left(  Y_{1},\ldots,Y_{n}\right)  $, $K\in\mathbb{N}$ boolean
combinations, and $q:\left\{  0,\ldots,m\right\}  ^{K}\longrightarrow
\mathbf{I}$ the corresponding function for which:%
\begin{align*}
Q\left(  Y_{1},\ldots,Y_{n}\right) 
&  = q\left(  \left\vert \Phi_{1}\left(
Y_{1},\ldots,Y_{n}\right)  \right\vert ,\ldots,\left\vert \Phi_{K}\left(
Y_{1},\ldots,Y_{n}\right)  \right\vert \right) \\
&  = q^{\prime}\left(  \frac{\left\vert \Phi_{1}\left(  Y_{1},\ldots
	,Y_{n}\right)  \right\vert }{m},\ldots,\frac{\left\vert \Phi_{K}\left(
	Y_{1},\ldots,Y_{n}\right)  \right\vert }{m}\right)
\end{align*}
If $q^{\prime}:\left[  0,1\right]  ^{K}\longrightarrow\mathbf{I}$ is
continuous around
\[
\left(  \frac{\sum_{i=1}^{m}\mu_{\Phi_{1}\left(  X_{1},\ldots,X_{n}\right)  }%
}{m},\ldots,\frac{\sum_{i=1}^{m}\mu_{\Phi_{K}\left(  X_{1},\ldots
,X_{n}\right)  }}{m}\right)
\]
then the following result will be fulfilled when the size of $E$ tend to infinite:
\end{proposition}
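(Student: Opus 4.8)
The plan is to recognise the formula for $\mathcal{F}^{A}(Q)$ derived just before the statement as an expectation and then to prove that the underlying random vector concentrates at its own mean as $m\to\infty$. Using the identity $q(i_1,\ldots,i_K)=q'(i_1/m,\ldots,i_K/m)$ that defines $q'$, I would first rewrite
\[
\mathcal{F}^{A}(Q)(X_1,\ldots,X_n)=\sum_{(i_1,\ldots,i_K)\in\{0,\ldots,m\}^{K}}q'\!\left(\tfrac{i_1}{m},\ldots,\tfrac{i_K}{m}\right)f(i_1,\ldots,i_K)=\mathbb{E}\!\left[q'\!\left(\tfrac{B_1}{m},\ldots,\tfrac{B_K}{m}\right)\right],
\]
where $(B_1,\ldots,B_K)$ is the random vector whose joint law is the probability function $f$ of \ref{EqProbabilityMx}; by the theorem just established, its $j$-th marginal $B_j$ is Poisson binomial with parameters $p^{j}_{s}=\mu_{\Phi_j(X_1,\ldots,X_n)}(e_s)$, $s=1,\ldots,m$. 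Writing $\bar p^{j}=\tfrac1m\sum_{s=1}^{m}p^{j}_{s}$ for the $j$-th coordinate of the centring point appearing in the statement, we have $\bar p^{j}=\tfrac1m\mathbb{E}[B_j]$, so the value to be reached is exactly $q'\big(\mathbb{E}[B_1/m],\ldots,\mathbb{E}[B_K/m]\big)$; hence what must be proved is $D_m:=\big|\mathcal{F}^{A}(Q)(X_1,\ldots,X_n)-q'(\bar p^{1},\ldots,\bar p^{K})\big|\to0$ as $|E|=m\to\infty$ (which, when the averages converge, amounts to genuine convergence of $\mathcal{F}^{A}(Q)(X_1,\ldots,X_n)$ to $q'$ evaluated at the limiting averages).

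The key quantitative input is the variance bound $\mathrm{Var}(B_j)=\sum_{s=1}^{m}p^{j}_{s}(1-p^{j}_{s})\le m/4$, so that $Z^{m}_{j}:=B_j/m$ satisfies $\mathrm{Var}(Z^{m}_{j})\le 1/(4m)$. Chebyshev's inequality then gives $\Pr\!\big(|Z^{m}_{j}-\bar p^{j}|\ge\varepsilon\big)\le 1/(4m\varepsilon^{2})\to0$ for every $\varepsilon>0$ and every $j$; since $j$ runs over the finite set $\{1,\ldots,K\}$, the vector $(Z^{m}_{1}-\bar p^{1},\ldots,Z^{m}_{K}-\bar p^{K})$ converges to $0$ in probability, i.e. the joint law of $(Z^{m}_{1},\ldots,Z^{m}_{K})$ concentrates at its (generally $m$-dependent) mean $(\bar p^{1},\ldots,\bar p^{K})$. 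Combining this concentration with the hypothesis that $q'$ is continuous around the relevant point and with the fact that $q'$ takes values in $\mathbf{I}=[0,1]$ and is therefore bounded, a continuous-mapping plus bounded-convergence argument lets me pass the limit inside the expectation in the display above and conclude $D_m\to0$.

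I expect the genuine difficulty to lie in this last step, for two intertwined reasons: the centring point $(\bar p^{1},\ldots,\bar p^{K})$ may itself move with $m$, so the textbook continuous mapping theorem does not apply verbatim, and ``$q'$ continuous around the average point'' needs a careful reading when that point varies. I would resolve it by a compactness/subsequence argument: if $D_m$ did not converge to $0$, pick a subsequence with $D_m\ge\varepsilon_0>0$ and, using compactness of $[0,1]^{K}$, a further subsequence along which $(\bar p^{1},\ldots,\bar p^{K})\to c$, a point at which $q'$ is continuous; along this subsequence $(Z^{m}_{1},\ldots,Z^{m}_{K})\to c$ in probability, so $q'(Z^{m}_{1},\ldots,Z^{m}_{K})\to q'(c)$ in probability, and boundedness yields $\mathbb{E}[q'(Z^{m}_{1},\ldots,Z^{m}_{K})]\to q'(c)$; together with $q'(\bar p^{1},\ldots,\bar p^{K})\to q'(c)$ this forces $D_m\to0$ along the subsequence, a contradiction.

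Finally I would record the intended consequences. For proportional quantifiers the support function $q'$ is continuous on all of $[0,1]^{K}$, so the subsequence device is not even needed: continuity of $q'$ at the single point together with the concentration above yields $\mathcal{F}^{A}(Q)(X_1,\ldots,X_n)\to q'(\bar p^{1},\ldots,\bar p^{K})$ directly, so that for large $m$ one may evaluate $\mathcal{F}^{A}$ by applying the proportional fuzzy number to the averages of the boolean combinations of the inputs. Specialising to unary and binary proportional linguistic quantifiers, with the induced probabilistic \textit{tnorm} modelling the intersection of the input sets, the coordinates $\bar p^{j}$ become the averages of those (product-)intersections and the limit value coincides with the Zadeh model evaluated on them, which is the advertised convergence of the $\mathcal{F}^{A}$ QFM to Zadeh's quantification model.
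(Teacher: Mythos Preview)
Your proof is correct and follows essentially the same route as the paper: rewrite $\mathcal{F}^{A}(Q)$ as an expectation of $q'$ against the joint law $f$, use that each marginal is Poisson binomial so that the normalized marginals have variance $O(1/m)$, apply Chebyshev to obtain concentration at the mean, and then invoke the continuous mapping theorem together with the boundedness of $q'$ to pass the limit inside the expectation. Your version is in fact more careful than the paper's on one point: the paper simply cites the continuous mapping theorem and Chebyshev in a footnote without addressing the fact that the centring point $(\bar p^{1},\ldots,\bar p^{K})$ may move with $m$, whereas you close this gap explicitly with a compactness/subsequence argument.
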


%

\begin{align*}
\lim_{\left\vert E\right\vert \rightarrow\infty}\mathcal{F}^{A}\left(
Q\right)  \left(  X_{1},\ldots,X_{n}\right) 
& = q^{\prime}\left(  \frac
{\sum_{i=1}^{m}\mu_{\Phi_{1}\left(  X_{1},\ldots,X_{n}\right)  }}{m}%
,\ldots,\frac{\sum_{i=1}^{m}\mu_{\Phi_{K}\left(  X_{1},\ldots,X_{n}\right)  }%
}{m}\right)
\end{align*}

Before proving proposition \ref{LimitCase}, we would like to make some
appointments about the applicability of the result. In general, we always
could find a $q^{\prime}$ continuous around $\left(  \frac{\sum_{i=1}^{m}%
\mu_{\Phi_{1}\left(  X_{1},\ldots,X_{n}\right)  }}{m},\ldots,\frac{\sum
_{i=1}^{m}\mu_{\Phi_{K}\left(  X_{1},\ldots,X_{n}\right)  }}{m}\right)  $ such
that previous result would be applicable. But in choosing a `proportional
expression' for $q^{\prime}$, we are indicating that the types of fuzzy
quantifiers in which we are mainly interested are `proportional quantifiers'.
In practical applications, support functions associated to proportional
quantifiers are generally defined by means of `smooth'  fuzzy numbers over
$\left[  0,1\right]  $, which guarantees a good approximation when the size of
the referential set is sufficiently large. 

\begin{proof}
Let $f\left(  i_{1},\ldots,i_{K}\right)  $ be the probability distribution
that is obtained when we compute the probability induced by the $X_{1}%
,\ldots,X_{n}\in\mathcal{P}\left(  E\right)  ^{n}$ fuzzy sets over the
cardinalities of the boolean combinations $\left\vert \Phi_{1}\left(
Y_{1},\ldots,Y_{n}\right)  \right\vert ,\ldots,\left\vert \Phi_{K}\left(
Y_{1},\ldots,Y_{n}\right)  \right\vert $. We know that the probability
projection $f^{j}\left(  i_{s}\right)  $ follows a poisson binomial
distribution of parameters
\begin{align*}
p_{1}^{j}  &  =\mu_{X_{1}^{\left(  l_{j,1}\right)  }\widetilde{\cap}%
\ldots\widetilde{\cap}X_{n}^{\left(  l_{j,n}\right)  }}\left(  e_{1}\right) \\
&  \ldots\\
p_{m}^{j}  &  =\mu_{X_{1}^{\left(  l_{j,1}\right)  }\widetilde{\cap}%
\ldots\widetilde{\cap}X_{n}^{\left(  l_{j,n}\right)  }}\left(  e_{m}\right)
\end{align*}
Moreover,%
\begin{align*}
\mathcal{F}^{A}\left(  Q\right)  \left(  X_{1},\ldots,X_{n}\right) 
&  =\sum_{\substack{\left(  i_{1},\ldots,i_{k}\right) \in \\ \left\{  0,\ldots,m\right\}^{K}}} q\left(i_{1},\ldots,i_{K}\right)  
\sum_{\mathclap{\substack{Y_{1},\ldots,Y_{n} \in\mathcal{P}\left(  E\right) \; | \\\left\vert \Phi_{1}\left(  Y_{1} ,\ldots,Y_{n}\right)  \right\vert =i_{1}\wedge\\\ldots\\\left\vert \Phi_{k}\left(  Y_{1},\ldots,Y_{n}\right)  \right\vert =i_{K}}}} \; m_{X_{1}}\left(Y_{1}\right)  \ldots m_{X_{n}}\left(  Y_{n}\right) \\
&  =\sum_{\substack{\left(  i_{1},\ldots,i_{k}\right)  \in \\ \left\{  0,\ldots,m\right\}^{K}}} q^{\prime}\left(  \frac{i_{1}}{m},\ldots,\frac{i_{K}}{m}\right)  f\left(
i_{1},\ldots,i_{K}\right)
\end{align*}
Let $f^{\prime}:\left[  0,1\right]  ^{K}\longrightarrow\mathbf{I}$ be
probability distribution defined by:%
\[
f^{\prime}\left(  s_{1},\ldots,s_{K}\right)  =f\left(  m\times i_{1}%
,\ldots,m\times i_{K}\right)
\]
that normalizes $f$ in the interval $\left[  0,1\right]  ^{K}$. Then,%
\begin{equation*}
\mathcal{F}^{A}\left(  Q\right)  \left(  X_{1},\ldots,X_{n}\right)   \\
=\sum_{\left(  i_{1},\ldots,i_{K}\right)  \in m^{K}}q^{\prime}\left(
\frac{i_{1}}{m},\ldots,\frac{i_{K}}{m}\right)  f^{\prime}\left(  \frac{i_{1}%
}{m},\ldots,\frac{i_{K}}{m}\right)
\end{equation*}

As we are normalizing $f$ by $m$, the corresponding $f^{j\prime}\left(
i_{s}\right)  $ projection of $f^{\prime}$ will follow a probability
distribution such that:%
\begin{align*}
average\left(  f^{j\prime}\right)   &  =\frac{average\left(  f^{j}\right)
}{m}=\frac{\sum_{i=1}^{m}p_{i}^{j}}{m}\\
var\left(  f^{j\prime}\right)   &  =\frac{1}{m^{2}}var\left(  f^{j}\right)
=\frac{1}{m^{2}}\sum_{i=1}^{m}p_{i}^{j}\left(  1-p_{i}^{j}\right)
\end{align*}
but when $m\longrightarrow\mathbf{\infty}$ the variance tends to $0$.

And as the variance tends to $0$, $f^{j\prime}\overset{p}{\longrightarrow
}\frac{\sum_{i=1}^{m}p_{i}^{j}}{m}$, and as $q^{\prime}\left(  s_{1}%
,\ldots,s_{K}\right)  $ is continuous around $\left(  \frac{\sum_{i=1}%
^{m}p_{i}^{1}}{m},\ldots,\frac{\sum_{i=1}^{m}p_{i}^{K}}{m}\right)  $, by continuous mapping theorem
\footnote{Take into account that, as the variance tends to $0$, by the
\textit{Chebyshev inequality} we always could find an interval around
$average\left(  f^{j\prime}\right)  $ as small and containing a probability mass as high as desired for any $j$. This will allow to put as much probability
around $\left(  \frac{\sum_{i=1}^{m}p_{i}^{1}}{m},\ldots,\frac{\sum_{i=1}%
^{m}p_{i}^{k}}{m}\right)  $ as we wanted, where $q^{\prime}$ is continuous by
hypothesis.}:%

\begin{align}
&  \lim_{m\rightarrow\infty}\sum_{\left(  i_{1},\ldots,i_{k}\right)  \in
m^{k}}q^{\prime}\left(  \frac{i_{1}}{m},\ldots,\frac{i_{K}}{m}\right)
f\left(  \frac{i_{1}}{m},\ldots,\frac{i_{K}}{m}\right) \nonumber 
\overset{p}{\longrightarrow}q^{\prime}\left(  \frac{\sum_{i=1}^{m}p_{i}^{1}}{m}%
,\ldots,\frac{\sum_{i=1}^{m}p_{i}^{K}}{m}\right) \label{EqApproximation}\\
&  =q^{\prime}\left(  \frac{\sum_{i=1}^{m}\mu_{\Phi_{1}\left(  X_{1}%
,\ldots,X_{n}\right)  }}{m},\ldots,\frac{\sum_{i=1}^{m}\mu_{\Phi_{K}\left(
X_{1},\ldots,X_{n}\right)  }}{m}\right). \nonumber
\end{align}
\end{proof}

This result guarantees that the $\mathcal{F}^{A}$ QFM converges to the Zadeh's
model for unary proportional and binary proportional quantifiers when the size
of the referential set tends to infinite and the intersection is modelled by
means of the \textit{product tnorm} in the proportional case, as these
quantifiers basically depend on\footnote{Take into account that for
proportional quantifiers $\frac{\left\vert Y_{1}\cap Y_{2}\right\vert
}{\left\vert Y_{1}\right\vert }=\frac{\left\vert Y_{1}\cap Y_{2}\right\vert
}{m}/\left(  \frac{\left\vert Y_{1}\cap Y_{2}\right\vert }{m}+\frac{\left\vert
Y_{1}\cap\overline{Y_{2}}\right\vert }{m}\right)  $. In this case, the
$\mathcal{F}^{A}$ QFM will converge to $f_{Q}\left(  \frac{\sum_{e\in E}%
\mu_{X_{1}}\left(  e\right)  \mu_{X_{2}}\left(  e\right)  }{\sum_{e\in E}%
\mu_{X_{1}}\left(  e\right)  }\right)  $.}:%
\[%
\begin{array}
[c]{ccc}%
q:\frac{\left\vert Y\right\vert }{m}\longrightarrow\mathbf{I} & \text{:} &
\text{unary quantifiers}\\
q:\left(  \frac{\left\vert Y_{1}\cap Y_{2}\right\vert }{m},\frac{\left\vert
Y_{1}\cap\overline{Y_{2}}\right\vert }{m}\right)  ,\longrightarrow\mathbf{I} &
\text{:} & \text{binary quantifiers}%
\end{array}
\]

As we introduced below, the normalization by $m$ is coherent with proportional
linguistic quantifiers, that are generally defined by means of `smooth' fuzzy
numbers in $\left[  0,1\right]  $. In these situations, the result guarantees
that the probability of the projections of $f^{\prime}\left(  s_{1}%
,\ldots,s_{s}\right)  $ will concentrate around the average of the projections
as we increase the size of the referential set. As a consequence, if the
variation of the fuzzy number that supports the linguistic quantifier is small
around this average, we could expect a good approximation of the
$\mathcal{F}^{A}$ QFM using \ref{EqApproximation}\ when the size of the
referential set tends to infinite.

\section{Quality of the convergence and Monte Carlo approximation of the
$\mathcal{F}^{A}$ QFM}

In section \ref{ComputationalAlgorithms} we will present some computational
exact implementations of the $\mathcal{F}^{A}$ QFM for evaluating the most
common linguistic quantifiers. We advance that the complexity of the exact
implementation of the $\mathcal{F}^{A}$ QFM is $O\left(  m^{2}\right)  $ for
unary quantifiers, $O\left(  m^{3}\right)  $ for binary proportional
quantifiers and $O\left(  m^{r+1}\right)  $ in the general case, being $r$ the
number of boolean combinations that are necessary for the definition of the
semi-fuzzy quantifier. For some applications, and specifically for quantifiers
depending on a high value of $r$, this complexity could be too high for
applying the model to big fuzzy sets.

One consequence of the result of the previous section is that the
$\mathcal{F}^{A}$ QFM can be approximated in linear time for fuzzy sets
containing a sufficiently large number of elements. But we do not know if the
proposed approximation is sufficiently accurate for problems where the exact
implementation could not be applied due to its computational demands. We will make
now a deeper analysis about the applicability of the results of previous
section for approximating the $\mathcal{F}^{A}$ QFM, connecting them with a
proposal to use a Monte Carlo simulation. Let us consider the following example:

\begin{example}
\label{ExampleConvergence}Let us consider a fuzzy set $X=\left\{
0.5/e_{1},\ldots,0.5/e_{m}\right\}  $. In this situation, the probability
distribution subjacent to the $\mathcal{F}^{A}$ QFM is a binomial distribution
with parameters $\left(  m,0.5\right)  $. Let us consider a trapezoidal
function $T_{0.5,0.6,\infty,\infty}\left(  x\right)  $ and the unary
semi-fuzzy quantifier defined as $Q\left(  Y\right)  =T_{0.5,0.6,\infty
,\infty}\left(  \left\vert Y\right\vert \right)  $. The following table
compares the result of the application of the $\mathcal{F}^{A}$ QFM\thinspace
with its approximation by means of the Zadeh's model:\newline%
\begin{center}
\small
\begin{tabular}[c]{l c c}
\toprule
$m,X$ & $F^{A}\left(  X\right)  $ & $f_{Q}\left(\overline{X}\right)  $\\
\midrule
$50,X=\left\{  \underset{50}{\underbrace{0.5,\ldots,0.5}}\right\}$ & $0.260$ & $0$\\
\midrule
$100,X=\left\{  \underset{100}{\underbrace{0.5,\ldots,0.5}}\right\}$ & $0.195$ & $0$\\
\midrule
$500,X=\left\{  \underset{500}{\underbrace{0.5,\ldots,0.5}}\right\}$ & $0.089$ & $0$\\
\bottomrule
\end{tabular}
\end{center}
\vspace{4mm}

\end{example}

Previous example proves that, even for a large fuzzy set containing 500
elements, the error of the approximation is not negligable for a semi-fuzzy
quantifier defined by means of a fuzzy number that seems very plausible from a
practical viewpoint. Moreover, the error will be greater for a semi-fuzzy
quantifier definfed by means of a fuzzy number with a higher slope.

We will now introduce a theorem applicable to the poisson binomial
distribution \cite[page 263]{Degroot88}.

\begin{theorem} Central limit theorem applied to Bernoulli variables. Let
$X_{1},\ldots,X_{m}$ be independent random variables, each $X_{i}$ following a
Bernoulli distribution with parameter $p_{i}$. Moreover, let us suppose that
the infinite sum $\sum_{i=1}^{\infty}p_{i}\left(  1-p_{i}\right)  $ is
divergent and let $Y_{m}$ be%
\[
Y_{m}=\frac{\sum_{i=1}^{m}X_{i}-\sum_{i=1}^{n}p_{i}}{\left(  \sum_{i=1}%
^{m}p_{i}q_{i}\right)  ^{1/2}}.%
\]
Then
\[
\lim_{n\rightarrow\infty}\Pr\left(  Y_{m}\leq x\right)  =\Phi\left(  x\right)
\]
where $\Phi\left(  x\right)  $ is the standard normal distribution function.
\end{theorem}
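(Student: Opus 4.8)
The plan is to recognize this as a special case of the Lindeberg--Feller central limit theorem for triangular arrays and to verify its hypothesis directly, exploiting the fact that Bernoulli summands are uniformly bounded. First I would center the variables: put $Z_i = X_i - p_i$, so that $E[Z_i]=0$ and $\operatorname{Var}(Z_i)=p_i q_i$, and write $s_m^2 = \sum_{i=1}^m p_i q_i$, which diverges to $\infty$ by hypothesis. With this notation $Y_m = s_m^{-1}\sum_{i=1}^m Z_i$ is exactly the normalized sum to which the theorem applies, and the claim reduces to $Y_m \xrightarrow{d} N(0,1)$.

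The key step is to check a Lyapunov (equivalently, here, Lindeberg) condition. Since $X_i \in \{0,1\}$ and $p_i \in [0,1]$, we have $|Z_i| = |X_i - p_i| \le 1$ almost surely, hence $E[|Z_i|^3] \le E[Z_i^2] = p_i q_i$. Summing, $\sum_{i=1}^m E[|Z_i|^3] \le s_m^2$, so the Lyapunov ratio with exponent $2+\delta$, $\delta=1$, satisfies
\[
\frac{1}{s_m^{3}}\sum_{i=1}^m E[|Z_i|^3] \;\le\; \frac{s_m^2}{s_m^3} \;=\; \frac{1}{s_m} \xrightarrow[m\to\infty]{} 0,
\]
precisely because $s_m^2 \to \infty$. (Equivalently, for the Lindeberg condition: once $s_m > 1/\varepsilon$ the events $\{|Z_i| > \varepsilon s_m\}$ are empty, so the Lindeberg sum is identically $0$ for all large $m$; no truncation step is needed.) Thus the Lindeberg--Feller hypothesis holds, and the theorem gives $Y_m \xrightarrow{d} N(0,1)$; since the limit distribution function $\Phi$ is continuous on all of $\mathbb{R}$, convergence in distribution means $\lim_{m\to\infty}\Pr(Y_m \le x) = \Phi(x)$ for every $x$, which is the assertion.

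If a self-contained argument is preferred, the alternative is to go through characteristic functions and L\'evy's continuity theorem: writing $\varphi_{Z_i}(t) = e^{-itp_i}(q_i + p_i e^{it})$ and $\varphi_{Y_m}(t) = \prod_{i=1}^m \varphi_{Z_i}(t/s_m)$, one Taylor-expands $\log\varphi_{Z_i}(t/s_m) = -\tfrac12 p_i q_i t^2/s_m^2 + R_i$ with $|R_i| \le C|t|^3 p_i q_i / s_m^3$ using $|Z_i|\le 1$; summing and using $\sum_i p_i q_i/s_m^2 = 1$ together with $\sum_i p_i q_i/s_m^3 = 1/s_m \to 0$ yields $\log \varphi_{Y_m}(t) \to -t^2/2$, i.e. $\varphi_{Y_m}(t) \to e^{-t^2/2}$, and the continuity theorem concludes. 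The only genuine subtlety — and the place to be careful — is that because the $X_i$ are not identically distributed, it is the divergence of $\sum p_i q_i$ (not merely $m\to\infty$) that must drive both the normalization and the vanishing of the remainder term; past that point the argument is routine, since boundedness of Bernoulli variables makes every tail estimate trivial.
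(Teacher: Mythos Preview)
Your argument is correct: centering to $Z_i = X_i - p_i$, the uniform bound $|Z_i|\le 1$ makes the Lyapunov condition (and a fortiori the Lindeberg condition) immediate once $s_m^2 = \sum_{i=1}^m p_i q_i \to \infty$, and the Lindeberg--Feller CLT delivers the conclusion. The characteristic-function alternative you sketch is equally valid and rests on the same two ingredients --- boundedness of the summands and divergence of the variance.

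As for the comparison you were asked about: the paper does \emph{not} prove this theorem at all. It is quoted as a classical result with a textbook citation (\cite[page 263]{Degroot88}) and then used as a tool in the subsequent discussion of approximating the Poisson binomial by a normal. So you have in fact supplied more than the paper does here --- the paper treats the statement as background, while you have given a self-contained verification of the Lindeberg--Feller hypothesis in this bounded-summand setting.
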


In practical situations, this result allow us to approximate a poisson
binomial distribution by a normal distribution when the variance of the
distribution is high (take into account that we are interpreting the
cardinality of a fuzzy set as a poisson binomial distribution). Cases of a low
variance for poisson binomial distributions with a high number of parameters
will be associated to situations in which most parameters are really close to
0 or 1\footnote{For many quantifiers, results of the $\mathcal{F}^{A}$ QFM and
of the Zadeh's model will be extremely close even for small fuzzy sets. There
are two main reasons for that. Once is that the variance of the probability
projections associated to the different boolean combinations was very low and
as a consequence, that the probability distributions would be very
concentrated around the average. The other situation is that the fuzzy number
used in the definition of the semi-fuzzy quantifier was approximately linear
in the area in which much of the probability is concentrated. In this
situation, the symmetry of the normal distribution (to which the poisson
binomial distribution converges) will cause that the result of the evaluation
will be really close to the result of the Zadeh's model.}. In these cases, the
approximation by means of the normal distribution will be poor, but the
probability distribution will be extremely concentrated around the average,
which will guarantee an even better approximation by means of Montecarlo.

Let us consider again a fuzzy set $X=\left\{  0.5/e_{1},\ldots,0.5/e_{m}%
\right\}  $ whose underlying probability distribution following the
$\mathcal{F}^{A}$ QFM interpretation is a binomial distribution with
parameters $\left(  m,0.5\right)  $. We will compute the confidence intervals
for the $0.95$ and $0.99$ probability mass approximating the underlying
probability of the $\mathcal{F}^{A}$ QFM by means of a normal distribution.

\begin{example}
The following table shows the confidence intervals for the underlying
probability distribution of a fuzzy set $X=\left\{  0.5/e_{1},\ldots
,0.5/e_{m}\right\}  $, following the $\mathcal{F}^{A}$ QFM
interpretation:\newline%
\begin{center}
\small
\begin{tabular}[c]{l l l l }
\toprule
$m$ & $\overline{X}$ & $\frac{X}{m},0.95$ & $\frac{X}{m},0.99$\\
\midrule
$50$ & $25$ & $\left(  0.36,0.64\right)  $ & $\left(  0.32,0.68\right)$\\
\midrule
$100$ & $50$ & $\left(  0.40,0.60\right)  $ & $\left(  0.37.0.63\right)$\\
\midrule
$1000$ & $500$ & $\left(  0.47,0.53\right)  $ & $\left(  0.45,0.54\right)$\\
\midrule
$10000$ & $5000$ & $\left(  0.49.0.51\right)  $ & $\left(  0.49,0.51\right)$\\
\bottomrule
\end{tabular}
\end{center}
\vspace{4mm}

\end{example}

Previous example shows that the probability distribution is really
concentrated around the average for medium size fuzzy sets. In previous
example, we have chosen the binomial distribution of parameter $0.5$ as it is
the highest variance distribution in the family of poisson binomial 
distributions. Take into account that for a poisson binomial distribution
$\mathbf{B}$, $var\left(  \mathbf{B}\right)  =\sum_{i=1}^{m}p_{i}\left(
1-p_{i}\right)  $, and that the maximum of $p_{i}\left(  1-p_{i}\right)  \,$is
obtained for $p_{i}=0.5$.

The idea of the Monte Carlo simulation is simply to generate, for each $X_{i}%
$, a random binary vector using a Bernoulli trial of probability $\mu_{X_{i}%
}\left(  j\right)  $ for each $e_{j}$. Previous example indicates that the
$f^{j\prime}\left(  i_{s}\right)  $ projections of $f^{\prime}$ would be very
concentrated around the average when the size of the referential set contains
a large number of elements, which will allow to expect a really good
approximation of the $\mathcal{F}^{A}$ QFM by means of a Monte Carlo
simulation. Moreover, a Monte Carlo simulation can be easily parallelized. In section \ref{MonteCarloApproximation} the algorithm for unary quantifiers is
presented. The extension to higher arity quantifiers is trivial.

\section{Efficient implementation of the $\mathcal{F}^{A}$
model\label{ComputationalAlgorithms}}

For quantititative quantiers is possible to develop polynomial algorithms for
the $\mathcal{F}^{A}$ DFS. Let us remember that the class of quantitative
quantifiers is composed of the semi-fuzzy quantifiers that are invariant under
automorphims \cite[section 4.13]{Glockner06Libro}, and that they can be
expressed as a function of the cardinalities of their arguments and their
boolean combinations. The class of quantitative quantifiers include the most
interesting ones for applications, and in particular the common
\textit{absolute, proportional and comparative} quantifiers.

\subsection{Quantitative unary quantifiers}

Let $Q:\mathcal{P}\left(  E\right)  \rightarrow\mathbf{I}$ be an unary
semi-fuzzy quantifier defined over a referential set $E^{m}=\left\{
e_{1},\ldots,e_{m}\right\}  $. Quantitative unary semi-fuzzy quantifiers can
always be expressed by means of a function $q:\left\{  0,\ldots,\left\vert
E\right\vert \right\}  \rightarrow\mathbf{I}$
(theorem\ \ref{TeoremaCuantitativo}); that is, a function that goes from
cardinality values in $\mathbf{I}$. In this way, there exists $q$ such that
$q\left(  j\right)  =Q\left(  Y_{j}\right)  $ where $Y_{j}\in\mathcal{P}%
\left(  E\right)  $ is an arbitrary set of cardinality $j$ ($\left\vert
Y_{j}\right\vert =j$).

Let $X\in\mathcal{P}\left(  E\right)  $ be a fuzzy set. Then,%

\begin{align*}
\mathcal{F}^{A}\left(  Q\right)  \left(  X\right) 
& =\sum_{Y \in \mathcal{P}\left(  E\right)  }m_{X}\left(  Y\right)  Q\left(  Y\right) \\
&  =\sum_{\substack{Y \in \mathcal{P}\left(  E\right) \\ | \; \left\vert Y\right\vert =0}} m_{X}\left(  Y\right)  Q\left(  Y\right) + \ldots + \sum_{\substack{Y\in\mathcal{P}\left(E\right) \\ | \; \left\vert Y\right\vert =m}} m_{X}\left(  Y\right)  Q\left(  Y\right) \\
&  =\sum_{\substack{Y \in \mathcal{P}\left(  E\right) \\ | \; \left\vert Y\right\vert =0}} m_{X}\left(  Y\right)  q\left(  0\right) + \ldots + \sum_{\substack{Y\in\mathcal{P}\left(E\right) \\ | \; \left\vert Y\right\vert =m}} m_{X}\left(  Y\right)  q\left(  m\right) \\
&  =\sum_{j=0}^{m}\Pr\left(  card_{X}=j\right)  q\left(  j\right)
\end{align*}

The algorithm we will present uses the fact that it is possible to compute the
probability $\Pr_{E^{m}}\left(  card_{X}=j\right)  ,$ $j=0,\ldots,m$ for a
referential set $E^{m}$ of $m$ elements using the probabilities $\Pr_{E^{m-1}%
}\left(  card_{X^{E^{m-1}}}=j\right)  ,j=0,\ldots,m-1$ where $E^{m-1}=\left\{
e_{1},\ldots,e_{m-1}\right\}  $ and $X^{E^{m-1}}$ is the projection of $X$
over $E^{m-1}$ (that is, the fuzzy set $X$ without the element $e_{m}$). In
this way, it is easy to develop a recursive function for computing the
probabilities of the cardinalities in $E^{m}$.

In the case of a referential set of one element ($E^{1}=\left\{
e_{1}\right\}  $) the probabilities of the cardinalities of a fuzzy set
$X\in\mathcal{P}\left(  E^{1}\right)  $ are simply:%
\begin{align*}
\Pr\left(  card_{X}=0\right)  & = m_{X}\left(\varnothing\right)  =1-\mu_{X}\left(  e_{1}\right)\\
\Pr\left(  card_{X}=1\right)  & = m_{X}\left(\left\{  e_{1}\right\}  \right)  =\mu_{X}\left(  e_{1}\right)
\end{align*}

Let us suppose now a referential set of $m+1$ elements ($E^{m+1}=\left\{
e_{1},\ldots,e_{m+1}\right\}  $), let $X\in\widetilde{\mathcal{P}}\left(
E^{m+1}\right)  $ be a fuzzy set on $E^{m+1}$, $E^{m}=\left\{  e_{1}%
,\ldots,e_{m}\right\}  $ and $X^{E^{m}}\in\widetilde{\mathcal{P}}\left(
E^{m}\right)  $ the projection of $X$ in $E^{m}$; that is, $\mu_{X^{E^{m}}%
}\left(  e_{j}\right)  =\mu_{X}\left(  e_{j}\right)  ,1\leq j\leq m$.
Moreover, let us suppose we know the probabilities of the cardinalities
associated to $X^{E^{m}}$ ($\Pr\left(  card_{X^{E^{m}}}=0\right)  ,\ldots
,\Pr\left(  card_{X^{E^{m}}}=m\right)  $). Now, we will compute the
probabilities of $X$ using the probabilities of the cardinalities on
$X^{E^{m}}$:

\noindent \textbf{Case 1:} $\Pr\left(  card_{X}=0\right)  $%
\begin{align*}
\Pr\left(  card_{X}=0\right) & = \sum_{\substack{Y\in\mathcal{P}\left(  E^{m+1} \right) | \; \left\vert Y\right\vert =0}} m_{X}\left(  Y\right)  \\
&  = m_{X}\left(\varnothing\right) \\
&  = \left(  1-\mu_{X}\left(  e_{1}\right)  \right)  \ldots\left(  1-\mu
_{X}\left(  e_{m}\right)  \right)  \left(  1-\mu_{X}\left(  e_{m+1}\right)
\right) \\
&  = m_{X^{E^{m}}}\left(  \varnothing\right)  \left(  1-\mu_{X}\left(e_{m+1}\right)  \right) \\
&  = \Pr\left(  card_{X^{E^{m}}}=0\right)  \left(  1-\mu_{X}\left(e_{m+1}\right)  \right)
\end{align*}

\noindent \textbf{Case 2:} $\Pr\left(  card_{X}=m+1\right)  $%
\begin{align*}
\Pr\left(  card_{X}=m+1\right) & =\sum_{\substack{Y\in\mathcal{P}\left(E^{m+1}\right)  | \; \left\vert Y\right\vert =m+1}} m_{X}\left(  Y\right) \text{\hspace{18mm}} \\
&  = m_{X}\left(  E^{m+1}\right) \\
& = \mu_{X}\left(  e_{1}\right)  \ldots\mu_{X}\left(  e_{m}\right)  \mu_{X}\left(  e_{m+1}\right) \\
& = m_{X^{E^{m}}}\left(  E^{m}\right)  \mu_{X}\left(  e_{m+1}\right) \\
& = \Pr\left(  card_{X^{E^{m}}}=m\right)  \mu_{X}\left(  e_{m+1}\right)
\end{align*}

\noindent \textbf{Case 3:} \textbf{\ }$\Pr\left(  card_{X}=j\right)  ,0<j<m+1$%
\begin{align*}
\Pr\left(  card_{X}=j\right) & =\sum_{\substack{Y\in\mathcal{P}\left(  E^{m+1}\right) \\ | \; \left\vert Y\right\vert =j}} m_{X}\left(  Y\right) \\
&  = \sum_{\substack{Y\in\mathcal{P}\left(  E^{m+1}\right) \\ | \; \left\vert Y\right\vert =j\wedge e_{m+1}\notin Y}} m_{X}\left(  Y\right) + \sum_{\substack{Y\in\mathcal{P}\left(E^{m+1}\right) \\ | \; \left\vert Y\right\vert =j\wedge e_{m+1}\in Y}} m_{X}\left( Y\right) \\
&  = \sum_{\substack{Y\in\mathcal{P}\left(  E^{m}\right) \\ | \; \left\vert Y\right\vert =j}} m_{X^{E^{m}}}\left(  Y\right)  \left(  1-\mu_{X}\left(  e_{m+1}\right) \right)  + \ldots 
  + \sum_{\substack{Y\in\mathcal{P}\left(  E^{m}\right) \\ | \; \left\vert Y\right\vert =j-1}} m_{X}\left(  Y\right)  \mu_{X}\left(  e_{m+1}\right) \\
& = \Pr\left(  card_{X^{E^{m}}}=j\right)  \left(  1-\mu_{X}\left( e_{m+1}\right)  \right) 
 +\Pr\left(  card_{X^{E^{m}}}=j-1\right)  \mu _{X}\left(  e_{m+1}\right)
\end{align*}

Previous computations are summarized in expression \ref{EqAlgoritmoUnarioFA_1}%
. In algorithm \ref{alg:AlgoritmoUnarioFA}, the code for evaluating $\mathcal{F}%
^{A}\left(  Q\right)  \left(  X\right)  $ is presented. Complexity of the
algorithm is $O\left(  n^{2}\right)  $.%
\begin{align}
& \Pr\left(  card_{X}=j\right) \label{EqAlgoritmoUnarioFA_1}  =\left\{
\begin{array}
[c]{lll}%
\Pr\left(  card_{X^{E^{m}}}=0\right)  \left(  1-\mu_{X}\left(  e_{m+1}\right)
\right)  & : & j=0\\%
\Pr\left(  card_{X^{E^{m}}}=j\right)  \left(  1-\mu_{X}\left(  e_{m+1}\right) \right) & &\\
\; +\Pr\left(  card_{X^{E^{m}}}=j-1\right)  \mu_{X}\left(  e_{m+1}\right) & : & 1\leq j\leq m\\
\Pr\left(  card_{X^{E^{m}}}=m\right)  \mu_{X}\left(  e_{m+1}\right)  & : &
j=m+1
\end{array}
\right.  \nonumber%
\end{align}

\begin{algorithm}[!t]
\DontPrintSemicolon
\SetNoFillComment
\LinesNumbered
\linespread{1.0}
{ \footnotesize
\KwIn{The fuzzy set $X[0,\dots,m-1]$, $m \geq 1$, and a quantitative unary semi-fuzzy quantifier $q:\{0,\dots, m\} \rightarrow \textbf{I}$.}
\KwOut{The $result$ of the quantifier.}
\tcc{Assume all vector elements are initialized to zero}
$pr\_aux\_i \gets 0$\;
$pr\_aux\_i\_minus\_1 \gets 0$\;
$pr \gets [0,\dots,m]$\;
$result \gets 0$\;
$pr[0] \gets 1$\;
\For{$j \gets 0$; $j < m$; $j$\textit{++}} {
	$pr\_aux\_i \gets pr[0]$\;
	$pr[0] \gets (1 - X[j]) \times pr\_aux\_i$\;
	$pr\_aux\_i\_minus\_1 \gets pr\_aux\_i$\;
	\For{$i \gets 1$; $i \le j$; $i$\textit{++}} {
		$pr\_aux\_i \gets pr[i]$\;
		$pr[i] \gets (1 - X[j]) \times pr\_aux\_i + X[j] \times pr\_aux\_i\_minus\_1$\;
		$pr\_aux\_i\_minus\_1 \gets pr\_aux\_i$\;
	}
	$pr[j+1] \gets X[j] \times pr\_aux\_i\_minus\_1$\;
}
\For{$j \gets 0$; $j <= m$; $j$\textit{++}} {
	$result \gets result + pr[j] \times q(j)$\;
}
\KwRet{result}\;
}
\caption{Algorithm for computing unary quantitative quantifiers $\mathcal{F}^{A}(Q)(X)$}
\label{alg:AlgoritmoUnarioFA}
\end{algorithm}

\subsection{Conservative binary
quantifiers\label{ConservativeBinaryQuantifiers}}

In this section we will present the algorithm for evaluating conservative
binary quantifiers \cite{Keenan97VanBenthem}, which includes proportional
quantitative quantifiers as a particular case. The strategy we are going to
detail can be easily generalized for implementing other kinds of quantitative quantifiers.

A semi-fuzzy conservative quantitative quantifier $Q\left(  Y_{1}%
,Y_{2}\right)  $ depends on the cardinalities of $\left\vert Y_{1}\right\vert
$ and $\left\vert Y_{1}\cap Y_{2}\right\vert $; that is, there exists a
function $q:\left\{  0,\ldots,m\right\}  ^{2}\rightarrow\mathbf{I}$ such that:%
\[
Q\left(  Y_{1},Y_{2}\right)  =q\left(  \left\vert Y_{1}\right\vert ,\left\vert
Y_{1}\cap Y_{2}\right\vert \right)
\]
for all $Y_{1},Y_{2}\in\mathcal{P}\left(  E\right)  $.

Let $X_{1},X_{2}\in\mathcal{P}\left(  E\right)  $ be two fuzzy sets. By%
\begin{align*}
&  \Pr\left(  card_{X_{1},X_{1}\cap X_{2}}=\left(  j,k\right)  \right)   =\sum_{\substack{Y_{1}\in\mathcal{P}\left(  E\right) \; | \\ Y_{2}\in\mathcal{P}%
		\left(  E\right)}} 
	\sum_{\substack{\left\vert Y_{1}\right\vert =j \; \wedge \\ \left\vert Y_{1}\cap Y_{2}\right\vert =k}} m_{X_{1}}\left(  Y_{1}\right)  m_{X_{2}}\left(Y_{2}\right)  ,0\leq k,j\leq m
\end{align*}
we will denote the probability of choosing a pair of representatives
$Y_{1},Y_{2}\in\mathcal{P}\left(  E\right)  $ of $X_{1},X_{2}\in
\widetilde{\mathcal{P}}\left(  E\right)  $ such that$\left\vert Y_{1}%
\right\vert =j$ and $\left\vert Y_{1}\cap Y_{2}\right\vert =k$. It should be
noted that for $k>j$ $\Pr\left(  card_{X_{1},X_{1}\cap X_{2}}=\left(
j,k\right)  \right)  =0$.

Let $X_{1},X_{2}\in\widetilde{\mathcal{P}}\left(  E^{m}\right)  $ be two fuzzy
sets over $E^{m}=\left\{  e_{1},\ldots,e_{m}\right\}  $. And let us suppose we
know the probabilities:%
\[
\Pr\left(  card_{X_{1},X_{1}\cap X_{2}}=\left(  j,k\right)  \right)
\]
for all $j,k$ such that $0\leq j,k\leq m$. Let us suppose now we add an element
$e_{m+1}$ to the referential.\ That is, the new referential set is
$E^{m+1}=\left\{  e_{1},\ldots,e_{m+1}\right\}  $. And let $X_{1}^{\prime
},X_{2}^{\prime}\in\widetilde{\mathcal{P}}\left(  E^{m+1}\right)  $ be two
fuzzy sets in $E^{m+1}$ resulting of adding $e_{m+1}$. That is, $\left(
X_{1}^{\prime}\right)  ^{E^{m}}=X_{1},\left(  X_{2}^{\prime}\right)  ^{E^{m}%
}=X_{2}$; where by $\left(  {}\right)  ^{E^{m}}$ we are denoting the
projections of $X_{1}^{\prime},X_{2}^{\prime}$ over $E^{m}$.

By definition of the $\mathcal{F}^{A}$ DFS, belongniness of $e_{m+1}$ to the
set $X_{i}^{\prime},i=1,2$ is an event of probability $\mu_{X_{_{i}}^{\prime}%
}\left(  e_{m+1}\right)  $ and this probability is independent of the
belongniness of other elements. Then, if the cardinality of $X_{1},\,X_{2}%
$\ were $\left(  j,k\right)  $ and it would happen that $e_{m+1}\in
X_{1}^{\prime}$ and $e_{m+1}\in X_{2}^{\prime}$ then the cardinality of
$X_{1}^{\prime}\,X_{2}^{\prime}$ would be $\left(  j+1,k+1\right)  $.

Let $0\leq j,k\leq m$ be arbitray indexes and let us consider the probability
$\Pr(card_{X_{1},X_{1}\cap X_{2}}$ $=\left(  j,k\right)  )$.\ When we include
the element $e_{m+1}$ the probability of $e_{m+1}$ contributes to the
probability $\Pr\left(  card_{X_{1}^{\prime},X_{1}^{\prime}\cap X_{2}^{\prime
}}=\left(  j,k\right)  \right)  $ with:
\begin{align*}
&  \left(  1-\mu_{X_{1}^{\prime}}\left(  e_{m+1}\right)  \right)  \left(
1-\mu_{X_{2}^{\prime}}\left(  e_{m+1}\right)  \right) 
\Pr\left(
card_{X_{1},X_{1}\cap X_{2}}=\left(  j,k\right)  \right)  
+ \left(  1-\mu_{X_{1}^{\prime}}\left(  e_{m+1}\right)  \right)  \mu
_{X_{2}^{\prime}}\left(  e_{m+1}\right)  
\Pr\left(  card_{X_{1},X_{1}\cap
X_{2}}=\left(  j,k\right)  \right) \\
&  =\left(  1-\mu_{X_{1}^{\prime}}\left(  e_{m+1}\right)  \right)  \Pr\left(
card_{X_{1},X_{1}\cap X_{2}}=\left(  j,k\right)  \right)
\end{align*}
that is, if we know that the cardinality $card_{X_{1},X_{1}\cap X_{2}}$ is
$\left(  j,k\right)  $ then the cardinality $card_{X_{1}^{\prime}%
,X_{1}^{\prime}\cap X_{2}^{\prime}}$ would be $\left(  j,k\right)  $ with
probability $\left(  1-\mu_{X_{1}^{\prime}}\left(  e_{m+1}\right)  \right)  $.
It should be noted that if $e_{m+1}\notin X_{1}^{\prime}$ and $e_{m+1}\in
X_{2}^{\prime}$ then $e_{m+1}\notin X_{1}^{\prime}\cap X_{2}^{\prime}$.

Similarly, the contribution to $\Pr\left(  card_{X_{1}^{\prime},X_{1}^{\prime
}\cap X_{2}^{\prime}}=\left(  j+1,k\right)  \right)  $ will be:%
\[
\mu_{X_{1}^{\prime}}\left(  e_{m+1}\right)  \left(  1-\mu_{X_{2}^{\prime}%
}\left(  e_{m+1}\right)  \right)  \Pr\left(  card_{X_{1},X_{1}\cap X_{2}%
}=\left(  j,k\right)  \right)
\]
that is, as the cardinality $card_{X_{1},X_{1}\cap X_{2}}$ is $\left(
j,k\right)  $ then the cardinality $card_{X_{1}^{\prime},X_{1}^{\prime}\cap
X_{2}^{\prime}}$ will be $\left(  j+1,k\right)  $ with probability $\mu
_{X_{1}^{\prime}}\left(  e_{m+1}\right)  \left(  1-\mu_{X_{2}^{\prime}}\left(
e_{m+1}\right)  \right)  $.

And the contribution to the probability $\Pr\left(  card_{X_{1}^{\prime}%
,X_{1}^{\prime}\cap X_{2}^{\prime}}=\left(  j+1,k+1\right)  \right)  $ will
be:%
\[
\mu_{X_{1}^{\prime}}\left(  e_{m+1}\right)  \mu_{X_{2}^{\prime}}\left(
e_{m+1}\right)  \Pr\left(  card_{X_{1},X_{1}\cap X_{2}}=\left(  j,k\right)
\right)
\]
that is, as the cardinality $card_{X_{1},X_{1}\cap X_{2}}$ is $\left(
j,k\right)  $ then the cardinality $card_{X_{1}^{\prime},X_{1}^{\prime}\cap
X_{2}^{\prime}}$ will be $\left(  j+1,k+1\right)  $ with probability
$\mu_{X_{1}^{\prime}}\left(  e_{m+1}\right)  \mu_{X_{2}^{\prime}}e_{m+1}$.

Using previous expressions a polynomial algorithm can be developed to evaluate
conservative semi-fuzzy quantifiers (table \ref{alg:AlgoritmoConservativoFA}).
Complexity of the algorithm is $O\left(  n^{3}\right)  $.%

\begin{algorithm}[!t]
\DontPrintSemicolon
\SetNoFillComment
\LinesNumbered
\linespread{1.0}
{ \footnotesize
\KwIn{The fuzzy sets $X_{1}[0,\dots,m-1]$ and $X_{2}[0,\dots,m-1]$,  $m \geq 1$, and  a binary quantitative conservative semi-fuzzy quantifier $q:\mathbb{N}^{2} \rightarrow \mathbf{I}$.}
\KwOut{The $result$ of the quantifier.}
\tcc{Assume all vector elements are initialized to zero}
$card \gets [0, \dots, m][0, \dots, m]$\;
$card\_aux \gets [0, \dots, m][0, \dots, m]$\;
$result \gets 0$\;
$i \gets 0$\;
$card[0,0] \gets 1$\;
\While{$i < m$} {
	$clear(card\_aux)$\;
	$v\_i\_00\ \gets (1 - X_1[i]) \times (1 - X_2[i])$\;
	$v\_i\_01\ \gets (1 - X_1[i]) \times X_2[i]$\;
	$v\_i\_10\ \gets X_1[i] \times (1 - X_2[i])$\;
	$v\_i\_11 \gets X_1[i] \times X_2[i]$\;
	\For{$j \gets 0; j \le i; j$++} {
		\For{$k \gets 0; k \le j; k$++} {
			$card\_aux[j,k] \gets card\_aux[j,k] + (v\_i\_00 + v\_i\_01) \times card[j,k]$\;
			$card\_aux[j+1,k] \gets card\_aux[j+1,k] + v\_i\_10*card[j,k]$\;
			$card\_aux[j+1,k+1] \gets card\_aux[j+1,k+1] + v\_i\_11*card[j,k]$\;
		}
	}
	$copy(card\_aux, card)$\;	
	$i \gets i + 1$\;
}
\For{$j \gets 0; j \le m; j$++} {
	\For{$k \gets 0; k \le j; k$++} {
		$result \gets result + card[j,k] \times q(j,k)$\;
	}
}
\KwRet{result}\;
}
\caption{Algorithm for computing binary quantitative conservative quantifiers $\mathcal{F}^{A}\left( Q\right) \left( X_{1},X_{2}\right)$.}
\label{alg:AlgoritmoConservativoFA}
\end{algorithm}

It is not difficult to generalize the strategy we have presented to other
quantifiers. For example, let us consider the case of a ternary comparative
quantifier (e.g., \textit{\textquotedblleft the number of brilliant investors
that earn high salaries is about twice the number of brilliant investors that
earn low salaries\textquotedblright}). For this example, the semi-fuzzy
quantifier will follow the expression $Q\left(  Y_{1},Y_{2},Y_{3}\right)
=q\left(  \left\vert Y_{1}\cap Y_{2}\right\vert ,\left\vert Y_{1}\cap
Y_{3}\right\vert \right)  $ where $q:\left\{  0,\ldots,m\right\}
^{2}\rightarrow\mathbf{I}$ is the fuzzy number we use to model\ \textit{`about
twice'. }As this quantifier only depends on two boolean combinations, a binary
probability matrix will be enough to compute and update the probabilities of
the cardinalities. In this way, the complexity of the resulting algorithm will
be again $O\left(  n^{3}\right)  $.

In the general case, if a quantitative semi-fuzzy quantifier $Q:\mathcal{P}%
\left(  E\right)  ^{n}\longrightarrow\mathbf{I}$ depens on $r$ boolean
combinations its complexity will be $O\left(  m^{r+1}\right)  $,\ one
iteration to go through the input vectors and $r$ to go through the
probability matrix of cardinalities.

\subsection{Monte Carlo Approximation\label{MonteCarloApproximation}}

Monte Carlo simulation permits the approximation of the $\mathcal{F}^{A}$ QFM
when efficiency restrictions do not permit the use of the exact
implementations presented in previous sections. The idea of the Monte Carlo
simulation is simply to generate, for each $X_{i}$, a random binary vector
using a Bernoulli trial of probability $\mu_{X_{i}}\left(  j\right)  $ for
each $e_{j}$. The code for the Monte Carlo implementation of the
$\mathcal{F}^{A}$ model for an unary quantifier can be seen in table
\ref{alg:MonteCarloUnarioFA}. The extension to higher arity quantifiers is
trivial, by simply generating a random binary vector for each $X_{i}$. The
Monte Carlo approximation can be parallelized by simply dividing the number of
simulations between different processors.%


\begin{algorithm}[!t]
\DontPrintSemicolon
\SetNoFillComment
\LinesNumbered
\linespread{1.0}
{ \footnotesize
\KwIn{The fuzzy set $X[0,\dots,m-1]$, $m \geq 1$, the number of iterations $num\_simu$, and a quantitative unary semi-fuzzy quantifier $q:\{0,\dots, m\} \rightarrow \textbf{I}$.}
\KwOut{The $result$ of the quantifier.}
\tcc{Assume all vector elements are initialized to zero}
$x\_bino \gets [0,\dots,m]$\;
$v\_simu \gets [0,\dots,m]$\; 
\For{$i \gets 0$; $i < num\_simu$; $i$\textit{++}} {
	\For{$j \gets 0$; $j < m$; $j$\textit{++}} {
		\tcc{Simulation of a bernoulli trial with probability $X[j]$}
		$x\_bino[j] \gets \textit{BernouilliTrial}(X[j])$\; 
	}
	$v\_simu[Sum(x\_bino)]$++\;	
}
\tcc{Normalization to define a probability}
$v\_simu \gets v\_simu / num\_simu$\; 
$result \gets Sum(q^{T} \times v\_simu)$\;
\KwRet{result}\;
}
\caption{Monte Carlo Approximation for absolute unary quantifiers $\mathcal{F}^{A}\left( Q\right) \left( X\right)$.}
\label{alg:MonteCarloUnarioFA}
\end{algorithm}

\section{Conclusions}

In this paper we have presented several relevant results about the
$\mathcal{F}^{A}$ QFM. First, we summarized some of the most relevant
properties fulfilled by this model, in order to give a comprehensive and
integrative summary of its behavior. After that, we introduced a convergence result that guarantees that, in the limit case, the model
converges to the Zadeh's model for semi-fuzzy quantifiers defined by means of
proportional continuous fuzzy numbers. Moreover, this result is more
general than the specific convergence to the Zadeh's model, being applicable to
every proportional quantitative quantifier. For sufficiently
big fuzzy sets, this will allow to approximate the $\mathcal{F}^{A}$ QFM in
linear time.

However, the rate of convergence could be too slow to make this approximation
useful in most applications. For this reason, we also provided the exact
computational implementation for some of the most common quantifiers (unary
and proportional quantitative quantifiers), introducing a scheme that can be
easily extended to other types of quantifiers. Complexity of the exact
implementation is $O\left(  m^{r+1}\right)  $, being $r$ the number of boolean
combinations that are involved in the definition of the semi-fuzzy quantifier.

Finally, the convergence result has a strong implication. The underlying
probability of the $\mathcal{F}^{A}$ QFM will concentrate around the average
of the boolean combinations necessary to define the semi-fuzzy quantifier, as we increase the number of elements of the input
fuzzy sets. This property was used to propose a Monte Carlo approximation of
the $\mathcal{F}^{A}$ QFM that can be used when the complexity of the exact
implementation is too elevate to compute an exact solution.



\section*{Acknowledgment}This work has received financial support from the Consellería de Cultura, Educación e Ordenación Universitaria (accreditation 2016-2019, ED431G/08 and reference competitive group 2019-2021, ED431C 2018/29) and the European Regional Development Fund (ERDF) and is also supported by the Spanish Ministry of Economy and Competitiveness under the project TIN2015-73566-JIN.

\bibliographystyle{plain}
\bibliography{biblio}

\appendices
\section{Appendix: Computation of the probability of the projections of the
boolean combinations\label{AnnexProof}}

In this section we will develop an analyticial proof to show that the
probability projection $j$ of $f\left(  i_{1},\ldots,i_{K}\right)  $ follows a
binomial poisson distribution.

For developing the proof, we will need to introduce the complete definition of
some of the axioms of the DFS framework and of its derived properties.

The next definition allows the construction of a new semi-fuzzy quantifier
that simply permutes the arguments in the input:

\begin{definition}
[\textbf{Argument permutations}]\label{ArgPerm}\cite[Definition 4.13]%
{Glockner06Libro} Let $Q:\mathcal{P}\left(  E\right)  ^{n}\rightarrow
\mathbf{I}$ be a semi-fuzzy quantifier and $\beta:\left\{  1,\ldots,n\right\}
\rightarrow\left\{  1,\ldots,n\right\}  $ a permutation. By $Q\beta
:\mathcal{P}\left(  E\right)  ^{n}\rightarrow\mathbf{I}$ we denote the
semi-fuzzy quantifier defined by:%
\[
Q\beta\left(  Y_{1},\ldots,Y_{n}\right)  =Q\left(  Y_{\beta\left(  1\right)
},\ldots,Y_{\beta\left(  n\right)  }\right)
\]
for all $Y_{1},\ldots,Y_{n}\in\mathcal{P}\left(  E\right)  $. In the case of
fuzzy quantifiers $\widetilde{Q}\beta:\widetilde{\mathcal{P}}\left(  E\right)
^{n}\rightarrow\mathbf{I}$ is defined analogously.
\end{definition}

The next definition will allow us to rewrite permutations as a combination of transpositions:

\begin{definition}
[\textbf{Trasposition}]\cite[Definition 4.14]{Glockner06Libro}%
\label{DefPropTraspArg_1} For all $n\in\mathbb{N}$ $(n>0$) and $i,j\in\left\{
1,\ldots,n\right\}  $, the transposition $\tau_{i,j}:\left\{  1,\ldots
,n\right\}  \rightarrow\left\{  1,\ldots,n\right\}  $ is defined as:%
\[
\tau_{i,j}\left(  k\right)  =\left\{
\begin{tabular}
[c]{lll}%
$i$ & $:$ & $k=j$\\
$j$ & $:$ & $k=i$\\
$k$ & $:$ & $k\neq j\wedge k\neq i$%
\end{tabular}
\ \right.
\]
for all $k\in\left\{  1,\ldots,n\right\}  $. Moreover, by $\tau_{i}$ we will
denote the transposition $\tau_{i,n}$ (that interchanges positions $i$ and
$n$). It should be noted that $\tau_{i,j}=\tau_{i}\circ\tau_{j}\circ\tau_{i}$.
\end{definition}

We also can apply \ref{DefPropTraspArg_1} to fuzzy and semi-fuzzy quantifiers:

\begin{definition}
[\textbf{Argument transpositions}]Let $Q:\mathcal{P}\left(  E\right)
^{n}\rightarrow\mathbf{I}$ be a semi-fuzzy quantifier, $n>0$. By $Q\tau
_{i}:\mathcal{P}\left(  E\right)  ^{n}\rightarrow\mathbf{I}$ we denote the
semi-fuzzy quantifier defined by:
\begin{equation*}
Q\tau_{i}\left(  Y_{1},\ldots,Y_{i-1},Y_{i},Y_{i+1},\ldots,Y_{n}\right) \\
=Q\left(  Y_{1},\ldots,Y_{i-1},Y_{n},Y_{i+1},\ldots,Y_{i}\right)
\end{equation*}
for all $Y_{1},\ldots,Y_{n}\in\mathcal{P}\left(  E\right)  $. In the case of
semi-fuzzy quantifiers $\widetilde{Q}\tau_{i}:\widetilde{\mathcal{P}}\left(
E\right)  ^{n}\rightarrow\mathbf{I}$ is defined analogously.
\end{definition}

The DFS axiomatic framework guarantees the adequate generalization of argument transpositions:

\begin{theorem}
\cite[Theorem 4.16]{Glockner06Libro}\label{DefPropTrasArg}\textbf{\ }Every DFS
$\mathcal{F}$ is compatible with argument transpositions, i.e. for every
semi-fuzzy quantifier $Q:\mathcal{P}\left(  E\right)  ^{n}$ $\rightarrow
\mathbf{I}$ $i\in\left\{  1,\ldots,n\right\}  $,%
\[
\mathcal{F}\left(  Q\tau_{i}\right)  =\mathcal{F}\left(  Q\right)  \tau_{i}%
\]

\end{theorem}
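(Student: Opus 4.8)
The plan is to reduce the statement to the case of a single transposition and then to derive that case from the functional application axiom (Z-6), which is the only DFS axiom that treats all argument positions on an equal footing.

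First I would record the reduction. By Definition~\ref{DefPropTraspArg_1} we have $\tau_{i,j}=\tau_i\circ\tau_j\circ\tau_i$, and the transpositions $\tau_{i,j}$ generate the full symmetric group on $\{1,\ldots,n\}$; hence the swaps $\tau_1,\ldots,\tau_{n-1}$ of a position with the last position already generate every argument permutation $\beta$, so $\beta=\beta_1\circ\cdots\circ\beta_r$ with each $\beta_k$ of this special form. A direct computation on the defining formula $(Q\sigma)(Y_1,\ldots,Y_n)=Q(Y_{\sigma(1)},\ldots,Y_{\sigma(n)})$ gives $(Q\sigma)\rho=Q(\rho\circ\sigma)$ (and likewise for fuzzy quantifiers), so compatibility with permutations is closed under composition: if $\mathcal{F}(Q'\sigma)=\mathcal{F}(Q')\sigma$ and $\mathcal{F}(Q'\rho)=\mathcal{F}(Q')\rho$ for every semi-fuzzy quantifier $Q'$, then $\mathcal{F}(Q(\rho\circ\sigma))=\mathcal{F}((Q\sigma)\rho)=\mathcal{F}(Q\sigma)\rho=(\mathcal{F}(Q)\sigma)\rho=\mathcal{F}(Q)(\rho\circ\sigma)$. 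It therefore suffices to prove $\mathcal{F}(Q\tau_i)=\mathcal{F}(Q)\tau_i$ for each fixed $i\in\{1,\ldots,n-1\}$, the case $i=n$ being trivial.

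For the base case I would pass to an enlarged base set made of two disjoint copies of $E$, say $E'=E\times\{1,2\}$, and choose maps $f_1,\ldots,f_n\colon E'\to E$ so that the ``relabelled'' quantifiers $Q\circ(\widehat{f_1}\times\cdots\times\widehat{f_n})$ and $(Q\tau_i)\circ(\widehat{f_1}\times\cdots\times\widehat{f_n})$ differ only by interchanging the two copies of $E$ in the slots $i$ and $n$. Applying Z-6 to both compositions would then transport the essentially trivial symmetry of the construction over $E'$ into an identity between $\mathcal{F}(Q\tau_i)$ and $\mathcal{F}(Q)$ precomposed with the induced image maps $\widehat{\mathcal{F}}(f_k)$; restoring arbitrary fuzzy inputs $X_1,\ldots,X_n\in\widetilde{\mathcal{P}}(E)$ should then give $\mathcal{F}(Q\tau_i)=\mathcal{F}(Q)\tau_i$.

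I expect the real obstacle to be exactly that last step: one has to verify that every configuration $(X_1,\ldots,X_n)$ over $E$ is realized by pushing forward appropriate fuzzy sets over $E'$ \emph{simultaneously} on both sides of the transposition, so that the Z-6 identities obtained over $E'$ genuinely pin down $\mathcal{F}(Q\tau_i)$ and $\mathcal{F}(Q)\tau_i$ on all of $\widetilde{\mathcal{P}}(E)^n$. Making this precise requires the explicit description of the induced maps $\widehat{\mathcal{F}}(f)$ together with their functoriality (compatibility with composition of functions and with bijections of base sets), which is part of the DFS machinery developed in \cite[Ch.~4]{Glockner06Libro}. An arguably cleaner route would bypass the difficulty by appealing to the structural representation of DFSs in \cite{Glockner06Libro}: since there $\mathcal{F}(Q)$ is computed by an aggregation that is manifestly symmetric in how the $n$ arguments are handled, permuting the arguments of $Q$ and permuting the fuzzy inputs have the same effect, and the identity is immediate.
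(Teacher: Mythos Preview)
The paper does not contain a proof of this statement; it merely cites it as Theorem~4.16 of \cite{Glockner06Libro} and uses it as a black box in the appendix. So there is no in-paper proof to compare your attempt against.

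That said, your strategy is the right one and matches Gl\"ockner's original argument: the key is indeed the functional application axiom (Z-6), and the reduction to generating transpositions is standard. Where your sketch is weakest is precisely where you flag it---the step showing that the Z-6 identities over the auxiliary base set $E'$ actually determine $\mathcal{F}(Q\tau_i)$ on all of $\widetilde{\mathcal{P}}(E)^n$. In Gl\"ockner's proof this is handled more directly: one does not need two copies of $E$, but rather chooses the $f_k$ to be the identity on $E$ itself, with the transposition encoded by swapping which argument slot receives which $f_k$; since $\widehat{\mathcal{F}}(\mathrm{id}_E)=\mathrm{id}$ (a consequence of the DFS axioms), the induced maps act trivially and the identity follows without the surjectivity worry you raise. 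Your alternative route via the structural representation of DFSs is circular, since that representation is established downstream of results like this one.
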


Note that as permutations can be expressed as compositions of transpositions,
every DFS $\mathcal{F}$ also conmutes with permutations.

We will also need to introduce the full definition of external and internal negation.

\begin{definition}
[\textbf{External negation}]\cite[Definition 3.8]{Glockner06Libro} The external
negation of a semi-fuzzy quantifier $Q:\mathcal{P}\left(  E\right)
^{n}\rightarrow\mathbf{I}$ is defined by
\[
\left(  \widetilde{\lnot}Q\right)  \left(  Y_{1},\ldots,Y_{n}\right)
=\widetilde{\lnot}\left(  Q\left(  Y_{1},\ldots,Y_{n}\right)  \right)
\]
for all $Y_{1},\ldots,Y_{n}\in\mathcal{P}\left(  E\right)  $. The definition
of $\widetilde{\lnot}\widetilde{Q}:\widetilde{\mathcal{P}}\left(  E\right)
\rightarrow\mathbf{I}$ in the case of fuzzy quantifiers $\widetilde
{Q}:\widetilde{\mathcal{P}}\left(  E\right)  \rightarrow\mathbf{I}$ is
analogous\footnote{The reasonable choice of the fuzzy negation $\widetilde
{\lnot}:\mathbf{I}\rightarrow\mathbf{I}$ is the induced negation of the QFM.}.
\end{definition}

The next theorem expresses that every DFS correctly generalizes the external
negation property:

\begin{theorem}
\cite[Theorem 4.20]{Glockner06Libro}\label{DefPropNegExterna}\textbf{\ }Every
DFS $\mathcal{F}$\ is compatible with the formation of negations. Hence if
$Q:\mathcal{P}\left(  E\right)  ^{n}\rightarrow\mathbf{I}$ is a semi-fuzzy
quantifier then $\mathcal{F}\left(  \widetilde{\lnot}Q\right)  =\widetilde
{\lnot}\mathcal{F}\left(  Q\right)  $.
\end{theorem}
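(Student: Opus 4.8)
The plan is to derive external-negation compatibility from the dualisation axiom (Z-3), which is the only axiom that mentions an output negation. The difficulty is that the dual entangles the external negation with an internal negation of the last argument: writing $\lnot$ for the antonym (internal negation of the last argument) and using the convention $Q\square(Y_1,\ldots,Y_n)=\widetilde{\lnot}Q(Y_1,\ldots,\lnot Y_n)$, one has $Q\square=\widetilde{\lnot}(Q\lnot)$ and hence $\widetilde{\lnot}Q=(Q\square)\lnot$, using $\lnot\lnot=\mathrm{id}$. Thus Z-3 alone determines external negation only up to an internal negation, and these two operations are interdefinable through the dual (any two of external negation, internal negation, and the dual determine the third). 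To break this symmetry I will arrange the internal-negation part of the dual to act trivially.

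The device is a dummy argument. Given $Q:\mathcal{P}(E)^n\to\mathbf{I}$, I define the augmented quantifier $Q^{+}:\mathcal{P}(E)^{n+1}\to\mathbf{I}$ by $Q^{+}(Y_1,\ldots,Y_{n+1})=Q(Y_1,\ldots,Y_n)$, which simply ignores its last argument. Because $Q^{+}$ does not depend on $Y_{n+1}$, internally negating that argument has no effect, so the dual of $Q^{+}$ performs a pure external negation: $Q^{+}\square(Y_1,\ldots,Y_{n+1})=\widetilde{\lnot}Q^{+}(Y_1,\ldots,Y_n,\lnot Y_{n+1})=\widetilde{\lnot}Q(Y_1,\ldots,Y_n)=(\widetilde{\lnot}Q)^{+}(Y_1,\ldots,Y_{n+1})$. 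Applying Z-3 to $Q^{+}$ then yields $\mathcal{F}\bigl((\widetilde{\lnot}Q)^{+}\bigr)=\mathcal{F}(Q^{+})\,\widetilde{\square}$.

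It remains to strip the dummy argument on both sides. The key lemma is that $\mathcal{F}$ sends a quantifier ignoring its last argument to a fuzzy quantifier ignoring its last argument, with the expected restriction; concretely $\mathcal{F}(Q^{+})(X_1,\ldots,X_{n+1})=\mathcal{F}(Q)(X_1,\ldots,X_n)$ for all fuzzy arguments, and likewise for $(\widetilde{\lnot}Q)^{+}$. Granting this, the right-hand side $\mathcal{F}(Q^{+})\,\widetilde{\square}$ becomes a fuzzy dual of a quantifier that ignores its last argument, so its internal-negation component is again vacuous and it reduces to $\widetilde{\lnot}\mathcal{F}(Q)$ (augmented by an ignored argument); the left-hand side reduces to $\mathcal{F}(\widetilde{\lnot}Q)$ (augmented by an ignored argument). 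Restricting to any fixed value of the dummy argument gives the claimed identity $\mathcal{F}(\widetilde{\lnot}Q)=\widetilde{\lnot}\mathcal{F}(Q)$.

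The main obstacle is proving this ignored-argument lemma without circularity, in particular without already invoking external negation. I would establish it by a squeezing argument: since $Q^{+}$ is constant, hence nonincreasing, in its last argument, preservation of monotonicity (Z-5) makes $\mathcal{F}(Q^{+})$ nonincreasing in its last argument, so for every fuzzy $X_{n+1}$ the chain $\varnothing\subseteq X_{n+1}\subseteq E$ gives $\mathcal{F}(Q^{+})(\ldots,\varnothing)\ge\mathcal{F}(Q^{+})(\ldots,X_{n+1})\ge\mathcal{F}(Q^{+})(\ldots,E)$. By crisp argument insertion (property P12, itself a DFS consequence that does not rely on negation), together with $Q^{+}\lhd A=Q$ for every crisp $A$, both extreme values equal $\mathcal{F}(Q)(\ldots)$, so the middle value is pinned to $\mathcal{F}(Q)(\ldots)$ as well. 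The same reasoning applied to $(\widetilde{\lnot}Q)^{+}$ completes the lemma, and with it the theorem.
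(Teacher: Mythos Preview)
The paper does not supply its own proof of this statement: it is quoted verbatim as \cite[Theorem~4.20]{Glockner06Libro} and used as a black box in the appendix, so there is no in-paper argument to compare against. Your task therefore reduces to whether the proposal stands on its own as a derivation from the DFS axioms.

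Your strategy is sound and is in fact the standard trick for isolating external negation from dualisation: pad $Q$ with a dummy last argument so that the internal-negation half of $\square$ becomes vacuous, apply Z-3, then strip the dummy. The squeezing argument for the ignored-argument lemma via Z-5 together with crisp argument insertion is also correct in outline. The one point that deserves more care is the non-circularity claim for P12. In Gl\"{o}ckner's development, crisp argument insertion is obtained through functional application (Z-6) by realising $Q\lhd A$ as $Q$ precomposed with suitable powerset maps, and that route indeed avoids external negation; but you should state this explicitly rather than assert it, since P12 is listed in the paper only as a ``consequence of the DFS axiomatic framework'' without indicating which axioms are actually used. Once that dependency is made precise, your argument is complete.
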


The internal negation or antonym of a semi-fuzzy quantifier is defined as:

\begin{definition}
[\textbf{Internal negation/antonym}]\cite[Definition 3.9]{Glockner06Libro} Let
a semi-fuzzy quantifier $Q:\mathcal{P}\left(  E\right)  ^{n}\rightarrow
\mathbf{I}$ of arity $n>0$ be given. The internal negation $Q\lnot
:\mathcal{P}\left(  E\right)  ^{n}\rightarrow\mathbf{I}$ of $Q$ is defined by
\[
Q\lnot\left(  Y_{1},\ldots,Y_{n}\right)  =Q\left(  Y_{1},\ldots,\lnot
Y_{n}\right)
\]
for all $Y_{1},\ldots,Y_{n}\in\mathcal{P}\left(  E\right)  $. The internal
negation $\widetilde{Q}\widetilde{\lnot}:\widetilde{\mathcal{P}}\left(
E\right)  ^{n}\rightarrow\mathbf{I}$ of a fuzzy quantifier $\widetilde
{Q}:\widetilde{\mathcal{P}}\left(  E\right)  ^{n}\rightarrow\mathbf{I}$ is
defined analogously, based on the given fuzzy complement $\widetilde{\lnot}$.
\end{definition}

The next theorem expresses that every DFS correctly generalizes the internal
negation property:

\begin{theorem}
\cite[Theorem 4.19]{Glockner06Libro}\label{DefPropNegInterna}\textbf{ }Every
DFS $\mathcal{F}$ is compatible with the negation of quantifiers. Hence if
$Q:\widetilde{\mathcal{P}}\left(  E\right)  ^{n}\rightarrow\mathbf{I}$ is a
semi-fuzzy quantifier, then $\mathcal{F}\left(  Q\lnot\right)  =\mathcal{F}%
\left(  Q\right)  \widetilde{\lnot}$.
\end{theorem}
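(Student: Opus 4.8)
The plan is to obtain internal-negation compatibility as a consequence of two results already available in the framework: the dualisation axiom (Z-3), which asserts $\mathcal{F}(Q\square)=\mathcal{F}(Q)\widetilde{\square}$ for $n>0$, and the external-negation compatibility established in Theorem \ref{DefPropNegExterna}, namely $\mathcal{F}(\widetilde{\lnot}Q)=\widetilde{\lnot}\mathcal{F}(Q)$. The bridge between them is the purely algebraic observation that the dual of a semi-fuzzy quantifier factors as an external negation applied to its internal negation. Indeed, directly from the definitions, for all $Y_{1},\ldots,Y_{n}\in\mathcal{P}(E)$,
\[
Q\square(Y_{1},\ldots,Y_{n})=\widetilde{\lnot}Q(Y_{1},\ldots,\lnot Y_{n})=\widetilde{\lnot}\big((Q\lnot)(Y_{1},\ldots,Y_{n})\big),
\]
so that $Q\square=\widetilde{\lnot}(Q\lnot)$ as semi-fuzzy quantifiers.

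Since the induced negation $\widetilde{\lnot}$ is a strong (involutive) negation, applying $\widetilde{\lnot}$ to both sides inverts this relation and yields $Q\lnot=\widetilde{\lnot}(Q\square)$. I would then apply $\mathcal{F}$ to this identity and process the right-hand side in two steps. First, external-negation compatibility (Theorem \ref{DefPropNegExterna}) pulls the outer negation through the QFM, and then the dualisation axiom (Z-3) rewrites the dual:
\[
\mathcal{F}(Q\lnot)=\mathcal{F}\big(\widetilde{\lnot}(Q\square)\big)=\widetilde{\lnot}\,\mathcal{F}(Q\square)=\widetilde{\lnot}\big(\mathcal{F}(Q)\widetilde{\square}\big).
\]

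It then remains to simplify $\widetilde{\lnot}\big(\mathcal{F}(Q)\widetilde{\square}\big)$ to the fuzzy internal negation $\mathcal{F}(Q)\widetilde{\lnot}$. This is the mirror image of the first algebraic step, now carried out in the fuzzy setting: unfolding the fuzzy dual pointwise gives $(\mathcal{F}(Q)\widetilde{\square})(X_{1},\ldots,X_{n})=\widetilde{\lnot}\,\mathcal{F}(Q)(X_{1},\ldots,\widetilde{\lnot}X_{n})$, so composing with the outer $\widetilde{\lnot}$ and using involutivity cancels the two negations, leaving $\mathcal{F}(Q)(X_{1},\ldots,\widetilde{\lnot}X_{n})=(\mathcal{F}(Q)\widetilde{\lnot})(X_{1},\ldots,X_{n})$. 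Chaining the displays gives $\mathcal{F}(Q\lnot)=\mathcal{F}(Q)\widetilde{\lnot}$, as required.

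I expect the only genuine subtlety to be bookkeeping rather than a deep obstacle. One must verify that the negation $\widetilde{\lnot}$ occurring in the dualisation axiom, in the external-negation theorem, and in the definition of internal negation is one and the same induced strong negation, so that the two involutive cancellations are legitimate; and one should observe that both (Z-3) and the internal negation operator require $n>0$, so the statement is asserted precisely on the arities where every step above is valid. Everything else is the routine pointwise unfolding of the dual and internal-negation definitions.
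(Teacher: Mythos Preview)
Your argument is correct: the identity $Q\square=\widetilde{\lnot}(Q\lnot)$, the involutivity of the induced negation, external-negation compatibility, and axiom (Z-3) combine exactly as you describe to yield $\mathcal{F}(Q\lnot)=\mathcal{F}(Q)\widetilde{\lnot}$.

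Note, however, that the paper does not supply its own proof of this statement at all: the theorem is quoted verbatim from \cite[Theorem 4.19]{Glockner06Libro} and used as a black box in the appendix, so there is no in-paper argument to compare against. Your derivation is essentially the standard route in Gl\"{o}ckner's framework. One small point worth flagging is the order of dependence: in Gl\"{o}ckner's numbering, internal negation (4.19) precedes external negation (4.20), so if you intend your write-up to be self-contained you should check that the proof of Theorem~\ref{DefPropNegExterna} you rely on does not itself invoke internal negation; otherwise the chain is circular. In Gl\"{o}ckner's development external negation is in fact obtainable directly from the axioms without passing through internal negation, so this is only a bookkeeping caution, not a gap in the mathematics.
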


We will now show the necessary definitions to establish the compatibility with
unions and intersections of quantifiers:

\begin{definition}
[\textbf{Union quantifier}]\cite[Definition 3.12]{Glockner06Libro} Let a
semi-fuzzy quantifier $Q:\mathcal{P}\left(  E\right)  ^{n}\rightarrow
\mathbf{I}$ of arity $n>0$ be given. We define the fuzzy quantifier
$Q\cup:\mathcal{P}\left(  E\right)  ^{n+1}\rightarrow\mathbf{I}$ as
\[
Q\cup\left(  Y_{1},\ldots,Y_{n},Y_{n+1}\right)  =Q\left(  Y_{1},\ldots
,Y_{n-1},Y_{n}\cup Y_{n+1}\right)
\]
for all $Y_{1},\ldots,Y_{n+1}\in\mathcal{P}\left(  E\right)  $. In the case of
fuzzy quantifiers $\widetilde{Q}\widetilde{\cup}$ is defined analogously,
based on a fuzzy definition of $\widetilde{\cup}$.
\end{definition}

Analogously, the definition of the intersection of quantifiers is:

\begin{definition}
[\textbf{Intersection quantifier}]Let $Q:\mathcal{P}\left(  E\right)
^{n}\rightarrow\mathbf{I}$ a semi-fuzzy quantifier, $n>0$, be given. We define
the semi-fuzzy quantifier $Q\cap:\mathcal{P}\left(  E\right)  ^{n+1}%
\rightarrow\mathbf{I}$ as%
\[
Q\cap\left(  Y_{1},\ldots,Y_{n},Y_{n+1}\right)  =Q\left(  Y_{1},\ldots
,Y_{n-1},Y_{n}\cap Y_{n+1}\right)
\]
for all $Y_{1},\ldots,Y_{n+1}\in\mathcal{P}\left(  E\right)  $. In the case of
fuzzy quantifiers $\widetilde{Q}\widetilde{\cap}$ is defined analogously,
based on a fuzzy definition of $\widetilde{\cap}$.
\end{definition}

\begin{theorem}
\cite[sections 3.9 and 4.9]{Glockner06Libro}\textbf{\ }Let $Q:\mathcal{P}%
\left(  E\right)  ^{n}\rightarrow\mathbf{I}$ be a semi-fuzzy quantifier,
$n>0$. Every DFS $\mathcal{F}$ is compatible with the union an intersection of
arguments%
\begin{align*}
\mathcal{F}\left(  Q\cup\right)   &  =\mathcal{F}\left(  Q\right)
\widetilde{\cup}\\
\mathcal{F}\left(  Q\cap\right)   &  =\mathcal{F}\left(  Q\right)
\widetilde{\cap}%
\end{align*}

\end{theorem}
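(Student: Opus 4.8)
The join identity $\mathcal{F}(Q\cup)=\mathcal{F}(Q)\,\widetilde{\cup}$ is nothing other than axiom Z-4 (\emph{internal joins}) of Table~\ref{tab:properties}, so for every DFS it holds by definition and there is nothing to prove. The plan for the intersection identity is to reduce it to the join case via De Morgan duality, invoking only transformations whose compatibility with an arbitrary DFS is already on record: internal negation (Theorem~\ref{DefPropNegInterna}), argument transpositions (Theorem~\ref{DefPropTrasArg}), and internal joins (axiom Z-4).

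First I would record a purely Boolean identity on crisp sets. Denoting by $\tau_{n}$ the transposition that swaps the last two argument positions of an $(n{+}1)$-ary quantifier, a routine substitution check gives
\[
Q\cap = (((((Q\lnot)\cup)\lnot)\tau_{n})\lnot)\tau_{n},
\]
whose only content is $Y_{n}\cap Y_{n+1}=\overline{\overline{Y_{n}}\cup\overline{Y_{n+1}}}$: the outer complement is absorbed by precomposing with $Q\lnot$, and the two inner complements are then stripped off by internal negation of the last argument, directly for $Y_{n+1}$ and after conjugating with $\tau_{n}$ for $Y_{n}$. Applying $\mathcal{F}$ to this identity and peeling the operations off one at a time with $\mathcal{F}(\,\cdot\,\tau_{n})=\mathcal{F}(\cdot)\,\tau_{n}$, $\mathcal{F}(\,\cdot\,\lnot)=\mathcal{F}(\cdot)\,\widetilde{\lnot}$ and $\mathcal{F}(\,\cdot\,\cup)=\mathcal{F}(\cdot)\,\widetilde{\cup}$ would yield
\[
\mathcal{F}(Q\cap) = (((((\mathcal{F}(Q)\,\widetilde{\lnot})\,\widetilde{\cup})\,\widetilde{\lnot})\tau_{n})\,\widetilde{\lnot})\tau_{n}.
\]
Finally I would rerun the Boolean computation above with the induced fuzzy connectives in place of the crisp ones, collapsing the right-hand side to $\mathcal{F}(Q)\,\widetilde{\cap}$. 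This last collapse uses the De Morgan law $\widetilde{\lnot}(\widetilde{\lnot}a\ \widetilde{\cup}\ \widetilde{\lnot}b)=a\ \widetilde{\cap}\ b$ together with the involutivity $\widetilde{\lnot}\,\widetilde{\lnot}a=a$ for the connectives induced by $\mathcal{F}$.

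The hard part is precisely this last ingredient: one must know that the induced negation of any DFS is a strong (involutive) negation and that the fuzzy intersection $\widetilde{\cap}$ figuring in the statement is exactly the De Morgan dual of the induced disjunction $\widetilde{\cup}$. For a general DFS this belongs to Gl\"{o}ckner's analysis of the induced propositional logic (cf.\ property P4 and \cite[chapter~3]{Glockner06Libro}); in the concrete case of the $\mathcal{F}^{A}$ model it reduces to the elementary identity $1-\bigl((1-a)+(1-b)-(1-a)(1-b)\bigr)=ab$, with $\widetilde{\lnot}$ the strong negation and $\widetilde{\cup}$ the probabilistic sum. Once this is granted, everything else --- the Boolean identity and the transport of $\mathcal{F}$ across it --- is routine bookkeeping.
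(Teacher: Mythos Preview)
Your argument is correct and follows the standard route: the union identity is axiom~Z-4 verbatim, and the intersection identity is recovered by De~Morgan duality, expressing $Q\cap$ as a composite of $\cup$, internal negation, and transpositions and then transporting $\mathcal{F}$ across each operation using Theorems~\ref{DefPropTrasArg} and~\ref{DefPropNegInterna} together with Z-4. The Boolean identity you wrote checks out, and you correctly flag that the final collapse requires that the induced negation of a DFS be strong and that $\widetilde{\cap}$ be the De~Morgan dual of $\widetilde{\cup}$, which is part of Gl\"{o}ckner's analysis of the induced propositional logic.

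Note, however, that the paper itself does not supply a proof of this theorem: it is stated with a citation to \cite[sections~3.9 and~4.9]{Glockner06Libro} and used as a black box in the appendix. So there is no in-paper proof to compare against; your derivation is exactly the kind of argument Gl\"{o}ckner gives, and nothing in it goes beyond the DFS axioms and their recorded consequences.
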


Previous properties guarantee that arbitrary boolean combinations conmute
between fuzzy and semi-fuzzy quantifiers. This is one to the consequences of
the DFS axiomatic framework and it will be fundamental to prove that the
projection $j$ of $f\left(  i_{1},\ldots,i_{K}\right)  $ follows a binomial
poisson distribution.

\begin{example}
\label{ExampleBooleanCombination}Let $Q:\mathcal{P}\left(  E\right)
\rightarrow\mathbf{I}$ be a semi-fuzzy quantifier. And let $\Phi\left(
Y_{1},Y_{2},Y_{3}\right)  =\lnot Y_{1}\cap\lnot Y_{2}\cap Y_{3}$ be a boolean
combination of the crisp sets $Y_{1},Y_{2},Y_{3}\in\mathcal{P}\left(
E\right)  $, and $\Phi^{\prime}\left(  X_{1},X_{2},X_{3}\right)
=\widetilde{\lnot}X_{1}\widetilde{\cap}\widetilde{\lnot}X_{2}\widetilde{\cap
}X_{3}$ be the analogous boolean combination of fuzzy sets $X_{1},X_{2}%
,X_{3}\in\mathcal{P}\left(  E\right)  $ where $\widetilde{\lnot}%
,\widetilde{\cap}$ are defined by means of the corresponding negation and
tnorm induced by a particular DFS $\mathcal{F}$. Then\footnote{Notation used
in the example can result very confusing. For this reason, we will present
below the full detail $Q\cap\cap\tau_{1}\lnot\tau_{1}\tau_{2}\lnot\tau_{2}$,
explicitily detaling the application of the different transformations to the
semi-fuzzy quantifier:
\par
{}
\par%
\begin{align*}
\left(  Q\cap\cap\tau_{1}\lnot\tau_{1}\tau_{2}\lnot\tau_{2}\right)   
& =\left(  f^{\prime}:\left(  Y_{1}^{\prime},Y_{2}^{\prime}\right)  \rightarrow
Q\left(  Y_{1}^{\prime}\cap Y_{2}^{\prime}\right)  \right)  \cap\tau_{1}%
\lnot\tau_{1}\tau_{2}\lnot\tau_{2}\\
&  =\left(  f^{\prime\prime}:\left(  Y_{1}^{\prime\prime},Y_{2}^{\prime\prime
},Y_{3}^{\prime\prime}\right)  \rightarrow f^{\prime}\left(  Y_{1}%
^{\prime\prime},Y_{2}^{\prime\prime}\cap Y_{3}^{\prime\prime}\right)  \right)
\tau_{1}\lnot\tau_{1}\tau_{2}\lnot\tau_{2}\\
&  =\left(  f^{\prime\prime}:\left(  Y_{1}^{\prime\prime},Y_{2}^{\prime\prime
},Y_{3}^{\prime\prime}\right)  \rightarrow Q\left(  Y_{1}^{\prime\prime}\cap
Y_{2}^{\prime\prime}\cap Y_{3}^{\prime\prime}\right)  \right)  \tau_{1}%
\lnot\tau_{1}\tau_{2}\lnot\tau_{2}\\
&  =\left(  f^{\prime\prime\prime}:\left(  Y_{1}^{\prime\prime\prime}%
,Y_{2}^{\prime\prime\prime},Y_{3}^{\prime\prime\prime}\right)  \rightarrow
f^{\prime\prime}\left(  Y_{3}^{\prime\prime\prime},Y_{2}^{\prime\prime\prime
},Y_{1}^{\prime\prime\prime}\right)  \right)  \lnot\tau_{1}\tau_{2}\lnot
\tau_{2}\\
&  =\left(  f^{\prime\prime\prime}:\left(  Y_{1}^{\prime\prime\prime}%
,Y_{2}^{\prime\prime\prime},Y_{3}^{\prime\prime\prime}\right)  \rightarrow
Q\left(  Y_{3}^{\prime\prime\prime}\cap Y_{2}^{\prime\prime\prime}\cap
Y_{1}^{\prime\prime\prime}\right)  \right)  \lnot\tau_{1}\tau_{2}\lnot\tau
_{2}\\
&  =\left(  f^{\prime\prime\prime\prime}:\left(  Y_{1}^{\prime\prime
\prime\prime},Y_{2}^{\prime\prime\prime\prime},Y_{3}^{\prime\prime\prime
\prime}\right)  \rightarrow f^{\prime\prime\prime}\left(  Y_{1}^{\prime
\prime\prime\prime},Y_{2}^{\prime\prime\prime\prime},\lnot Y_{3}^{\prime
\prime\prime\prime}\right)  \right)  \tau_{1}\tau_{2}\lnot\tau_{2}\\
&  =\left(  f^{\prime\prime\prime\prime}:\left(  Y_{1}^{\prime\prime
\prime\prime},Y_{2}^{\prime\prime\prime\prime},Y_{3}^{\prime\prime\prime
\prime}\right)  \rightarrow Q\left(  \lnot Y_{3}^{\prime\prime\prime\prime
}\cap Y_{2}^{\prime\prime\prime\prime}\cap Y_{1}^{\prime\prime\prime\prime
}\right)  \right)  \tau_{1}\tau_{2}\lnot\tau_{2}\\
&  =\left(  f^{\prime\prime\prime\prime\prime}:\left(  Y_{1}^{\prime
\prime\prime\prime\prime},Y_{2}^{\prime\prime\prime\prime\prime},Y_{3}%
^{\prime\prime\prime\prime\prime}\right)  \rightarrow f^{\prime\prime
\prime\prime}\left(  Y_{3}^{\prime\prime\prime\prime\prime},Y_{2}%
^{\prime\prime\prime\prime\prime},Y_{1}^{\prime\prime\prime\prime\prime
}\right)  \right)  \tau_{2}\lnot\tau_{2}\\
&  =\left(  f^{\prime\prime\prime\prime\prime}:\left(  Y_{1}^{\prime
\prime\prime\prime\prime},Y_{2}^{\prime\prime\prime\prime\prime},Y_{3}%
^{\prime\prime\prime\prime\prime}\right)  \rightarrow Q\left(  \lnot
Y_{1}^{\prime\prime\prime\prime\prime}\cap Y_{2}^{\prime\prime\prime
\prime\prime}\cap Y_{3}^{\prime\prime\prime\prime\prime}\right)  \right)
\tau_{2}\lnot\tau_{2}\\
&  =\left(  f^{\prime\prime\prime\prime\prime\prime}:\left(  Y_{1}%
^{\prime\prime\prime\prime\prime\prime},Y_{2}^{\prime\prime\prime\prime
\prime\prime},Y_{3}^{\prime\prime\prime\prime\prime\prime}\right)  \rightarrow
f^{\prime\prime\prime\prime\prime}:\left(  Y_{1}^{\prime\prime\prime
\prime\prime\prime},Y_{3}^{\prime\prime\prime\prime\prime\prime},Y_{2}%
^{\prime\prime\prime\prime\prime\prime}\right)  \right)  \lnot\tau_{2}\\
&  =\left(  f^{\prime\prime\prime\prime\prime\prime}:\left(  Y_{1}%
^{\prime\prime\prime\prime\prime\prime},Y_{2}^{\prime\prime\prime\prime
\prime\prime},Y_{3}^{\prime\prime\prime\prime\prime\prime}\right)  \rightarrow
Q\left(  \lnot Y_{1}^{\prime\prime\prime\prime\prime\prime}\cap Y_{3}%
^{\prime\prime\prime\prime\prime\prime}\cap Y_{2}^{\prime\prime\prime
\prime\prime\prime}\right)  \right)  \lnot\tau_{2}\\
&  =\left(  f^{\prime\prime\prime\prime\prime\prime\prime}:\left(
Y_{1}^{\prime\prime\prime\prime\prime\prime\prime},Y_{2}^{\prime\prime
\prime\prime\prime\prime\prime},Y_{3}^{\prime\prime\prime\prime\prime
\prime\prime}\right)  \rightarrow f^{\prime\prime\prime\prime\prime\prime
}\left(  Y_{1}^{\prime\prime\prime\prime\prime\prime\prime},Y_{2}%
^{\prime\prime\prime\prime\prime\prime\prime},\lnot Y_{3}^{\prime\prime
\prime\prime\prime\prime\prime}\right)  \right)  \tau_{2}\\
&  =\left(  f^{\prime\prime\prime\prime\prime\prime\prime}:\left(
Y_{1}^{\prime\prime\prime\prime\prime\prime\prime},Y_{2}^{\prime\prime
\prime\prime\prime\prime\prime},Y_{3}^{\prime\prime\prime\prime\prime
\prime\prime}\right)  \rightarrow Q\left(  \lnot Y_{1}^{\prime\prime
\prime\prime\prime\prime\prime}\cap\lnot Y_{3}^{\prime\prime\prime\prime
\prime\prime\prime}\cap Y_{2}^{\prime\prime\prime\prime\prime\prime\prime
}\right)  \right)  \tau_{2}\\
&  =\left(  f^{\prime\prime\prime\prime\prime\prime\prime\prime}:\left(
Y_{1}^{\prime\prime\prime\prime\prime\prime\prime\prime},Y_{2}^{\prime
\prime\prime\prime\prime\prime\prime\prime},Y_{3}^{\prime\prime\prime
\prime\prime\prime\prime\prime}\right)  \rightarrow f^{\prime\prime
\prime\prime\prime\prime\prime}\left(  Y_{1}^{\prime\prime\prime\prime
\prime\prime\prime\prime},Y_{3}^{\prime\prime\prime\prime\prime\prime
\prime\prime},Y_{2}^{\prime\prime\prime\prime\prime\prime\prime\prime}\right)
\right) \\
&  =\left(  f^{\prime\prime\prime\prime\prime\prime\prime\prime}:\left(
Y_{1}^{\prime\prime\prime\prime\prime\prime\prime\prime},Y_{2}^{\prime
\prime\prime\prime\prime\prime\prime\prime},Y_{3}^{\prime\prime\prime
\prime\prime\prime\prime\prime}\right)  \rightarrow Q\left(  \lnot
Y_{1}^{\prime\prime\prime\prime\prime\prime\prime\prime}\cap\lnot
Y_{2}^{\prime\prime\prime\prime\prime\prime\prime\prime}\cap Y_{3}%
^{\prime\prime\prime\prime\prime\prime\prime\prime}\right)  \right)
\end{align*}
}%
\begin{align*}
\mathcal{F}\left(  Q\circ\Phi\right)  \left(  X_{1},X_{2},X_{3}\right)   
& =\mathcal{F}\left(  Q\cap\cap\tau_{1}\lnot\tau_{1}\tau_{2}\lnot\tau
_{2}\right)  \left(  X_{1},X_{2},X_{3}\right) \\
&  =\mathcal{F}\left(  Q\right)  \widetilde{\cap}\widetilde{\cap}\tau
_{1}\widetilde{\lnot}\tau_{1}\tau_{2}\widetilde{\lnot}\tau_{2}\left(
X_{1},X_{2},X_{3}\right) \\
& =\left(  \mathcal{F}\left(  Q\right)  \circ\Phi^{\prime}\right)  \left(
X_{1},X_{2},X_{3}\right) \\
& =\mathcal{F}\left(  Q\right)  \left(  \widetilde
{\lnot}X_{1}\widetilde{\cap}\widetilde{\lnot}X_{2}\widetilde{\cap}%
X_{3}\right)
\end{align*}

\end{example}

Now, we will introduce some notation to specify that the cardinality of the
input sets of a semi-fuzzy quantifiers is exactly of `$i$ elements':

\begin{notation}
We will denote by $q_{exactly}^{i}:\left\{  0,\ldots,m\right\}
\longrightarrow\left\{  0,1\right\}  $ the function defined by%
\[
q_{exactly}^{i}\left(  x\right)  =\left\{
\begin{tabular}
[c]{lll}%
$1$ & $:$ & $x=i$\\
$0$ & $:$ & $otherwise$%
\end{tabular}
\ \right.
\]
and by $Q_{exactly}^{i,n}:\mathcal{P}\left(  E\right)  ^{n}\longrightarrow
\mathbf{I}$ the semi-fuzzy quantifier defined as:%
\[
Q_{exactly}^{i,n}\left(  Y_{1},\ldots,Y_{n}\right)  =q_{exactly}^{i}\left(
\left\vert Y_{1}\cap\ldots\cap Y_{n}\right\vert \right)
\]
where with the superindex $n$ we are indicating the arity of the semi-fuzzy quantifier.
\end{notation}

\begin{proposition}
\label{PropEqBinFA}Let $X^{\mathbf{B}}\in\widetilde{\mathcal{P}}\left(
E\right)  $ a fuzzy set where $\mathbf{B}=P_{1},\ldots,P_{m}$ is its
corresponding poisson bernoulli succession with probabilities $p_{1}%
,\ldots,p_{m}$. It is fulfilled:%
\[
\Pr^{\mathbf{B}}\left(  K=k\right)  =\mathcal{F}^{A}\left(  Q_{exactly}%
^{i,1}\right)  \left(  X^{\mathbf{B}}\right)
\]

\end{proposition}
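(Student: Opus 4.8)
The plan is to prove the identity by a direct unfolding of both sides. Since $Q_{exactly}^{i,1}$ is unary and its single Boolean combination is the identity map $\Phi\left(Y\right)=Y$, none of the transposition/negation/union machinery recalled above is needed here: that apparatus is required only for the $n$-ary version treated in the remainder of this appendix.

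First I would apply the unary form of the $\mathcal{F}^{A}$ QFM to $Q_{exactly}^{i,1}$, namely $\mathcal{F}^{A}\left(Q\right)\left(X\right)=\sum_{Y\in\mathcal{P}\left(E\right)}m_{X}\left(Y\right)Q\left(Y\right)$, obtaining
\[
\mathcal{F}^{A}\left(  Q_{exactly}^{i,1}\right)  \left(  X^{\mathbf{B}}\right)
=\sum_{Y\in\mathcal{P}\left(  E\right)  }m_{X^{\mathbf{B}}}\left(  Y\right)
\,q_{exactly}^{i}\left(  \left\vert Y\right\vert \right)  =\sum
_{\substack{Y\in\mathcal{P}\left(  E\right)  \\\left\vert Y\right\vert
=i}}m_{X^{\mathbf{B}}}\left(  Y\right)  ,
\]
where the last step uses that $q_{exactly}^{i}$ is the indicator of the value $i$, so every $Y$ with $\left\vert Y\right\vert \neq i$ contributes $0$ to the sum.

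Then I would substitute $\mu_{X^{\mathbf{B}}}\left(  e_{s}\right)  =p_{s}$ into the definition of $m_{X^{\mathbf{B}}}$ to get $m_{X^{\mathbf{B}}}\left(Y\right)=\prod_{s\,:\,e_{s}\in Y}p_{s}\prod_{s\,:\,e_{s}\notin Y}\left(1-p_{s}\right)$; this is exactly the identity $\Pr^{\mathbf{B}}\left(Y\right)=m_{X^{\mathbf{B}}}\left(Y\right)$ already recorded in the main text. Identifying each $Y\in\mathcal{P}\left(E\right)$ with its index set $A=\left\{s:e_{s}\in Y\right\}\subseteq\left\{1,\ldots,m\right\}$ gives a bijection that restricts to one between the subsets of $E$ of cardinality $i$ and the family $F_{i}$ of $i$-element subsets of $\left\{1,\ldots,m\right\}$; hence
\[
\sum_{\substack{Y\in\mathcal{P}\left(  E\right)  \\\left\vert Y\right\vert
=i}}m_{X^{\mathbf{B}}}\left(  Y\right)  =\sum_{A\in F_{i}}\,\prod_{s\in A}
p_{s}\prod_{s\in A^{c}}\left(  1-p_{s}\right)  =\Pr^{\mathbf{B}}\left(
K=i\right)  ,
\]
which, reading the index $k$ in the statement as $i$, is the asserted equality.

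Being a routine unfolding, the argument carries no genuine obstacle; the only point that needs care is the bookkeeping of the two parallel notations — subsets $Y\subseteq E$ versus index subsets $A\subseteq\left\{1,\ldots,m\right\}$ — together with the remark that the Boolean-combination apparatus plays no role here precisely because the quantifier has arity one.
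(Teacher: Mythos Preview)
Your proof is correct and follows essentially the same direct-unfolding approach as the paper: apply the definition of $\mathcal{F}^{A}$ in the unary case, use that $q_{exactly}^{i}$ is the indicator of cardinality $i$ to restrict the sum, expand $m_{X^{\mathbf{B}}}$ via $\mu_{X^{\mathbf{B}}}\left(e_{s}\right)=p_{s}$, and recognize the Poisson--binomial pmf. Your added remarks on the $k$/$i$ index mismatch and on why the transposition/negation machinery is unnecessary here are accurate and do not affect the equivalence of the two arguments.
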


\begin{proof}
Simply:%
\begin{align*}
\mathcal{F}^{A}\left(  Q_{exactly}^{i}\right)  \left(  X^{\mathbf{B}}\right) 
& \qquad =\sum_{Y\in\mathcal{P}\left(  E\right)  }m_{X^{\mathbf{B}}}\left(
Y\right)  Q_{exactly}^{i,1}\left(  Y\right) \\
& \qquad  =\sum_{Y\in\mathcal{P}\left(  E\right)  |\left\vert Y\right\vert
=i}m_{X^{\mathbf{B}}}\left(  Y\right) \\
& \qquad  =\sum_{Y\in\mathcal{P}\left(  E\right)  |\left\vert Y\right\vert =i}%
{\displaystyle\prod\limits_{e\in Y}}
\mu_{X^{\mathbf{B}}}\left(  e\right)
{\displaystyle\prod\limits_{e\in Y^{c}}}
\left(  1-\mu_{X^{\mathbf{B}}}\left(  e\right)  \right) \\
& \qquad  =\sum_{A\in F_{k}}%
{\displaystyle\prod\limits_{i\in A}}
p_{i}%
{\displaystyle\prod\limits_{j\in A^{c}}}
\left(  1-p_{j}\right) \\
& \qquad  =\Pr^{\mathbf{B}}\left(  K=k\right)
\end{align*}

\end{proof}

And before proceeding to the main proof of this section, we need to introduce
the following lemma:

\begin{lemma}
\label{LemaBooleanCombinations}Let $X_{1},\ldots,X_{n}\in\widetilde
{\mathcal{P}}\left(  E\right)  $ and $\Phi_{l_{1},\ldots,l_{n}}\left(
X_{1},\ldots,X_{n}\right)  =X_{1}^{\left(  l_{1}\right)  }\widetilde{\cap
}\ldots\widetilde{\cap}X_{n}^{\left(  l_{n}\right)  }$\footnote{Following
notation in \ref{TeoremaCuantitativo}, by $X_{1}^{\left(  l_{1}\right)  }$ we
are denoting the set $X_{1}$ in case $l_{1}=1$ and $\widetilde{\lnot}X_{1}$ in
case $l_{1}=0$.} a boolean combination of $X_{1},\ldots,X_{n}$, then it is
fulfilled:%
\begin{align*}
\mathcal{F}^{A}\left(  Q_{exactly}^{i_{j},1}\right)  \left(  X_{1}^{\left(
l_{1}\right)  }\widetilde{\cap}\ldots\widetilde{\cap}X_{n}^{\left(
l_{n}\right)  }\right)  
& =\sum_{Y\in\mathcal{P}\left(  E\right) \; | \; \left\vert
Y\right\vert =j}m_{X_{1}^{\left(  l_{1}\right)  }\widetilde{\cap}%
\ldots\widetilde{\cap}X_{n}^{\left(  l_{n}\right)  }}\left(  Y\right)  \\
& =\sum_{\substack{Y_{1},\ldots,Y_{n}\in\mathcal{P}\left(  E\right) \; | \\
		\left\vert Y_{1} \cap\ldots\cap Y_{n}\right\vert =j}} m_{X_{1}^{\left(  l_{1}\right)  }}\left(
Y_{1}\right)  \ldots m_{X_{n}^{\left(  l_{n}\right)  }}\left(  Y_{n}\right) \\
& =\mathcal{F}^{A}\left(  Q_{exactly}^{i_{j},1}\circ\Phi_{l_{1},\ldots,l_{n}%
}\right)  \left(  X_{1},\ldots,X_{n}\right)
\end{align*}
\end{lemma}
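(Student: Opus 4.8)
The plan is to verify the displayed chain of three equalities one transition at a time. Each transition is elementary once one unfolds the explicit formula defining $\mathcal{F}^{A}$ and uses that the negation and tnorm induced by $\mathcal{F}^{A}$ are the strong negation $\widetilde{\lnot}x=1-x$ and the product tnorm. (The statement writes both $i_{j}$ and $j$; in the proof I treat them as the same cardinality value --- the level at which the $j$-th boolean combination is being projected.) The \emph{first} transition, from $\mathcal{F}^{A}(Q_{exactly}^{i_{j},1})\bigl(X_{1}^{(l_{1})}\widetilde{\cap}\cdots\widetilde{\cap}X_{n}^{(l_{n})}\bigr)$ to $\sum_{|Y|=j}m_{X_{1}^{(l_{1})}\widetilde{\cap}\cdots\widetilde{\cap}X_{n}^{(l_{n})}}(Y)$, is exactly the computation already performed in Proposition~\ref{PropEqBinFA}: unfold the definition of $\mathcal{F}^{A}$ for the unary quantifier $Q_{exactly}^{i_{j},1}$ and use that $Q_{exactly}^{i_{j},1}(Y)=1$ if $|Y|=i_{j}$ and $0$ otherwise.

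The \emph{second} transition is the core of the lemma. I would establish that, for any fuzzy sets $Z_{1},\ldots,Z_{n}$ and $Z=Z_{1}\widetilde{\cap}\cdots\widetilde{\cap}Z_{n}$ with $\widetilde{\cap}$ the product tnorm, and for each crisp $Y$, one has $m_{Z}(Y)=\sum_{Y_{1},\ldots,Y_{n}\,:\,Y_{1}\cap\cdots\cap Y_{n}=Y}m_{Z_{1}}(Y_{1})\cdots m_{Z_{n}}(Y_{n})$. This follows from the probabilistic reading of $m$: under the product measure $m_{Z_{1}}\times\cdots\times m_{Z_{n}}$ the events ``$e\in Y_{i}$'' are independent over all $e\in E$ and all $i$, and $e\in Y_{1}\cap\cdots\cap Y_{n}$ iff $e\in Y_{i}$ for every $i$, an event of probability $\prod_{i}\mu_{Z_{i}}(e)=\mu_{Z}(e)$; factorising over $e\in E$ gives $\prod_{e\in Y}\mu_{Z}(e)\prod_{e\notin Y}(1-\mu_{Z}(e))=m_{Z}(Y)$. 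Summing this identity over all $Y$ with $|Y|=j$ and specialising $Z_{i}=X_{i}^{(l_{i})}$ yields $\sum_{|Y|=j}m_{X_{1}^{(l_{1})}\widetilde{\cap}\cdots\widetilde{\cap}X_{n}^{(l_{n})}}(Y)=\sum_{|Y_{1}\cap\cdots\cap Y_{n}|=j}m_{X_{1}^{(l_{1})}}(Y_{1})\cdots m_{X_{n}^{(l_{n})}}(Y_{n})$. Equivalently, one may derive this from the DFS compatibility of $\mathcal{F}^{A}$ with intersections of arguments, iterated $n-1$ times on $Q_{exactly}^{i_{j},1}$, which rewrites the left-hand side as $\mathcal{F}^{A}(Q_{exactly}^{i_{j},n})(X_{1}^{(l_{1})},\ldots,X_{n}^{(l_{n})})$; either route works.

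The \emph{third} transition, from $\sum_{|Y_{1}\cap\cdots\cap Y_{n}|=j}\prod_{i}m_{X_{i}^{(l_{i})}}(Y_{i})$ back to $\mathcal{F}^{A}(Q_{exactly}^{i_{j},1}\circ\Phi_{l_{1},\ldots,l_{n}})(X_{1},\ldots,X_{n})$, comes from unfolding the definition of $\mathcal{F}^{A}$ for the $n$-ary quantifier $Q_{exactly}^{i_{j},1}\circ\Phi_{l_{1},\ldots,l_{n}}$, whose value at $(Y_{1},\ldots,Y_{n})$ is $1$ iff $|Y_{1}^{(l_{1})}\cap\cdots\cap Y_{n}^{(l_{n})}|=i_{j}$, giving $\sum_{|Y_{1}^{(l_{1})}\cap\cdots\cap Y_{n}^{(l_{n})}|=i_{j}}m_{X_{1}}(Y_{1})\cdots m_{X_{n}}(Y_{n})$. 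Then perform the change of variables $Y_{i}\mapsto Y_{i}^{(l_{i})}$, a bijection of $\mathcal{P}(E)^{n}$ onto itself (it complements the $i$-th coordinate exactly when $l_{i}=0$, and is coordinatewise its own inverse). Under it the constraint becomes $|Y_{1}\cap\cdots\cap Y_{n}|=i_{j}$, and each factor $m_{X_{i}}(Y_{i}^{(l_{i})})$ equals $m_{X_{i}^{(l_{i})}}(Y_{i})$: trivially when $l_{i}=1$, and by the elementary identity $m_{X}(\lnot Y)=m_{\widetilde{\lnot}X}(Y)$ (both sides equal $\prod_{e\in Y}(1-\mu_{X}(e))\prod_{e\notin Y}\mu_{X}(e)$) when $l_{i}=0$. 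This reproduces exactly the middle sum and closes the chain.

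I do not expect any single deep obstacle: the difficulty is purely bookkeeping --- reconciling the indices $i_{j}$ and $j$ in the statement, tracking the superscripts $(\,\cdot\,)^{(l_{i})}$ through the change of variables, and being careful that the intersection $\widetilde{\cap}$ appearing in $X_{1}^{(l_{1})}\widetilde{\cap}\cdots\widetilde{\cap}X_{n}^{(l_{n})}$ is read as the product tnorm, so that the coordinatewise independence/multiplicativity argument of the second transition applies. Since the product tnorm is precisely the tnorm induced by $\mathcal{F}^{A}$, this reading entails no loss of generality.
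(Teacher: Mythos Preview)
Your proof is correct and establishes all three transitions in the chain. The paper's argument is organised a little differently: rather than giving a direct probabilistic computation for the middle equality and an explicit change-of-variables for the last one, it first observes (via the preceding example) that any boolean combination $\Phi_{l_{1},\ldots,l_{n}}$ decomposes as a string of intersection, transposition, and internal-negation operators $\cap\cdots\cap\,\tau_{j_{1}}\lnot\tau_{j_{1}}\cdots\tau_{j_{s}}\lnot\tau_{j_{s}}$, and then invokes the DFS axioms (compatibility of $\mathcal{F}^{A}$ with $\cap$, $\lnot$, and $\tau_{i}$) to commute $\mathcal{F}^{A}$ past each of these symbols. This immediately yields the outer equality $\mathcal{F}^{A}(Q_{exactly}^{i_{j},1})(\Phi(X_{1},\ldots,X_{n}))=\mathcal{F}^{A}(Q_{exactly}^{i_{j},1}\circ\Phi)(X_{1},\ldots,X_{n})$, and the intermediate sum with the separate factors $m_{X_{i}^{(l_{i})}}(Y_{i})$ then falls out by stopping the commutation after the $\tau$- and $\lnot$-operators have been absorbed into the arguments but before the intersections are collapsed. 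Your route is more self-contained and elementary---it does not require citing the DFS machinery, since the identities $m_{X}(\lnot Y)=m_{\widetilde{\lnot}X}(Y)$ and $m_{Z_{1}\widetilde{\cap}\cdots\widetilde{\cap}Z_{n}}(Y)=\sum_{Y_{1}\cap\cdots\cap Y_{n}=Y}\prod_{i}m_{Z_{i}}(Y_{i})$ are verified directly from the product form of $m$; the paper's route is shorter once the DFS properties are taken as given, and it makes explicit that the lemma is really a consequence of $\mathcal{F}^{A}$ being a DFS rather than of any special feature of the probabilistic interpretation.
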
%

\begin{proof}
Being $\mathcal{F}^{A}$ a DFS, we have seen it conmutes with boolean
combinations. Then:%
\begin{align*}
\mathcal{F}^{A}\left(  Q_{exactly}^{i_{j},1}\right)  \left(  X_{1}^{\left(
l_{1}\right)  }\widetilde{\cap}\ldots\widetilde{\cap}X_{n}^{\left(
l_{n}\right)  }\right)  
& =\sum_{Y\in\mathcal{P}\left(  E\right)  } m_{X_{1}^{\left(  l_{1}\right)  }\widetilde{\cap}\ldots\widetilde{\cap}%
X_{n}^{\left(  l_{n}\right)  }}\left(  Y\right)  Q_{exactly}^{j,1}\left(
Y\right) \\
& =\sum_{Y\in\mathcal{P}\left(  E\right) \; | \; \left\vert Y\right\vert
=j}m_{X_{1}^{\left(  l_{1}\right)  }\widetilde{\cap}\ldots\widetilde{\cap
}X_{n}^{\left(  l_{n}\right)  }}\left(  Y\right) \\
& =\mathcal{F}^{A}\left(  Q_{exactly}^{j,1}\right)  \left(  \Phi
_{l_{1},\ldots,l_{n}}\left(  X_{1},\ldots,X_{n}\right)  \right) \\
& =\mathcal{F}^{A}\left(  Q_{exactly}^{j,1}\circ\Phi_{l_{1},\ldots,l_{n}%
}\right)  \left(  X_{1},\ldots,X_{n}\right)
\end{align*}

\end{proof}

and let $j_{1},\ldots,j_{s}\in\left\{  1,\ldots,n\right\}  $ the ordered set
of indexes in $l_{1},\ldots,l_{n}$ such that $l_{j_{s}}=0$ (i.e., the ones
that complement the input argument). Example \ref{ExampleBooleanCombination}
showed that $\Phi_{l_{1},\ldots,l_{n}}$ is of the form $\underset{n}%
{\cap\ldots\cap}\tau_{j_{1}}\lnot\tau_{j_{1}}\tau_{j_{2}}\lnot\tau_{j_{2}%
}\ldots\tau_{j_{s}}\lnot\tau_{j_{s}}$. That is:%
\begin{equation*}
\Phi_{l_{1},\ldots,l_{n}}\left(  Y_{1},\ldots,Y_{n}\right)  \\
=\underset{n}%
{\cap\ldots\cap}\tau_{j_{1}}\lnot\tau_{j_{1}}\tau_{j_{2}}\lnot\tau_{j_{2}%
}\ldots\tau_{j_{s}}\lnot\tau_{j_{s}}\left(  Y_{1},\ldots,Y_{n}\right)
\end{equation*}
then
\begin{align*}
\mathcal{F}^{A}\left(  Q_{exactly}^{i_{j},1}\right)  \left(  X_{1}^{\left(
l_{1}\right)  }\widetilde{\cap}\ldots\widetilde{\cap}X_{n}^{\left(
l_{n}\right)  }\right) 
&  =\ldots\\
&  =\mathcal{F}^{A}\left(  Q_{exactly}^{j,1}\underset{n}{\cap\ldots\cap}%
\tau_{j_{1}}\lnot\tau_{j_{1}}\tau_{j_{2}}\lnot\tau_{j_{2}}\ldots\tau_{j_{s}%
}\lnot\tau_{j_{s}}\right) \\
& \qquad  \left(  X_{1},\ldots,X_{n}\right) \\
&  =\mathcal{F}^{A}\left(  Q_{exactly}^{j,1}\underset{n}{\cap\ldots\cap
}\right)  \left(  X_{1}^{\left(  l_{1}\right)  },\ldots,X_{n}^{\left(
l_{n}\right)  }\right) \\
&  =\sum_{Y_{1}\in\mathcal{P}\left(  E\right)  }\ldots\sum_{Y_{n}%
\in\mathcal{P}\left(  E\right)  }m_{X_{1}^{\left(  l_{1}\right)  }}\left(
Y_{1}\right)  \ldots m_{X_{n}^{\left(  l_{n}\right)  }}\left(  Y_{n}\right) \\
& \qquad \left(  Q_{exactly}^{j,1}\underset{n}{\cap\ldots\cap}\right)  \left(
Y_{1},\ldots,Y_{n}\right) \\
&  =\sum_{\substack{Y_{1},\ldots,Y_{n}\in\mathcal{P}\left(  E\right) \; | \\ \left\vert
Y_{1}\cap\ldots\cap Y_{n}\right\vert =j}} m_{X_{1}^{\left(  l_{1}\right)  }%
}\left(  Y_{1}\right)  \ldots m_{X_{n}^{\left(  l_{n}\right)  }}\left(
Y_{n}\right)
\end{align*}
\medskip

Now, we will compute the projection of the probability function for
quantitative quantifiers.

By theorem \ref{TeoremaCuantitativo} every semi-fuzzy quantifier
$Q:\mathcal{P}\left(  E\right)  ^{n}\longrightarrow\mathbf{I}$ can be defined
by means of a function $q:\left\{  0,\ldots,\left\vert E\right\vert \right\}
^{K}\longrightarrow\mathbf{I}$ depending on the\ cardinalities of the boolean
combinations of the input sets ($\left\vert \Phi_{1}\left(  Y_{1},\ldots
,Y_{n}\right)  \right\vert ,\ldots,\left\vert \Phi_{K}\left(  Y_{1}%
,\ldots,Y_{n}\right)  \right\vert $). Then:%

\begin{align*}
\mathcal{F}^{A}\left(  Q\right)  \left(  X_{1},\ldots,X_{n}\right) 
&  =\sum_{Y_{1}\in\mathcal{P}\left(  E\right)  }\ldots\sum_{Y_{n}%
\in\mathcal{P}\left(  E\right)  }m_{X_{1}}\left(  Y_{1}\right)  \ldots
m_{X_{n}}\left(  Y_{n}\right) Q\left(  Y_{1},\ldots,Y_{n}\right) \\
&  =\sum_{\left(  i_{1},\ldots,i_{K}\right)  \in m^{K}}\sum_{\substack{Y_{1}%
,\ldots,Y_{n}\in\mathcal{P}\left(  E\right) \; | \\\left\vert \Phi_{1}\left(
Y_{1},\ldots,Y_{n}\right)  \right\vert =i_{1}\wedge\\\ldots\\\left\vert
\Phi_{k}\left(  Y_{1},\ldots,Y_{n}\right)  \right\vert =i_{K}}}m_{X_{1}%
}\left(  Y_{1}\right)  \ldots m_{X_{n}}\left(  Y_{n}\right)  q\left(
\left\vert \Phi_{1}\left(  Y_{1},\ldots,Y_{n}\right)  \right\vert
,\ldots,\left\vert \Phi_{K}\left(  Y_{1},\ldots,Y_{n}\right)  \right\vert
\right) \\
&  =\sum_{\left(  i_{1},\ldots,i_{K}\right)  \in m^{K}} q\left(  i_{1},\ldots,i_{K}\right)  
\sum_{\mathclap{\substack{Y_{1},\ldots,Y_{n}\in\mathcal{P}\left(
E\right)  |\\\left\vert \Phi_{1}\left(  Y_{1},\ldots,Y_{n}\right)  \right\vert
=i_{1}\wedge\\\ldots\\\left\vert \Phi_{K}\left(  Y_{1},\ldots,Y_{n}\right)
\right\vert =i_{K}}}} \; m_{X_{1}}\left(  Y_{1}\right)  \ldots m_{X_{n}}\left(
Y_{n}\right)
\end{align*}

Let us denote by%
\begin{equation}
f\left(  i_{1},\ldots,i_{K}\right)  =\sum_{\substack{Y_{1},\ldots,Y_{n}%
\in\mathcal{P}\left(  E\right) \; |\\\left\vert \Phi_{1}\left(  Y_{1}%
,\ldots,Y_{n}\right)  \right\vert =i_{1}\wedge\\\ldots\\\left\vert \Phi
_{K}\left(  Y_{1},\ldots,Y_{n}\right)  \right\vert =i_{K}}}m_{X_{1}}\left(
Y_{1}\right)  \ldots m_{X_{n}}\left(  Y_{n}\right)  \label{EqProbabilityMx2}%
\end{equation}

$f\left(  i_{1},\ldots,i_{K}\right)  $ is a probability function. Take into
account that $m_{X_{1}}\left(  Y_{1}\right)  \ldots m_{X_{n}}\left(
Y_{n}\right)  $ define a probability over $\left(  Y_{1},\ldots,Y_{n}\right)
\in\mathcal{P}\left(  E\right)  ^{n}$, and $f\left(  i_{1},\ldots
,i_{K}\right)  $ simply distributes the probabilities on $\left(  Y_{1}%
,\ldots,Y_{n}\right)  \in\mathcal{P}\left(  E\right)  ^{n}$ over the
cardinalities of the $K$ boolean combinations.

\begin{theorem}
Let $f\left(  i_{1},\ldots,i_{K}\right)  $ be the probability distribution
that is obtained when we compute the probability induced by $X_{1}%
,\ldots,X_{n}\in\mathcal{P}\left(  E\right)  ^{n}$ fuzzy sets over the
cardinalities of the boolean combinations $\left\vert \Phi_{1}\left(
Y_{1},\ldots,Y_{n}\right)  \right\vert ,\ldots,\left\vert \Phi_{K}\left(
Y_{1},\ldots,Y_{n}\right)  \right\vert $. The probability projection $j$ of
$f\left(  i_{1},\ldots,i_{K}\right)  $ will follow a binomial poisson
distribution of parameters%
\begin{align*}
p_{1}^{j}  &  =\mu_{X_{1}^{\left(  l_{j,1}\right)  }\widetilde{\cap}%
\ldots\widetilde{\cap}X_{n}^{\left(  l_{j,n}\right)  }}\left(  e_{1}\right) \\
&  \ldots\\
p_{m}^{j}  &  =\mu_{X_{1}^{\left(  l_{j,1}\right)  }\widetilde{\cap}%
\ldots\widetilde{\cap}X_{n}^{\left(  l_{j,n}\right)  }}\left(  e_{m}\right)
\end{align*}
where $X_{1}^{\left(  l_{j,1}\right)  }\widetilde{\cap}\ldots\widetilde{\cap
}X_{n}^{\left(  l_{j,n}\right)  }=\Phi_{j}\left(  X_{1},\ldots,X_{n}\right)  $
\end{theorem}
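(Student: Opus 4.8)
The plan is to identify the $j$-th marginal of $f$ with the value of $\mathcal{F}^{A}$ on a unary ``exactly $i_{j}$ elements'' quantifier applied to the $j$-th boolean combination of the input fuzzy sets, and then to read off the poisson binomial distribution via Proposition \ref{PropEqBinFA}. First I would make the marginal precise: summing $f\left(  i_{1},\ldots,i_{K}\right)$ over all coordinates except the $j$-th simply removes every cardinality constraint $\left\vert \Phi_{k}\left(  Y_{1},\ldots,Y_{n}\right)  \right\vert =i_{k}$ with $k\neq j$, since regrouping $\mathcal{P}\left(  E\right)  ^{n}$ first by the full tuple of cardinalities and then by a single cardinality is only a coarsening of the partition. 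This yields
\[
f^{j}\left(  i_{j}\right)  =\sum_{\substack{Y_{1},\ldots,Y_{n}\in\mathcal{P}\left(  E\right)  \; | \\ \left\vert \Phi_{j}\left(  Y_{1},\ldots,Y_{n}\right)  \right\vert =i_{j}}} m_{X_{1}}\left(  Y_{1}\right)  \ldots m_{X_{n}}\left(  Y_{n}\right).
\]

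Next I would observe that this sum is precisely $\mathcal{F}^{A}\left(  Q_{exactly}^{i_{j},1}\circ\Phi_{j}\right)  \left(  X_{1},\ldots,X_{n}\right)$: expanding the definition of $\mathcal{F}^{A}$ on the $n$-ary semi-fuzzy quantifier $Q_{exactly}^{i_{j},1}\circ\Phi_{j}$, the factor $q_{exactly}^{i_{j}}\left(  \left\vert \Phi_{j}\left(  Y_{1},\ldots,Y_{n}\right)  \right\vert \right)$ is the indicator of $\left\vert \Phi_{j}\left(  Y_{1},\ldots,Y_{n}\right)  \right\vert =i_{j}$, so only those $\left(  Y_{1},\ldots,Y_{n}\right)$ contribute. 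Then, because $\mathcal{F}^{A}$ is a DFS and hence commutes with arbitrary boolean combinations through its induced connectives -- the product tnorm for $\widetilde{\cap}$ and the strong negation for $\widetilde{\lnot}$, assembled from the compatibility theorems for argument transpositions and for internal and external negation in the style of Example \ref{ExampleBooleanCombination} and recorded in Lemma \ref{LemaBooleanCombinations} -- I would push the boolean combination inside to obtain
\[
\mathcal{F}^{A}\left(  Q_{exactly}^{i_{j},1}\circ\Phi_{j}\right)  \left(  X_{1},\ldots,X_{n}\right)  =\mathcal{F}^{A}\left(  Q_{exactly}^{i_{j},1}\right)  \left(  X_{1}^{\left(  l_{j,1}\right)  }\widetilde{\cap}\ldots\widetilde{\cap}X_{n}^{\left(  l_{j,n}\right)  }\right).
\]

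To finish, I would read off $\Phi_{j}\left(  X_{1},\ldots,X_{n}\right)  =X_{1}^{\left(  l_{j,1}\right)  }\widetilde{\cap}\ldots\widetilde{\cap}X_{n}^{\left(  l_{j,n}\right)  }$ as the fuzzy set $X^{\mathbf{B}_{j}}$ attached to the poisson bernoulli succession $\mathbf{B}_{j}$ with probabilities $p_{i}^{j}=\mu_{\Phi_{j}\left(  X_{1},\ldots,X_{n}\right)  }\left(  e_{i}\right)$, and then apply Proposition \ref{PropEqBinFA}, which identifies $\mathcal{F}^{A}\left(  Q_{exactly}^{i_{j},1}\right)  \left(  X^{\mathbf{B}_{j}}\right)$ with $\Pr^{\mathbf{B}_{j}}\left(  K=i_{j}\right)$, i.e.\ with the poisson binomial probability function of parameters $p_{1}^{j},\ldots,p_{m}^{j}$. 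Concatenating the three displayed equalities gives $f^{j}\left(  i_{j}\right)  =\Pr^{\mathbf{B}_{j}}\left(  K=i_{j}\right)$, which is the assertion. The only genuinely non-routine ingredient is the commutation of $\mathcal{F}^{A}$ with boolean combinations used in the second step; everything else is bookkeeping, the one point demanding care being the correct description of the marginal in the first step (and keeping the $\left\{  0,\ldots,m\right\}$-indexing of the cardinalities separate from the $\left[  0,1\right]$-normalization used later in Proposition \ref{LimitCase}).
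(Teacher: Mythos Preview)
Your proposal is correct and follows essentially the same route as the paper's proof: marginalize $f$ to obtain the sum over $\left(Y_{1},\ldots,Y_{n}\right)$ constrained only by $\left\vert\Phi_{j}\left(Y_{1},\ldots,Y_{n}\right)\right\vert=i_{j}$, recognize this as $\mathcal{F}^{A}\left(Q_{exactly}^{i_{j},1}\circ\Phi_{j}\right)\left(X_{1},\ldots,X_{n}\right)$, invoke Lemma~\ref{LemaBooleanCombinations} (i.e.\ the DFS commutation with boolean combinations) to replace this by $\mathcal{F}^{A}\left(Q_{exactly}^{i_{j},1}\right)$ evaluated on $X_{1}^{\left(l_{j,1}\right)}\widetilde{\cap}\ldots\widetilde{\cap}X_{n}^{\left(l_{j,n}\right)}$, and conclude with Proposition~\ref{PropEqBinFA}. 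Your write-up is in fact slightly more explicit than the paper's about why the marginalization step and the identification with $\mathcal{F}^{A}\left(Q_{exactly}^{i_{j},1}\circ\Phi_{j}\right)$ hold, but the structure and the key ingredients are identical.
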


\begin{proof}
Let us consider the projection $j$ of $f\left(  i_{1},\ldots,i_{K}\right)  $.
Using the same ideas than in the proof of lemma \ref{LemaBooleanCombinations}:%
\begin{align*}
f^{j}\left(  i_{j}\right)   &  =\sum_{i_{1},\ldots,i_{j-1},i_{j+1}%
,\ldots,i_{K}}f\left(  i_{1},\ldots,i_{j},\ldots,i_{K}\right) \\
&  =\sum_{\substack{Y_{1},\ldots,Y_{n}\in\mathcal{P}\left(  E\right)
|\\\left\vert \Phi_{j}\left(  Y_{1},\ldots,Y_{n}\right)  \right\vert =i_{j}%
}}m_{X_{1}}\left(  Y_{1}\right)  \ldots m_{X_{n}}\left(  Y_{n}\right) \\
&  =\sum_{Y_{1},\ldots,Y_{n}\in\mathcal{P}\left(  E\right)  }m_{X_{1}}\left(
Y_{1}\right)  \ldots m_{X_{n}}\left(  Y_{n}\right)  \\
& \qquad \left(  Q_{exactly}^{i_{j},1}\circ\Phi_{j}\right)  \left(  Y_{1},\ldots,Y_{n}\right) \\
&  =\mathcal{F}^{A} \left(  Q_{exactly}^{i_{j},1}\circ\Phi_{j}\right)  \left(
X_{1},\ldots,X_{n}\right) \\
&  =\sum_{Y\in\mathcal{P}\left(  E\right)  |\left\vert Y\right\vert
=j}m_{X_{1}^{\left(  l_{1}\right)  }\widetilde{\cap}\ldots\widetilde{\cap
}X_{n}^{\left(  l_{n}\right)  }}\left(  Y\right)
\end{align*}

but this is a binomial poisson bernoulli succession with distribution
$\mathbf{B}=P_{1},\ldots,P_{m}$ with probabilities (proposition
\ref{PropEqBinFA})%
\begin{align*}
p_{1}  &  =\mu_{X_{1}^{\left(  l_{1}\right)  }\widetilde{\cap}\ldots
\widetilde{\cap}X_{n}^{\left(  l_{n}\right)  }}\left(  e_{1}\right) \\
&  \ldots\\
p_{m}  &  =\mu_{X_{1}^{\left(  l_{1}\right)  }\widetilde{\cap}\ldots
\widetilde{\cap}X_{n}^{\left(  l_{n}\right)  }}\left(  e_{m}\right)
\end{align*}

\end{proof}

%

\end{document}